\documentclass[10pt]{article}
\usepackage[preprint]{tmlr}

\usepackage[utf8]{inputenc}
\usepackage[T1]{fontenc}
\usepackage[english]{babel}

\usepackage{amsmath}
\usepackage{amssymb}
\usepackage{amsfonts}
\usepackage{amsthm}
\usepackage{latexsym}
\usepackage{bm}
\usepackage{xspace}
\usepackage{graphicx}
\usepackage{bbm}
\usepackage{xcolor}
\usepackage{listings}
\usepackage{caption}
\usepackage{float}
\usepackage{subcaption}
\usepackage[ruled,vlined]{algorithm2e}
\usepackage{enumerate}
\usepackage{wrapfig}
\usepackage{longtable}
\usepackage{hyperref}
\usepackage{url}

\usepackage{dsfont}
\def\bbR{\mathbb{R}}

\def\ba{\mathbf{a}}

\def\bff{\mathbf{f}}
\def\bg{\mathbf{g}}
\def\bh{\mathbf{h}}

\def\bp{\mathbf{p}}

\def\br{\mathbf{r}}
\def\bs{\mathbf{s}}
\def\bt{\mathbf{t}}

\def\bv{\mathbf{v}}

\def\bx{\mathbf{x}}
\def\by{\mathbf{y}}

\def\bA{\mathbf{A}}
\def\bB{\mathbf{B}}
\def\bC{\mathbf{C}}

\def\bE{\mathbf{E}}
\def\bF{\mathbf{F}}
\def\bG{\mathbf{G}}
\def\bH{\mathbf{H}}
\def\bI{\mathbf{I}}
\def\bJ{\mathbf{J}}
\def\bK{\mathbf{K}}

\def\bM{\mathbf{M}}

\def\bO{\mathbf{O}}
\def\bP{\mathbf{P}}
\def\bQ{\mathbf{Q}}
\def\bR{\mathbf{R}}
\def\bS{\mathbf{S}}
\def\bT{\mathbf{T}}
\def\bU{\mathbf{U}}
\def\bV{\mathbf{V}}
\def\bW{\mathbf{W}}
\def\bX{\mathbf{X}}
\def\bY{\mathbf{Y}}
\def\bZ{\mathbf{Z}}

\def\bbR{\mathbb{R}}

\def\bbZ{\mathbb{Z}}

\def\bzero{\mathbf{0}}

\def\caL{\mathcal{L}}

\def\caN{\mathcal{N}}
\def\caO{\mathcal{O}}

\newcommand{\done}{\mathds{1}}

\newcommand{\bxi}{\bm{\xi}}

\newcommand{\bGamma}{\bm{\Gamma}}
\newcommand{\bDelta}{\bm{\Delta}}

\newcommand{\bPi}{\bm{\Pi}}

\newcommand{\simmat}{\bPi_{\done}}

\newcommand{\frob}[1]{\left\|#1\right\|_F}
\newcommand{\twonorm}[1]{\left\|#1\right\|_2}
\newcommand{\frobsq}[1]{\left\|#1\right\|_F^2}

\newcommand{\twonormsq}[1]{\left\|#1\right\|_2^2}
\newcommand{\tsim}[1]{\ensuremath{\mathrm{t}_{\mathrm{sim}}(#1)}}
\newcommand{\tdiv}[1]{\ensuremath{\mathrm{t}_{\mathrm{div}}(#1)}}
\newcommand{\inner}[2]{\left\langle #1, #2\right\rangle}

\newcommand{\pardir}[2]{\frac{\partial #1}{\partial #2}}

\def\bDelta{\bm\Delta}

\def\bxi{\bm\xi}

\def\tr{\mathrm{tr}}

\def\dff{d_{\text{ff}}}

\def\ba{\mathbf{a}}

\def\bff{\mathbf{f}}
\def\bg{\mathbf{g}}
\def\bh{\mathbf{h}}

\def\bp{\mathbf{p}}

\def\br{\mathbf{r}}
\def\bs{\mathbf{s}}
\def\bt{\mathbf{t}}

\def\bv{\mathbf{v}}

\def\bx{\mathbf{x}}
\def\by{\mathbf{y}}

\def\bA{\mathbf{A}}
\def\bB{\mathbf{B}}
\def\bC{\mathbf{C}}

\def\bE{\mathbf{E}}
\def\bF{\mathbf{F}}
\def\bG{\mathbf{G}}
\def\bH{\mathbf{H}}
\def\bI{\mathbf{I}}
\def\bJ{\mathbf{J}}
\def\bK{\mathbf{K}}

\def\bM{\mathbf{M}}

\def\bO{\mathbf{O}}
\def\bP{\mathbf{P}}
\def\bQ{\mathbf{Q}}
\def\bR{\mathbf{R}}
\def\bS{\mathbf{S}}
\def\bT{\mathbf{T}}
\def\bU{\mathbf{U}}
\def\bV{\mathbf{V}}
\def\bW{\mathbf{W}}
\def\bX{\mathbf{X}}
\def\bY{\mathbf{Y}}
\def\bZ{\mathbf{Z}}

\def\bzero{\mathbf{0}}

\def\caL{\mathcal{L}}

\def\caN{\mathcal{N}}
\def\caO{\mathcal{O}}

\providecommand{\ldepth}{\tlower{L}{depth}}

\def\smskip{\smallskip}

\def\texitem#1{\par\smskip\noindent\hangindent 25pt
               \hbox to 25pt {\hss #1 ~}\ignorespaces}

\def\abs#1{|#1|}

\providecommand{\BEAS}{\begin{eqnarray*}}
\providecommand{\EEAS}{\end{eqnarray*}}
\providecommand{\BEA}{\begin{eqnarray}}
\providecommand{\EEA}{\end{eqnarray}}
\providecommand{\BEQ}{\begin{eqnarray}}
\providecommand{\EEQ}{\end{eqnarray}}
\providecommand{\BIT}{\begin{itemize}}
\providecommand{\EIT}{\end{itemize}}
\providecommand{\BNUM}{\begin{enumerate}}
\providecommand{\ENUM}{\end{enumerate}}

\providecommand{\BA}{\begin{array}}
\providecommand{\EA}{\end{array}}

\providecommand{\expect}[2]{\mathbb{E}_{#1}\left[#2\right]}

\providecommand\tlower[2]{#1_{\mathrm{#2}}}

\def\red#1{\textcolor{black}{#1}}
\def\blue#1{\textcolor{black}{#1}}

\let\oldref\ref
\renewcommand{\ref}[1]{(\oldref{#1})}

\providecommand{\attn}{\ensuremath{\mathrm{Attn}}}
\providecommand{\rms}{\ensuremath{\mathrm{RMS}}}
\providecommand{\ffn}{\ensuremath{\mathrm{FFN}}}

\providecommand{\proj}{\ensuremath{\mathrm{Proj}}}
\providecommand{\softmax}{\ensuremath{\mathrm{Softmax}}}
\providecommand{\softmaxrow}{\ensuremath{\mathrm{Softmax}_{\mathrm{row}}}}

\providecommand{\silu}{\ensuremath{\mathrm{SiLU}}}
\providecommand{\sigmoid}{\ensuremath{\mathrm{Sigmoid}}}

\providecommand{\Diag}{\ensuremath{\mathrm{Diag}}}

\providecommand{\wlm}{\tlower{\bW}{lm}}

\newtheorem{theorem}{Theorem}[section]
\newtheorem{corollary}[theorem]{Corollary}
\newtheorem{definition}[theorem]{Definition}
\newtheorem{lemma}[theorem]{Lemma}
\newtheorem{assumption}[theorem]{Assumption}

\newtheorem{proposition}[theorem]{Proposition}

\title{Why Post-Norm Transformers Collapse: Attention Amplification and Gradient Repair Failure}
\author{\name Xingjian Wang \email xingjianwang@link.cuhk.edu.cn \\
\addr The Chinese University of Hong Kong, Shenzhen
\AND
\name Qingyu Han \email qingyuhan@link.cuhk.edu.cn \\
\addr The Chinese University of Hong Kong, Shenzhen
\AND
\name Xiaodong Luo$^*$ \email xiaodongluo@cuhk.edu.cn\\ 
\addr The Chinese University of Hong Kong, Shenzhen \\
\addr Shenzhen Research Institute of Big Data
\AND
\name Yin Zhang$^*$ \email yinzhang@cuhk.edu.cn \\
\addr The Chinese University of Hong Kong, Shenzhen
}

\def\month{08}
\def\year{2026}

\makeatletter
\newcommand*\bigcdot{\mathpalette\bigcdot@{.5}}
\newcommand*\bigcdot@[2]{\mathbin{\vcenter{\hbox{\scalebox{#2}{$\m@th#1\bullet$}}}}}
\makeatother

\begin{document}

\newcommand{\SubItem}[1]{
    {\setlength\itemindent{15pt} \item[-] #1}
}

\maketitle

\begin{abstract}
Deep decoder-only Transformers often replace the original Post-Norm architecture with Pre-Norm variants because Post-Norm training is highly sensitive to warmup and learning rate under conventional initialization schemes. Although prior work has identified rank collapse and gradient vanishing as related symptoms, it remains poorly understood how causal attention creates high-similarity representations and why training dynamics fail to repair them. We give a two-stage analysis of Post-Norm rank collapse using token similarity as a scalar state variable. First, at initialization, causal attention acts approximately as a prefix-averaging operator that increases token similarity across depth, while the SwiGLU branch contributes only a smaller damping effect. Second, once training enters a high-similarity regime, growth of pre-normalization residual norms makes the RMSNorm backward factor contractive; under mild conditions, gradients to earlier layers decay geometrically. \red{As a complementary result,  we characterize the properties of a collapsed network: its best predictor is frequency distribution with relatively high loss floor, and gradients in collapsed layers vanish at frequency distribution.} Experiments on 48-layer decoder-only Transformers trained on C4 dataset match the predicted initialization-time similarity growth and collapse-time gradient contraction, \red{and show that collapsed runs stay near the predicted frequency loss.} \red{Together, these results distinguish the forward similarity amplification and backward repair incapacity in Post-Norm collapse,  while also characterizing the behavior of collapsed networks.}

\end{abstract}

\begingroup
\renewcommand{\thefootnote}{\fnsymbol{footnote}}

\footnotetext[1]{Corresponding author.}
\endgroup
\section{Introduction}
The Transformer architecture \citep{vaswani2017attention}, combining self-attention, residual connections \citep{he2016deep}, and layer normalization \citep{ba2016layer}, has become the foundation of modern large-scale models across natural language processing and vision \citep{brown2020language,devlin2019bert,dosovitskiy2020image}. Subsequent modifications such as SwiGLU FFN layers \citep{shazeer2020glu} and RMSNorm \citep{zhang2019root} have further improved its performance. One of the key strengths of Transformers is scalability: deeper and wider models can achieve better performance \citep{kaplan2020scaling,hoffmann2022training}. However, increasing depth in the original Transformer design encounters obstacles. Training typically requires conservative warmup and small initial step sizes, otherwise optimization can converge to suboptimal states. The problem lies in the placement of normalization: the original architecture uses Post-Norm, where normalization is applied after the residual addition \citep{xiong2020layer}. By replacing Post-Norm with Pre-Norm, where normalization is applied before the residual addition, the sensitivity to warmup and step size is mitigated \citep{xiong2020layer,liu2020understanding}.

The key failure mode of Post-Norm is \textbf{rank collapse}: token representations in a sequence become nearly identical, reducing effective dimensionality and harming optimization \citep{noci2022signal,saada2024mind}. Many authors have studied neighboring aspects of this phenomenon. \citet{noci2022signal} analyze initialization-time collapse from a signal-propagation perspective and show that query/key gradients can vanish as collapse develops. \citet{saada2024mind} refine this picture through a spectral characterization of collapse in attention layers. \citet{yu2026classic} give a quantitative initialization analysis for encoder transformers. \citet{chen2026from} study collapse as a training-dynamics phenomenon in a simplified single-layer setting.

\red{Prior work has identified phenomena closely related to Post-Norm architectural failure modes, including gradient vanishing and rank collapse}\citep{noci2022signal,yu2026classic,xiong2020layer,chen2026from}. \red{However, beyond initialization, it remains unclear how these phenomena interact in causal decoder-only Transformers during training. In particular, we lack a clear explanation of how rank collapse depends on normalization placement, and why optimization fails to revert rank collapse once it emerges}.
\red{This leaves a gap between practical fixes and theoretical understanding: we know practical ways to mitigate Post-Norm failures, but not why causal decoder representations enter a collapsed regime or why training struggles to escape it.}

\red{In this paper, our central message is that Post-Norm rank collapse is not caused by a single instability, but by the combination of two mechanisms: attention amplification, which drives token similarity upward at initialization, and RMSNorm-induced gradient shrinkage, which reduces the gradient that reaches earlier layers during collapse.}
To track this process throughout optimization, we use token similarity as a scalar state variable \citep{yu2026classic}.
\red{Our analysis explains how token similarity increases in Post-Norm at initialization and why high token similarity states are hard to repair during training. As a complementary result, we characterize the properties of a collapsed Post-Norm network, showing that it can only obtain suboptimal solutions, with parameter gradients vanish in collapsed layers.} To sum up, our contributions are as follows:

\begin{enumerate}
    \item \textbf{Two-Stage Account.}
    \red{We show that Post-Norm rank collapse can be understood as a two-stage process. First, under equal-correlation and prefix-averaging approximations, we show that causal attention amplifies token similarity at initialization, with the amplification controlled by the amount of attention added to each layer. Second, when residual norms grow during training and sublayer's gradient contribution remains bounded, the RMSNorm backward factor can fall below one, causing gradients to earlier layers to decay exponentially with depth.} Together, these mechanisms give one account of why Post-Norm networks are more prone to collapse than Pre-Norm.

    \item \textbf{Collapsed Network Characterization.}
    We characterize two properties of a collapsed Post-Norm network. First, \red{under rank collapse,} the optimal probability distribution is the frequency distribution, where each token's probability is proportional to its occurrence in the training data. This yields a relatively high loss floor. Second, parameter gradients vanish in the collapsed layers. \red{Combined with gradient vanishing in earlier layers, this makes the collapsed state near a stationary point.}

    \item \textbf{Experimental Verification.}
    Controlled initialization measurements match the predicted token-similarity growth curves across depth, supporting the attention-amplification mechanism. By removing the prefix-averaging component, we suppress the similarity growth at initialization, showing that prefix-averaging component indeed causes similarity growth. \red{Training diagnostics on collapsing 48-layer Post-Norm runs then show residual-norm growth, contraction of the RMSNorm factor, and gradient vanishing in earlier layers, consistent with much weaker gradients near collapse. Finally, collapsed training runs stay near the predicted frequency loss, matching the collapsed-network characterization.}
\end{enumerate}

The paper is organized accordingly. Sections \ref{section_forward_amplification} and \ref{section_backward_repair_bottleneck} develop the two stage analysis.  \red{Section \ref{section_frequency_distribution} then gives a complementary characterization of a collapsed network. Section \ref{section_experiments} tests both the mechanistic analysis and the collapsed-network characterization.}

\section{Preliminaries}\label{section_setup}

We use bold uppercase letters for matrices and bold lowercase letters for vectors. The symbol $\bA \odot \bB$ denotes the Hadamard product. Let $n$ be sequence length, $d$ model dimension, $v$ vocabulary size, $l$ layer index, and $\ldepth$ total number of layers. Matrix entry notation such as $\bA_{i, j}$ denotes the element of matrix $\bA$ at row $i$ and column $j$. The vector $\done_n$ denotes the all-one column vector in $\mathbb{R}^{n\times 1}$, and $\done_n^T$ denotes its row form. We define
\[
\simmat = \frac{1}{n}\done_n \done_n^\top \in \bR^{n\times n}.
\]
as the mean projection matrix, which maps each row of a matrix to the row-mean of that matrix. The Frobenius norm is denoted by $\frob{\cdot}$, and the spectral norm is denoted by $\|\cdot\|_2$. We use $\mathcal{H}(\bp) = -\sum_{i=1}^v p_i \log p_i$ to denote the entropy of a probability distribution $\bp$. For a matrix $\bX$, we write $\bX_{i,:}$ to denote the $i$-th row of $\bX$. The notation $\expect{}{\cdot}$ denotes expectation over the random initialization of the model parameters. $\caL$ is the cross-entropy loss. 

\paragraph{Post-Norm Transformer Architecture.}
The complete forward pass in one Post-Norm Transformer block is
\begin{align}
    \label{eq:X_Y_definition}
    \bX_1^{l} = \rms(\bY_0^{l}), \quad
    \bY_1^{l} = \bX_1^{l} + \attn(\bX_1^{l}), \quad
    \bX_2^{l} = \rms(\bY_1^{l}), \quad
    \bY_2^{l} = \bX_2^{l} + \ffn(\bX_2^{l})
\end{align}
where $\rms(\bx) = \bx / \sqrt{\|\bx\|^2/d}$ is RMSNorm applied row-wise, $\attn$ denotes the attention branch, $\ffn$ denotes the feed-forward branch. \red{For simplicity of analysis, the RMSNorm used here is without learnable affine transformation and numerical stablizer $\epsilon$ at denominator.} Superscripts such as $\bX_1^{l}$ indicate layer index and subscripts denote which sublayer it belongs to. Subscript 1 denote it's in attention and 2 denote it's in FFN. For notation clarity, we omit superscript $l$ when we don't need to emphasize layer index. We mainly discussed a simplified single-head setting for attention and SwiGLU activation for the FFN. So the single-head attention and FFN branches are
\begin{align}
    \label{eq:attention_ffn_definition}
    \attn(\bX) = \bP\bX\bW_V\bW_O := \bP\bX\bW, \qquad \ffn(\bX)=\bigl(\silu(\bX\bW_1)\odot \bX\bW_3\bigr)\bW_2, 
\end{align}
\red{$\text{where } \bP = \softmaxrow((\bX\bW_Q)(\bX\bW_K)^\top / \sqrt{d} + \tlower{\bM}{causal})$, $(\tlower{\bM}{causal})_{i, j} = 0$ for $i \geq j$, and $-\infty$ otherwise, and \ \softmaxrow \ denotes the row-wise softmax.} The function $\silu(z) = z/(1+\exp(-z))$ is applied element-wise. \red{For notational simplicity, we replace the value-output product  $\bW_V\bW_O$ by a single independent Gaussian matrix $\bW \in \mathbb{R}^{d\times d}$ with variance $\sigma_{\bW}^2 = d\sigma_{\bW_V}^2\sigma_{\bW_O}^2$. The stated theoretical results directly extends to multi-head cases.} We use standard initialization throughout: $\bW_V, \bW_O, \bW_Q, \bW_K \in \mathbb{R}^{d\times d}$, $\bW_1, \bW_3 \in \mathbb{R}^{d \times \dff}$ have i.i.d.\ Gaussian entries with variance $1/(3d)$, and $\bW_2 \in \mathbb{R}^{\dff \times d}$ has i.i.d.\ Gaussian entries with variance $1/(3\dff)$, where $\dff$ denotes the feed-forward hidden dimension. The theory is formulated for a single-head Post-Norm block with folded value/output projection, and the experiments track the same mechanism in the full multi-head architecture used for training.

\paragraph{Token Similarity.} \label{section_token_similarity}

To track how close representations are to rank collapse, we use a single scalar observable, token similarity \citep{yu2026classic}, as measurement.

\begin{definition}
    \label{definition_ts}
    For a matrix $\bX \in \mathbb{R}^{n\times d}$, define the \textbf{mean-projection matrix} $\simmat = \frac{1}{n}\done_n\done_n^\top$, which maps each row to the row-mean of $\bX$. The \textbf{token similarity} is
    \begin{align*}
            \tsim{\bX} = \frac{\frob{\simmat\bX}^2}{\frob{\bX}^2}
    \end{align*}
\end{definition}
Here $\simmat\bX$ is the matrix whose every row equals the mean row of $\bX$. Left multiplying $\simmat$ extracts the ``shared component'' across all tokens. A simple observation is that $\tsim{\bX} = 1$ if and only if $\bX = \done \bx^T$ for some $\bx$, i.e.\ all rows are identical. \red{Therefore, we use token similarity to measure how close the hidden representation is to rank collapse, with token similarity being 1 indicate exact collapse.}

\section{Theoretical Results}


\blue{This section develops the theory in three steps, following the timeline of a collapsing run. Section~\ref{section_forward_amplification} studies the forward pass at initialization: causal attention raises token similarity at every layer, so a Post-Norm network starts training already close to collapse. Section~\ref{section_backward_repair_bottleneck} analyzes the backward pass during training: once similarity is high, the growing residual norm shrinks the gradient through RMSNorm, and the gradient that could reduce similarity decays exponentially with depth before it reaches earlier layers. Section~\ref{section_collapsed_properties} describes the properties of a collapsed network: the frequency distribution achieves the lowest loss in a collapsed network, and its parameter gradients vanish in the collapsed layers.}

\subsection{Forward Similarity Amplification}
\label{section_forward_amplification}
\blue{This subsection studies the forward pass at initialization. We show that a single attention sublayer already increases it in expectation, and we compute the expected one-step change in closed form, first for attention and then for the SwiGLU FFN.} The similarity increase is dominated by attention module, with SwiGLU FFN provides mild damping effect. Under equal-correlation assumption, we characterize the change of similarity quantitatively. By introducing the amount of attention as a scalar quantity, we give a unified characterization of similarity growth across Pre-Norm and Post-Norm. We begin by stating the equal-correlation assumption.

\blue{To get a closed-form one-step formula, we make the following approximations. First, we replace the random attention matrix $\bP$ by the prefix-averaging matrix $\bC_n$, with $(\bC_n)_{i,j} = 1/i$ for $i \ge j$ and $0$ otherwise. This is the attention matrix with uniform weights over each causal prefix, and it is close to the mean attention matrix at random initialization (see Figure~\ref{fig:init_prefix_averaging_rel_error_48l_2048} in the appendix). The surrogate forward pass is}
\begin{align}
    \label{equation_surrogate_forward}
    \blue{\bY_1 = \bX_1 + \bC_n\bX_1\bW}
\end{align}
\blue{Second, since the expectation of ratio doesn't have a closed form formula, we approximate the expected similarity by a ratio of expectations, $\expect{}{\tsim{\bY_1}} \approx \expect{}{\frobsq{\simmat\bY_1}}/\expect{}{\frobsq{\bY_1}}$.}

\blue{Third, to keep the computation tractable, we replace the full Gram matrix $\bX_1\bX_1^\top$ by a surrogate $\bR_{\mathrm{eq}}(t_1)$ determined by the single scalar $t_1 = \tsim{\bX_1}$. To see its form, note that $t_1$ depends on the Gram matrix only through the average of its entries, since $\frobsq{\bX_1} = \tr(\bX_1\bX_1^\top)$ and $\frobsq{\simmat\bX_1} = \tr(\simmat\bX_1\bX_1^\top)$ sum over all token positions; permuting tokens or rearranging pairwise correlations at fixed average leaves $t_1$ unchanged. A surrogate determined by $t_1$ alone should therefore assign one common correlation to every token pair, and the symmetric matrices of this form are exactly the linear combinations of $\bI$ and $\done_n\done_n^\top = n\simmat$. The two coefficients are fixed by two constraints: the diagonal entries equal $d$, since every row of $\bX_1$ is RMS-normalized, and $\tr(\simmat\bR_{\mathrm{eq}}(t_1))/\tr(\bR_{\mathrm{eq}}(t_1)) = t_1$. Solving them gives the surrogate below, which we call the \textbf{equal-correlation closure}.}

\begin{assumption}
\label{assumption_equal_correlation_closure}
Let $t_1 := \tsim{\bX_1}$ be the token similarity of the normalized attention input. We approximate the token Gram matrix $\bX_1\bX_1^\top$ by the one-parameter surrogate
\begin{equation}
    \label{eq:pairwise_correlation}
    \bR_{\mathrm{eq}}(t_1):= d\times \left[(1-\rho(t_1))\bI+n\rho(t_1)\,\simmat\right] \qquad \text{where} \ \rho(t_1)=(nt_1-1) / (n-1),
\end{equation}
\red{where $\tr(\simmat\bR_{\mathrm{eq}}(t_1)) / \tr(\bR_{\mathrm{eq}}(t_1)) = t_1$, and $\rho(t_1)$ captures the correlation between different tokens.}
\end{assumption}

\blue{With these approximations, we can compute the expected one-step token-similarity change after one attention sublayer in closed form. The result is summarized in Theorem~\ref{theorem_forward_amplification} below, and the proof is given in Appendix~\ref{app:forward_amplification_details}.}

\begin{theorem}
\label{theorem_forward_amplification}
Under Assumption~\ref{assumption_equal_correlation_closure} and the two approximations above, the one-step token-similarity change after one attention sublayer is
\begin{equation}
\label{eq:attention_scalar_summary}
\tlower{\Delta}{attn}(s,t_1)
:=
\expect{\bW}{\tsim{\bY_1}} - \tsim{\bX_1}
=
\frac{s f_1(t_1)}
{1 + s f_2(t_1)}
\end{equation}
where the amount of attention $s = nd^2\sigma_{\bW}^2 /\|\bX_1\|_F^2$, and 
\[
f_1(t_1):=\frac{1}{n(n-1)}(1-t_1)(n-H_n)(nt_1+1),
\qquad
f_2(t_1):=\frac{1}{n-1}\!\left[(n-H_n)\,t_1+H_n-1\right],
\]
and $H_n:=\sum_{k=1}^n 1/k$. 
\end{theorem}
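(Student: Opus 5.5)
The plan is to compute the numerator and denominator of the ratio-of-expectations surrogate exactly under the prefix-averaging forward pass \eqref{equation_surrogate_forward}. Every quantity then reduces to traces of $\bC_n$ against $\bI$ and $\simmat$, and a final algebraic simplification gives the stated rational form.

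\textbf{Step 1 (Gaussian averaging).} Write $\bY_1=\bX_1+\bC_n\bX_1\bW$ and expand both squared norms. The cross terms are linear in $\bW$, so they have zero mean. For the quadratic term I use the identity $\expect{\bW}{\frobsq{\bM\bW}}=d\sigma_{\bW}^2\frobsq{\bM}$, valid for any fixed $\bM\in\bbR^{n\times d}$ and $\bW\in\bbR^{d\times d}$ with i.i.d.\ entries of variance $\sigma_{\bW}^2$. With $\bG=\bX_1\bX_1^\top$ this gives
\begin{align*}
\expect{\bW}{\frobsq{\bY_1}}&=\tr(\bG)+d\sigma_{\bW}^2\,\tr(\bC_n\bG\bC_n^\top),\\
\expect{\bW}{\frobsq{\simmat\bY_1}}&=\tr(\simmat\bG)+d\sigma_{\bW}^2\,\tr(\simmat\bC_n\bG\bC_n^\top\simmat).
\end{align*}
Since the rows of $\bX_1$ are RMS-normalized, $\frobsq{\bX_1}=nd$. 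Hence $s=nd^2\sigma_{\bW}^2/\frobsq{\bX_1}=d\sigma_{\bW}^2$, and $s$ is exactly the coefficient that appears in front of the attention terms.

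\textbf{Step 2 (trace identities under the closure).} I substitute $\bG\approx\bR_{\mathrm{eq}}(t_1)=d[(1-\rho)\bI+\rho\,\done_n\done_n^\top]$ from Assumption~\ref{assumption_equal_correlation_closure}. The needed facts about $\bC_n$ are the following.
\begin{enumerate}
\item $\bC_n\done_n=\done_n$, because each row averages a prefix.
\item $\tr(\bC_n\bC_n^\top)=\sum_i i\cdot i^{-2}=H_n$.
\item The column sums are $(\bC_n^\top\done_n)_j=H_n-H_{j-1}$.
\end{enumerate}
From the first fact, $\tr(\bC_n\done\done^\top\bC_n^\top)=n$ and $\tr(\simmat\bC_n\done\done^\top\bC_n^\top\simmat)=n$. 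From the third fact, $\tr(\simmat\bC_n\bC_n^\top\simmat)=\frac1n\twonormsq{\bC_n^\top\done_n}=\frac1n\sum_{j=1}^n(H_n-H_{j-1})^2$. I will show $\sum_{j}(H_n-H_{j-1})^2=2n-H_n$, by induction on $n$ or by rewriting the sum as $\sum_{i,k}\min(i,k)/(ik)$. On the untransformed side I use $\tr(\bR_{\mathrm{eq}})=nd$ and $\tr(\simmat\bR_{\mathrm{eq}})=ndt_1$.

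\textbf{Step 3 (assembly and simplification).} Substituting Step 2 into Step 1 gives the numerator $nd\,t_1+sd[(1-\rho)(2n-H_n)/n+\rho n]$ and the denominator $nd+sd[(1-\rho)H_n+\rho n]$. Dividing both by $nd$ and subtracting $t_1$ leaves a ratio of the form $s(A-t_1B)/(1+sB)$, with
\[
B=\tfrac{1}{n}\bigl[(1-\rho)H_n+\rho n\bigr],\qquad A=\tfrac{1}{n^2}\bigl[(1-\rho)(2n-H_n)+\rho n^2\bigr].
\]
Inserting $\rho=(nt_1-1)/(n-1)$ and $1-\rho=n(1-t_1)/(n-1)$, the denominator coefficient should collapse to $f_2(t_1)$. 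For the numerator, $A-t_1B$ should factor as $\frac{(1-t_1)}{n(n-1)}(n-H_n)(nt_1+1)$. I expect this factoring to work because $A-t_1B$ must vanish at $t_1=1$: collapsed inputs stay collapsed.

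\textbf{Main obstacle.} I expect the only delicate points to be the harmonic-sum identity $\sum_j(H_n-H_{j-1})^2=2n-H_n$ and the bookkeeping in the final factorization. I will check the identity first for $n=1,2$, then prove the inductive step $n\to n+1$ by expanding $(H_{n+1}-H_{j-1})^2=(H_n-H_{j-1}+\tfrac{1}{n+1})^2$ and using $\sum_{j\le n}(H_n-H_{j-1})=n$.
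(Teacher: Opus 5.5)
Your proposal is correct and follows essentially the same route as the paper. The paper likewise kills the cross terms via $\expect{}{\bW}=0$ and substitutes $\bC_n$ and the equal-correlation closure. It then uses $\bC_n\done_n=\done_n$, $\frobsq{\bC_n}=H_n$, and $\frobsq{\simmat\bC_n}=2-H_n/n$, proving the last identity by the same $\min(i,k)/(ik)$ double-sum rewriting you mention. Finally it simplifies with $\rho=(nt_1-1)/(n-1)$, and your explicit factorization of $A-t_1B$ into $f_1(t_1)$ and of $B$ into $f_2(t_1)$ matches the algebra the paper carries out in the proof of the de-escalation corollary (the case $\alpha=0$).
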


\paragraph{Remark.}
Equation~\eqref{eq:attention_scalar_summary} separates the residual contribution from the attention contribution. For $t_1\in[1/n,1)$, every factor in $f_1(t_1), f_2(t_1)$ is strictly positive, so $\tlower{\Delta}{attn}(s,t_1)>0$: the one-step attention contribution is positive. Moreover, for fixed $t_1$, 
\[
\frac{\partial}{\partial s}\tlower{\Delta}{attn}(s,t_1)
=
\frac{f_1(t_1)}{(1+s f_2(t_1))^2}
>0,
\]
Therefore, $\Delta(s, t_1)$ increases monotonically with the amount of attention $s$ when $t_1 \in (0, 1)$. The factor $f_1(t_1)$ is a quadratic function of $t_1$ and is maximized at $t_1=(n-1)/(2n)\approx 1/2$. Therefore, if $sf_2(t_1)$ remains a small multiple of $n$, the largest one-step increment occurs at intermediate similarity.

\blue{Equation~\eqref{eq:attention_scalar_summary} suggests a direct test: if prefix averaging causes the similarity increase, then subtracting a multiple of $\bC_n$ from $\bP$ should reduce it. The next corollary makes this precise, and Section~\ref{section_experiments} uses it as an intervention experiment.}

\begin{corollary}
\label{theorem_attention_deescalation}
Under the same assumptions as Theorem~\ref{theorem_forward_amplification}, by replacing $\bP$ by $\bP-\alpha\bC_n$, the similarity increase is
\[
\tlower{\Delta}{de}(s,t_1) := \expect{\bW}{\tsim{\bY_1 - \alpha\bC_n\bX_1\bW}} - \tsim{\bX_1}
=
\frac{(1-\alpha)^2 sf_1(t_1)}
{1 + (1-\alpha)^2s f_2(t_1)}
\]
Hence the expected post-attention similarity is monotonically decreasing in $\alpha$, and at $\alpha=1$ the token similarity increase from attention vanishes.
\end{corollary}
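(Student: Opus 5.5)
The plan is to reduce the corollary to Theorem~\ref{theorem_forward_amplification} by a rescaling of the weight matrix. With $\bP$ replaced by $\bP-\alpha\bC_n$ and, under the same surrogate, $\bP$ replaced by $\bC_n$, the perturbed output is
\[
\bY_1-\alpha\bC_n\bX_1\bW=\bX_1+(1-\alpha)\bC_n\bX_1\bW=\bX_1+\bC_n\bX_1\widetilde{\bW},
\qquad \widetilde{\bW}:=(1-\alpha)\bW .
\]
Since $\bW$ has i.i.d.\ centered Gaussian entries with variance $\sigma_{\bW}^2$, the matrix $\widetilde{\bW}$ also has i.i.d.\ centered Gaussian entries, with variance $(1-\alpha)^2\sigma_{\bW}^2$. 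The case $\alpha=1$ is trivial, and the sign of $1-\alpha$ does not matter because the Gaussian law is symmetric.

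Because $\bX_1$ and its similarity $t_1$ are unchanged, the hypotheses of Theorem~\ref{theorem_forward_amplification} still hold: the equal-correlation closure, the ratio-of-expectations approximation, and the prefix-averaging surrogate. The theorem therefore applies verbatim with $\sigma_{\bW}^2$ replaced by $(1-\alpha)^2\sigma_{\bW}^2$.

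The amount of attention $s=nd^2\sigma_{\bW}^2/\frob{\bX_1}^2$ is linear in $\sigma_{\bW}^2$, so it becomes $(1-\alpha)^2 s$. Substituting into \eqref{eq:attention_scalar_summary} gives the stated formula for $\tlower{\Delta}{de}$. As a check, I would confirm that the theorem's derivation depends on $\bW$ only through second moments. In $\expect{}{\frobsq{\cdot}}$ the cross terms vanish by zero mean, and the quadratic terms scale as $\sigma_{\bW}^2$. This is precisely why $s$ enters the formula only through the products $sf_1$ and $sf_2$.

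For monotonicity, write $u=(1-\alpha)^2 s$. The expected post-attention similarity is $t_1+\tlower{\Delta}{attn}(u,t_1)$. By the Remark following the theorem, $\tlower{\Delta}{attn}$ is strictly increasing in $u$ when $f_1(t_1)>0$. The map $\alpha\mapsto(1-\alpha)^2$ is decreasing on $\alpha\in[0,1]$, so the composition is decreasing in $\alpha$ there. At $\alpha=1$ we have $u=0$, so $\tlower{\Delta}{de}=0$.

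The only subtle point is that the monotonicity claim holds on $[0,1]$ and not for all $\alpha$, since $(1-\alpha)^2$ increases again once $\alpha>1$. The statement should be read, or stated explicitly, with $\alpha\in[0,1]$. This is the step I would flag; everything else is direct substitution.
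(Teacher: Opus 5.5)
Your proposal is correct and follows essentially the paper's route. The paper also replaces $\bP-\alpha\bC_n$ by $(1-\alpha)\bC_n$, so the attention-branch second moments pick up the factor $\lambda_\alpha=(1-\alpha)^2$, which is exactly $s\mapsto(1-\alpha)^2 s$ in Theorem~\ref{theorem_forward_amplification}. It then gets monotonicity by differentiating in $\lambda_\alpha$ and using $d\lambda_\alpha/d\alpha=-2(1-\alpha)\le 0$. Your rescaling $\widetilde{\bW}=(1-\alpha)\bW$ only packages the paper's repeated trace computation as a direct call to the theorem. Your caveat also matches the paper, whose proof claims the decrease only for $\alpha\in[0,1)$ and $t_1\in[1/n,1)$.
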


In addition to attention, the transformer module also contains the FFN sublayer. FFN introduces a small damping on similarity at initialization. We adopt similar theoretical tools for analyzing FFN sublayer.

\begin{theorem}
\label{theorem_swiglu_pairwise}
\label{section_swiglu}
Let $\bY_2 = \bX_2 + \bigl(\silu(\bX_2\bW_1)\odot \bX_2\bW_3\bigr)\bW_2$ as in Equation~\ref{eq:attention_ffn_definition}, and let $t_2 := \tsim{\bX_2}$. \blue{Under Assumption~\ref{assumption_equal_correlation_closure} and the ratio-of-expectations approximation used in Theorem~\ref{theorem_forward_amplification}}, the SwiGLU sublayer induces the similarity change
\begin{equation}
\label{eq:swiglu_scalar_summary}
\tlower{\Delta}{FFN} (\xi, t_2) := \expect{\bW_1, \bW_2, \bW_3}{\tsim{\bY_2}} - \tsim{\bX_2} =
\frac{
\xi(t_2 - 1/n)(m(\rho(t_2)) - m(1))
}{
1+\xi m(1)
}
\end{equation}
\red{where $\xi = d\tlower{d}{ff}\sigma_{\bW_2}^2\sigma_{\bW_3}^2$ and $\rho(t_2) = (nt_2 - 1) / (n - 1)$ from Equation~\eqref{eq:pairwise_correlation}.} \red{Here $m(\rho)$ is the pairwise SwiGLU moment}
\[
\red{m(\rho) := \expect{}{\silu(u)\silu(v)}, \qquad (u,v)\sim\caN\!\left(0,\,\sigma_{\bW_1}^2 d\begin{pmatrix}1&\rho \\\rho &1\end{pmatrix}\right).}
\]
\end{theorem}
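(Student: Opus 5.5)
The plan is to compute the numerator and denominator of the ratio-of-expectations approximation separately, $\expect{}{\tsim{\bY_2}} \approx \expect{}{\frobsq{\simmat\bY_2}}/\expect{}{\frobsq{\bY_2}}$. Both are linear functionals of the second-moment matrix $\expect{}{\bY_2\bY_2^\top}$: $\frobsq{\bY_2}=\tr(\bY_2\bY_2^\top)$ and $\frobsq{\simmat\bY_2}=\tr(\simmat\bY_2\bY_2^\top)=\frac1n\done_n^\top\bY_2\bY_2^\top\done_n$. So it suffices to compute $\expect{}{\bY_2\bY_2^\top}$ entrywise and then replace $\bX_2\bX_2^\top$ by the equal-correlation surrogate $\bR_{\mathrm{eq}}(t_2)$ of Assumption~\ref{assumption_equal_correlation_closure}.

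\textbf{Step 1 (second moment).} Write $\bY_2=\bX_2+\bF$ with $\bF=\bG\bW_2$ and $\bG=\silu(\bX_2\bW_1)\odot\bX_2\bW_3\in\bbR^{n\times\dff}$. Since $\bW_2$ is centered and independent of $\bG$, the cross terms $\bX_2\bF^\top$ and $\bF\bX_2^\top$ vanish in expectation. Moreover,
\[
\expect{\bW_2}{\bF\bF^\top}=d\sigma_{\bW_2}^2\,\bG\bG^\top .
\]
Next,
\[
(\bG\bG^\top)_{ij}=\sum_{k=1}^{\dff}\silu(a_{ik})\silu(a_{jk})\,b_{ik}b_{jk},
\]
where $a=\bX_2\bW_1$ and $b=\bX_2\bW_3$ are independent. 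Taking expectation over $\bW_3$ gives
\[
\expect{}{b_{ik}b_{jk}}=\sigma_{\bW_3}^2(\bX_2\bX_2^\top)_{ij}.
\]
The pair $(a_{ik},a_{jk})$ is centered Gaussian with covariance $\sigma_{\bW_1}^2$ times the $2\times2$ Gram block of rows $i,j$. Under the closure, every row has squared norm $d$ (RMS normalization) and every off-diagonal Gram entry equals $d\rho$ with $\rho=\rho(t_2)$. Hence this covariance is exactly the one in the definition of $m$, and $\expect{}{\silu(a_{ik})\silu(a_{jk})}$ equals $m(1)$ on the diagonal and $m(\rho)$ off it. Summing over the $\dff$ identical columns gives
\[
\expect{}{\bF\bF^\top}=\xi\,d\bigl[m(1)\bI+\rho\,m(\rho)(\done_n\done_n^\top-\bI)\bigr],
\qquad \xi=d\dff\sigma_{\bW_2}^2\sigma_{\bW_3}^2 .
\]

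\textbf{Step 2 (traces).} From $\bR_{\mathrm{eq}}(t_2)$ we have $\frobsq{\bX_2}=nd$ and $\frobsq{\simmat\bX_2}=ndt_2$. From Step~1,
\[
\expect{}{\frobsq{\bF}}=\xi nd\,m(1),
\qquad
\expect{}{\frobsq{\simmat\bF}}=\xi d\bigl[m(1)+(n-1)\rho\,m(\rho)\bigr]=\xi d\bigl[m(1)+(nt_2-1)m(\rho)\bigr],
\]
using $(n-1)\rho(t_2)=nt_2-1$. Dividing numerator and denominator by $nd$ gives
\[
\expect{}{\tsim{\bY_2}}\approx\frac{t_2+\tfrac{\xi}{n}\bigl[m(1)+(nt_2-1)m(\rho)\bigr]}{1+\xi m(1)}.
\]
Subtracting $t_2=t_2(1+\xi m(1))/(1+\xi m(1))$, the numerator becomes
\[
\xi\Bigl[\tfrac{m(1)}{n}+\bigl(t_2-\tfrac1n\bigr)m(\rho)-t_2m(1)\Bigr]=\xi\bigl(t_2-\tfrac1n\bigr)\bigl(m(\rho)-m(1)\bigr),
\]
which is exactly \eqref{eq:swiglu_scalar_summary}.

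\textbf{Main obstacle.} The delicate step is Step~1, specifically justifying that the pairwise SiLU moment depends only on $\rho(t_2)$. This needs three facts: the independence of $\bW_1$ and $\bW_3$, so that the expectation factorizes into the $\bW_3$ Gram term times the SiLU moment; the column-wise i.i.d.\ structure across the $\dff$ hidden units; and that the closure fixes both the diagonal Gram entries (at $d$) and the off-diagonal ones (at $d\rho$), so the Gaussian pair has exactly the covariance $\sigma_{\bW_1}^2d\begin{pmatrix}1&\rho\\\rho&1\end{pmatrix}$. Once this is in place, the rest is the bookkeeping above. Note that $m$ itself is never evaluated in closed form; the statement is expressed through it.
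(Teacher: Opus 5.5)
Your proposal is correct and follows essentially the same route as the paper's proof. The cross terms vanish because $\bW_2$ is centered and independent. Then $\expect{}{\bR_s\bR_s^\top}$ (your $\bG\bG^\top$) is computed entrywise by factorizing over the independent $\bW_1$ and $\bW_3$, with SiLU moments $m(1)$ on the diagonal and $m(\rho(t_2))$ off it under the equal-correlation closure. The two traces are then combined through the ratio-of-expectations surrogate.

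The only difference is that you carry out explicitly the final rearrangement into $\xi(t_2-1/n)(m(\rho(t_2))-m(1))/(1+\xi m(1))$, whereas the paper states this form without the intermediate algebra.
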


\paragraph{Remark.}
\red{Under the standard initialization used in this paper, a numerical evaluation of $m(\rho)$ shows that it increases across $\rho\in [0, 1)$ (see Figure~\ref{fig:swiglu_mrho_numerical} in Appendix~\ref{app:forward_amplification_details}). Therefore $m(\rho(t_2)) - m(1) < 0$ for $t_2\in[1/n,1)$ and $\tlower{\Delta}{FFN}(\xi,t_2)<0$, so SwiGLU contributes a damping effect on similarity.} We can upper-bound the magnitude of similarity change by
\[
\abs{\tlower{\Delta}{FFN} (\xi, t_2) }
=
\frac{\xi\left(t_2-\frac{1}{n}\right)\bigl(m(1)-m(\rho(t_2))\bigr)}{1+\xi m(1)}
\le \xi m(1).
\]
In the conventional setting $\xi=1/9$ and $m(1) \approx 0.10$. Thus $\xi m(1)$ is about $0.011$, so the damping effect is small. Appendix~\ref{app:forward_amplification_details} gives the full derivation and also \red{bounds the change in token similarity caused by row-wise RMS rescaling by the spread of the RMS factors across positions.}

\paragraph{Comparison with Pre-Norm.}
\red{Pre-Norm and Post-Norm share the same forward similarity amplification mechanism as Theorem~\ref{theorem_forward_amplification} describes. The difference lies in the amount of attention $s^l = nd^2 \sigma_{\bW}^2 / \frobsq{\bX_1^l}$ at each layer. In Post-Norm network, $\frobsq{\bX_1^l} = nd$ for every $l$, so $s^l = d\sigma_{\bW}^2$, a constant across all layers. In Pre-Norm network, the residual stream is not renormalized after each residual addition. Under the standard initialization-time approximation that the hidden state's Frobenius norm grows linearly with depth \citep{xiong2020layer}, the amount of attention is of order $\caO(d\sigma_{\bW}^2/l)$ and decreases as $l$ increases. Thus for deeper layers, Pre-Norm has a smaller one-step increment in token similarity than Post-Norm, so similarity grows more slowly at initialization.}


\subsection{Backward Repair Incapacity}
\label{section_backward_repair_bottleneck}
In Post-Norm, when $\tsim{\bX_k}=1$, gradient updates of $\bW, \bW_2$ increase the pre-normalization residual norm. As this norm grows, the RMSNorm Jacobian shrinks the backward gradient on the residual path. If the gradients from attention and FFN sublayers do not compensate the shrinkage from RMSNorm, the backward transport across the layer becomes contractive.  \red{For each row, RMSNorm acts independently. The backward gradient from $\bX_{k+1}$ to $\bY_k$ is} 
\begin{align*}
\red{\bigl(\pardir{\caL}{\bY_k}\bigr)_{i,:}
=
\bigl(\pardir{\caL}{\bX_{k+1}}\bigr)_{i,:}
\bJ_{\rms}((\bY_k)_{i, :}),}
\end{align*}
with
\[
\bJ_{\rms}((\bY_k)_{i, :})
:=
\frac{1}{\sqrt{\|(\bY_k)_{i, :}\|^2/d}}
\red{\bigl(\bI-\proj((\bY_k)_{i, :}^\top)\bigr),}
\qquad \text{where} \
\red{\proj(\by) := \frac{\by\by^\top}{\|\by\|^2} \text{ for column vectors } \by.}
\]
and $(\bY_k)_{i, :}$ denote the $i$th row of matrix $\bY_k$. There are two terms that can reduce gradient norm. First, the prefactor $1/\sqrt{\|(\bY_k)_{i, :}\|^2/d}$ decreases as the residual norm grows. Second, the projection $\bI-\proj(\by)$ deletes the component aligned with $\by$. Here, we discuss the prefactor part under the simplified setting of exact collapse ($\tsim{\bY_k} = \tsim{\bX_k} = 1$ for $k = 1, 2$). 

\blue{The RMSNorm Jacobian is only half of the backward picture: the gradient also flows through the attention and FFN branches, which could make it larger and cancel the shrinkage. To compare the two, we need to measure how much each sublayer changes the gradient norm. The factor defined below does this, and Section~\ref{section_experiments} tracks it during collapsing runs.}

\begin{definition}
    \red{
    \label{assumption_sublayer_amplification}
    We define the sublayer gradient contribution factor $\alpha_k^l$ as
    \begin{align*}
        \alpha_k^l = \frob{\pardir{\caL}{\bX_{k}^l}} / \frob{\pardir{\caL}{\bY_{k}^l}}
    \end{align*}
    where subscript and superscript notation follows from Equation~\ref{eq:X_Y_definition}.
    }
\end{definition}
\blue{Using this factor, the next theorem derives the relationship between the norm of the backward gradients.}

\begin{theorem}
\label{theorem_iterative_bound_on_gradient_norm}
\red{
When $\tsim{\bY^l_k} = \tsim{\bX^l_k} = 1$ for $l \in \{1, \ldots, \ldepth\}$ and $k \in \{1, 2\}$, we have $\bY^l_k = \done_n(\by^l_k)^\top$ for some $\by^l_k \in \mathbb{R}^d$. Furthermore, 
\[
\frob{\pardir{\caL}{\bX^l_k}}
\le
c(\by^l_k,\alpha^l_k)
\frob{\pardir{\caL}{\bX^l_{k+1}}},
\qquad \text{where} \ \ c(\by,\alpha) := \alpha\frac{\sqrt{d}}{\twonorm{\by}}.
\]
with $\bX^l_{3} := \bX^{l+1}_{1}$. Hence whenever $\twonormsq{\by^l_k}/d > (\alpha^l_k)^2$, the corresponding factor $c(\by_k^l,\alpha_k^l)$ falls below one, and the backward gradient contracts from $\bX^l_{k+1}$ to $\bX^l_k$.
}
\end{theorem}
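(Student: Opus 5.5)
The argument has two parts. First I will derive the structural claim from the characterization of token similarity. Then I will get the norm bound by composing the definition of $\alpha_k^l$ with a bound on the RMSNorm Jacobian.

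\emph{Structural part.} By the observation after Definition~\ref{definition_ts}, $\tsim{\bY}=1$ holds exactly when $\frobsq{\simmat\bY}=\frobsq{\bY}$. Since $\simmat$ is an orthogonal projection, this means $\frobsq{(\bI-\simmat)\bY}=0$, so $\bY=\simmat\bY=\done_n(\by)^\top$, where $\by$ is the mean row. Applying this to each $\bY^l_k$ gives $\by^l_k$. Because $\rms$ acts row-wise and all rows are equal, every row of $\pardir{\caL}{\bY^l_k}$ is multiplied on the right by the same matrix $\bJ_{\rms}(\by^l_k)$.

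\emph{Norm bound.} Let $\bX^l_{k+1}=\rms(\bY^l_k)$, with $\bX^l_3:=\bX^{l+1}_1$. This matches Equation~\ref{eq:X_Y_definition}, since $\bX_2=\rms(\bY_1)$ and $\bX^{l+1}_1=\rms(\bY^l_2)$. Then
\[
\Bigl(\pardir{\caL}{\bY^l_k}\Bigr)_{i,:}=\Bigl(\pardir{\caL}{\bX^l_{k+1}}\Bigr)_{i,:}\bJ_{\rms}(\by^l_k).
\]
The matrix $\bI-\proj(\by)$ is an orthogonal projection, so its spectral norm is at most $1$. Hence
\[
\twonorm{\bJ_{\rms}(\by)}\le \frac{\sqrt d}{\twonorm{\by}}.
\]
Summing the squared row norms gives
\[
\frob{\pardir{\caL}{\bY^l_k}}\le\frac{\sqrt d}{\twonorm{\by^l_k}}\frob{\pardir{\caL}{\bX^l_{k+1}}}.
\]
By Definition~\ref{assumption_sublayer_amplification}, $\frob{\pardir{\caL}{\bX^l_k}}=\alpha^l_k\frob{\pardir{\caL}{\bY^l_k}}$. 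Multiplying the two relations yields the bound with constant $c(\by^l_k,\alpha^l_k)$. The contraction criterion follows because $c<1$ is equivalent to $\alpha\sqrt d<\twonorm{\by}$, that is, $\twonormsq{\by}/d>\alpha^2$.

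\emph{Main obstacle.} The only delicate point is bookkeeping. I need the RMSNorm that produces $\bX^l_{k+1}$ to act on $\bY^l_k$, with the index wrapping across layers. I also need to handle the rows of $\bY^l_k$ being nonzero: $\twonorm{\by^l_k}>0$ holds because $\rms$ is defined on it, otherwise $c$ is infinite and the bound is vacuous. Finally, $\alpha^l_k$ is defined as a ratio, so it requires $\pardir{\caL}{\bY^l_k}\neq 0$. If that gradient is zero, both sides of the bound vanish and the inequality holds trivially. No genuinely hard estimate is involved.
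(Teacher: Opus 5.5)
Your proposal is correct and follows essentially the same route as the paper. It bounds each row of the RMSNorm backward map by $\sqrt{d}/\twonorm{\by^l_k}$ using that $\bI-\proj(\by)$ has spectral norm at most one, sums squared rows to get the Frobenius bound, and then multiplies by $\alpha^l_k$ from its definition. Your derivation of $\bY^l_k=\done_n(\by^l_k)^\top$ via $\frobsq{(\bI-\simmat)\bY^l_k}=0$, and your treatment of the degenerate cases $\by^l_k=0$ and $\partial\caL/\partial\bY^l_k=0$, are correct supplements that the paper leaves implicit.
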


\paragraph{Remark.} \red{This bound can be combined for $k = 1, 2$ to obtain} 
\begin{align*}
    \red{\frob{\pardir{\caL}{\bX_1^{l}}} \leq c(\by_1^{l}, \alpha_1^{l})c(\by_2^{l}, \alpha_2^{l})\frob{\pardir{\caL}{\bX_1^{l+1}}}.}
\end{align*}

\red{If $\tsim{\bX_k^{\tilde l}} = \tsim{\bY_k^{\tilde l}}= 1$ for $k \geq 1, \tilde l \geq l$, and each $c(\by, \alpha) < \gamma$ for some constant $\gamma < 1$, then the gradient reaching earlier layers becomes exponentially small. This requires the residual-stream norm $\twonorm{\by}$ to grow under exact collapse, which holds for the $\bW_O$ update in attention and the $\bW_2$ update in SwiGLU FFN(see Appendix~\ref{app:backward_bottleneck_details} for detail). Section~\ref{sec:experiments} measures both $\alpha_k^l$ and the RMSNorm factor $\sqrt{d}/\twonorm{\by_k^l}$.}

\paragraph{Comparison with Pre-Norm.}
\red{For Pre-Norm, the backward gradient from $\bX_{k+1}$ to $\bX_k$ is
\begin{align*}
\left(\frac{\partial \caL}{\partial \bX_k}\right)_{i,:}
=
\left(\frac{\partial \caL}{\partial \bX_{k+1}}\right)_{i,:}
+
\left(
\mathrm{Grad}_S\!\left(\rms(\bX_k), \frac{\partial \caL}{\partial \bX_{k+1}}\right)
\right)_{i,:}
\bJ_{\rms}((\bX_k)_{i,:}).
\end{align*}
where $\mathrm{Grad}_S$ is the gradient through the sublayer. The major difference with Post-Norm lies in the location of $\tlower{\bJ}{\rms}$ term. In Pre-Norm, $\tlower{\bJ}{\rms}$ multiplies only the gradient through the sublayer, while the skip-connection gradient stays unchanged. In Post-Norm, the RMS Jacobian multiplies both the sublayer and the skip-connection gradient. As a result, when the residual norm grows, Post-Norm shrinks the whole backward signal, whereas Pre-Norm shrinks only part of it. Therefore, Pre-Norm doesn't suffer from the gradient contraction mechanism shown in Theorem~\ref{theorem_iterative_bound_on_gradient_norm}.}


\subsection{Properties of a Collapsed Network}
\label{section_collapsed_properties}
\label{section_frequency_distribution}

A collapsed network has two properties that hinder optimization. First, the best achievable probability distribution is the \textbf{frequency distribution}. Second, once the output matches the frequency distribution, \red{the parameter gradients in collapsed layers} completely vanish. \blue{When the output distribution is the same at every position, the cross-entropy loss is minimized by predicting each label in proportion to its count in the target sequence. This optimum is the frequency distribution, defined formally below.}

\begin{definition} \label{definition_frequency_distribution}
    For a sequence of input tokens $\bt \in (\bbZ^+)^n$ with target labels $\mathbf{y}\in (\bbZ^+)^n$, both $\bt$ and $\mathbf{y}$ contain vocabulary indices, i.e., $\bt_i \in [v], \by_i \in [v]$. Let $c_i = \sum_{j=1}^n \mathbb{I}[y_j = i]$ be the count of label $i$ in $\mathbf{y}$. The \textbf{frequency distribution} and \textbf{frequency loss} are
\begin{equation*}
\tlower{\bp}{freq}(\by) =  \left [ \frac{c_i}{n}\right]_{i=1}^v \in \mathbb{R}^v, \qquad \tlower{\caL}{freq}(\mathbf{y}) = \mathcal{H}(\tlower{\bp}{freq}(\by)) = -\sum_{i=1}^v (\tlower{\bp}{freq}(\by))_i\log (\tlower{\bp}{freq}(\by))_i.
\end{equation*}
Here $\tlower{\bp}{freq}(\by)$ is a valid probability distribution because $\sum_{i=1}^v c_i = n$.
\end{definition}

\blue{The next theorem shows that a near-collapsed network cannot achieve a loss much lower than frequency loss, and that once the output matches the frequency distribution, the parameter gradients in all collapsed layers vanish exactly.}

\begin{theorem}\label{theorem_frequency_distribution}
Let $\bX^H \in \bbR^{n\times d}$ denote the hidden representation before the LM-head for input $\bt$, and let $\wlm \in \bbR^{d\times v}$ be the LM-head matrix. Define
\[
\quad \hat\bP = \softmaxrow(F_\theta(\mathbf{t})), \qquad \tlower{\caL}{CE}(F_\theta(\mathbf{t}), \mathbf{y}) = -\frac{1}{n}\sum_{i = 1}^n \log \hat\bP_{i, y_i}.
\]
where $F_{\theta}(\bt) = \bX^H\wlm$ is the neural network parameterized by $\theta$. Then:

\begin{enumerate}[(i)]
    \item \textbf{Lower Bound.} \label{theorem_collapse_lower_bound}\label{theorem_approximate_lower_bound_on_loss} If $\|\bX^H - \simmat\bX^H\|_F \leq \epsilon$, then
\begin{align*}
    \tlower{\caL}{CE}(F_\theta(\mathbf{t}), \mathbf{y})
    \geq \tlower{\caL}{freq}(\mathbf{y}) - \red{\frac{2}{\sqrt{n}}}\|\wlm\|_2 \epsilon.
\end{align*}

    \item \textbf{Gradient Vanishing.} \label{theorem_complete_gradient_vanishing} If the model output equals to $\tlower{\bp}{freq}(\by)$ at every position, then \red{for all sublayer such that $\tsim{\bX^l_k} = 1$,} the gradients with respect to the parameter matrices in these layers are zero.
\end{enumerate}

\end{theorem}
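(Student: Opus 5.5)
For part (i), I will compare the network's logits with those of the exactly collapsed representation $\simmat\bX^H$. Write $\bZ = \bX^H\wlm$ and $\bar\bZ = \simmat\bX^H\wlm$. Every row of $\bar\bZ$ equals the same vector $\bar\bz$, so $\bar\bZ$ yields one common distribution $\bar\bp = \softmax(\bar\bz)$ at every position. Its loss is
\[
-\frac1n\sum_i \log \bar p_{y_i} = -\sum_{j=1}^v (\tlower{\bp}{freq}(\by))_j \log \bar p_j = \mathcal{H}(\tlower{\bp}{freq}(\by)) + \mathrm{KL}\bigl(\tlower{\bp}{freq}(\by)\,\|\,\bar\bp\bigr) \ge \tlower{\caL}{freq}(\by).
\]
This is the Gibbs inequality, and it shows that any position-independent predictor pays at least the frequency loss.

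Next I control the loss change from $\bar\bZ$ to $\bZ$. Each per-position loss $\ell_i(\bz) = -\log\softmax(\bz)_{y_i}$ has gradient $\softmax(\bz) - \mathbf{e}_{y_i}$. That gradient has $\ell_2$ norm at most $\sqrt{2}$, and in fact $\ell_1$ norm at most $2$. By the mean value theorem, $|\ell_i(\bZ_{i,:}) - \ell_i(\bar\bZ_{i,:})| \le 2\|\bZ_{i,:} - \bar\bZ_{i,:}\|_2$. Averaging over $i$ and applying Cauchy--Schwarz gives
\[
|\tlower{\caL}{CE}(\bZ) - \tlower{\caL}{CE}(\bar\bZ)| \le \frac{2}{n}\sqrt{n}\,\|\bZ-\bar\bZ\|_F \le \frac{2}{\sqrt n}\|\wlm\|_2\,\|\bX^H - \simmat\bX^H\|_F \le \frac{2}{\sqrt n}\|\wlm\|_2\,\epsilon.
\]
The middle step uses $\|(\bX^H-\simmat\bX^H)\wlm\|_F \le \|\wlm\|_2\|\bX^H-\simmat\bX^H\|_F$. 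Combining this with the Gibbs bound gives (i). Using the $\ell_2$ gradient bound would even give the sharper constant $\sqrt2$.

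For part (ii), the plan is to show that the backpropagated gradient at any collapsed sublayer is row-wise centered, i.e. $\done_n^\top G = 0$, while the forward input is rank one, $\bX = \done_n\bx^\top$. Each parameter gradient then factors through $\bX^\top(\cdot) = \bx\,\done_n^\top(\cdot)$ and vanishes.

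\begin{itemize}
\item \emph{Top layer.} The logit gradient is $\frac1n(\hat\bP - \bE_\by)$, where $\bE_\by$ has rows $\mathbf{e}_{y_i}$. Its column sums are $\frac1n(n\,\tlower{\bp}{freq} - \mathbf{c}) = 0$, where $\mathbf{c} = (c_1,\dots,c_v)$. So $\done_n^\top \partial\caL/\partial\bZ = 0$.
\item \emph{Propagation through $\wlm$.} Right multiplication by $\wlm^\top$ preserves zero column sums.
\item \emph{Propagation through RMSNorm.} All rows of $\bY$ are equal, so the per-row Jacobians $\bJ_{\rms}$ are identical. The gradient rows are multiplied by one common matrix, which preserves $\done_n^\top G = 0$.
\item \emph{Propagation through FFN.} On a collapsed input, every row has the same pointwise Jacobian. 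The backward map is again right multiplication by a common matrix, so centering is preserved.
\item \emph{Propagation through attention.} With $\bX = \done\bx^\top$, all scores in a row coincide, so $\bP$ becomes $\bC_n$ regardless of $\bW_Q,\bW_K$. The gradient to $\bX$ through the value path is $\bP^\top G\bW^\top$. Its column sum is $\done^\top\bP^\top G\bW^\top = (\bP\done)^\top G\bW^\top = \done^\top G\bW^\top = 0$, because $\bP$ is row-stochastic.
\end{itemize}

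The attention-score path needs separate treatment, and I expect it to be the main obstacle, since it carries both the query/key gradients and a contribution to the input gradient. The gradient with respect to $\bP$ is $G(\bX\bW)^\top = G\bW^\top\bx\,\done^\top$. Its row $i$ is constant across columns $j$, so the softmax Jacobian annihilates it and the score gradient is zero. Hence $\bW_Q,\bW_K$ receive zero gradient, and this path adds nothing to the input gradient.

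With centering established at every collapsed sublayer, the remaining parameter gradients vanish by the factorization above. The value/output gradient is $(\bP\bX)^\top G = \bx\,\done^\top G = 0$. The gradients of $\bW_1,\bW_3$ are $\bX^\top(\cdot)$ applied to a row-wise common transform of $G$, and hence zero. The gradient of $\bW_2$ is $H^\top G$, where the hidden activations $H$ have identical rows, so it is also zero.

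I will carry out this induction downward from the top, assuming that all sublayers above the one under consideration are collapsed. This matches the hypothesis as the setting where collapse propagates to the output, and the care needed is in verifying that centering survives each branch.
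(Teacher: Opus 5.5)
Your proposal is correct. Part (ii) follows the paper's route: the invariant $\done_n^\top\,\partial\caL/\partial(\cdot)=0$ is propagated down from the LM head through RMSNorm (a common Jacobian on identical rows), SwiGLU and attention, and each parameter gradient is killed by factoring it through $\bX^\top=\bx\done_n^\top$ or through a row-constant activation.

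The differences are local. On the attention score path, the paper uses only the generic zero-row-sum property of the softmax backward. Combined with row-constant $\bK_h$ and $\bX$, this gives vanishing $\bW_Q,\bW_K$ gradients and $\done_n^\top\partial\caL/\partial\bK_h=0$. You instead observe that $\partial\caL/\partial\bP$ is constant along each row, because the values are row-constant. That shows the score gradient is identically zero for any upstream gradient, which is stronger and removes the query/key path in one step.

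In part (i), the paper obtains the exact-collapse optimum from a Lagrangian/KKT computation. It bounds the perturbation term using convexity of cross-entropy in $\bX$ and the first-order inequality at $\simmat\bX^H$, with the same gradient bound $\frac{2}{\sqrt n}\|\wlm\|_2$. Your Gibbs-inequality step is cleaner: it gives the bound as an infimum without requiring the frequency distribution to be attained by a softmax, which fails when some label count is zero. Your Lipschitz argument needs no convexity, is two-sided, and gives the sharper constant $\sqrt{2/n}$.

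One thing to tighten: the theorem only assumes $\tsim{\bX^l_k}=1$ at the sublayer in question, so downstream collapse must be proved, not assumed. This is the paper's forward-closure lemma. A row-stochastic $\bP$ maps $\done_n\bx^\top$ to a row-constant matrix, SwiGLU and RMSNorm act row-wise, and residual addition preserves identical rows. Hence every later hidden state and $\bX^H$ is row-constant. You already have every ingredient, so this is a one-line addition rather than a gap.
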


\paragraph{Remark.}
The theorem shows two properties of the collapsed network. Part (i) shows that the network can only produce a suboptimal solution under rank collapse. Part (ii) shows that, \red{once sublayer hidden state $\bX^l_k$ is collapsed} and the output probability matches the frequency distribution, \red{the gradient with respect to every parameter matrix in that sublayer and in every later layer $k>l$ becomes zero. Note that in reality, the frequency distribution may be unattainable when there is 0 count for some labels.} Additionally, if the gradients with respect to earlier layers that are not collapse ($\tsim{\bX} < 1$) also vanish, then the network reaches a near-stationary point. The detailed proof is at Appendix~\ref{app:collapsed_properties_details}.

\section{Experiments}\label{section_experiments}


\label{sec:experiments}
\red{The experiments are organized in the same order as the theory. We first test the initialization-time similarity increase by comparing the measured one-step similarity change with the surrogate predictions from Theorem~\ref{theorem_forward_amplification} and Theorem~\ref{theorem_swiglu_pairwise}, and by removing the prefix-averaging component as in Corollary~\ref{theorem_attention_deescalation}. We then test the training-time quantities in Theorem~\ref{theorem_iterative_bound_on_gradient_norm} by measuring sublayer gradient contributions, residual-stream norm growth, pre/post-RMS gradient contraction, and per-layer gradient norms. Finally, we compare the loss of collapsed runs with the frequency loss in Theorem~\ref{theorem_frequency_distribution}.}

\paragraph{Setup.}
Our base model is a 48-layer Llama-2 \citep{touvron2023llama} style decoder-only Transformer with model dimension $d=512$, feed-forward dimension $d_{\mathrm{ff}}=1536$, 4 attention heads, SwiGLU feed-forward blocks, and RMSNorm normalization, for a total of about 180M parameters. We train on C4 \citep{raffel2020exploring} with sequence length 2048, micro-batch size 4, and gradient accumulation 64, giving an effective batch size of 256. Optimization uses AdamW \citep{loshchilov2017decoupled} with $\beta_1=0.9$, $\beta_2=0.95$, weight decay 0.1, cosine learning-rate decay, 2000 warmup steps, \red{40000 total steps,} and gradient clipping at 1.0. \red{In all training runs, RMSNorm uses its standard implementation with learned gain and numerical stabilizer $\epsilon$.} \blue{The training runs span maximum learning rates from $6\times10^{-4}$ to $1.8\times10^{-3}$.} When comparing Post-Norm and Pre-Norm, all architectural and optimization settings are matched except for normalization placement.

\subsection{Stage I: Forward Similarity Amplification}
\red{Stage I provides a quantitative analysis of forward similarity amplification mechanism at initialization: attention increases token similarity, while SwiGLU contributes only a small damping effect. We validate this analysis in two ways. First, we compare the measured one-step similarity change with the predictions from Theorem~\ref{theorem_forward_amplification} and Theorem~\ref{theorem_swiglu_pairwise}. Second, we test the contribution of the prefix-averaging part of attention by removing it as in Corollary~\ref{theorem_attention_deescalation}.}

\paragraph{Quantitative Measurement of Forward Amplification.}
Figure~\ref{figure_similarity_graph} compares the measured one-step token-similarity increment with the theoretical predictions. To provide a comprehensive evaluation, we sweep the subayer initialization variance, so that the same 48-layer network covers a wider range of similarity growth. 

The left column shows the contribution from attention. For both Post-Norm and Pre-Norm, attention produces a positive one-step increment with strong dependence on layer index. The dashed theorem curves capture the sign and overall scale of this increment, \red{although a visible mismatch remains.}

The right column shows the SwiGLU contribution under the same variance sweep. SwiGLU produces a small damping effect whose magnitude is much smaller than the attention contribution, so the initial similarity increase is dominated by attention rather than FFN. \red{The Post-Norm and Pre-Norm curves have similar overall shape, but Post-Norm saturates more quickly: its token similarity rises rapidly and the one-step increment drops to near zero, whereas in Pre-Norm the growth is more gradual and the increment decays more slowly with depth.}
\begin{figure}[H]
    \centering
    \includegraphics[width=0.65\textwidth]{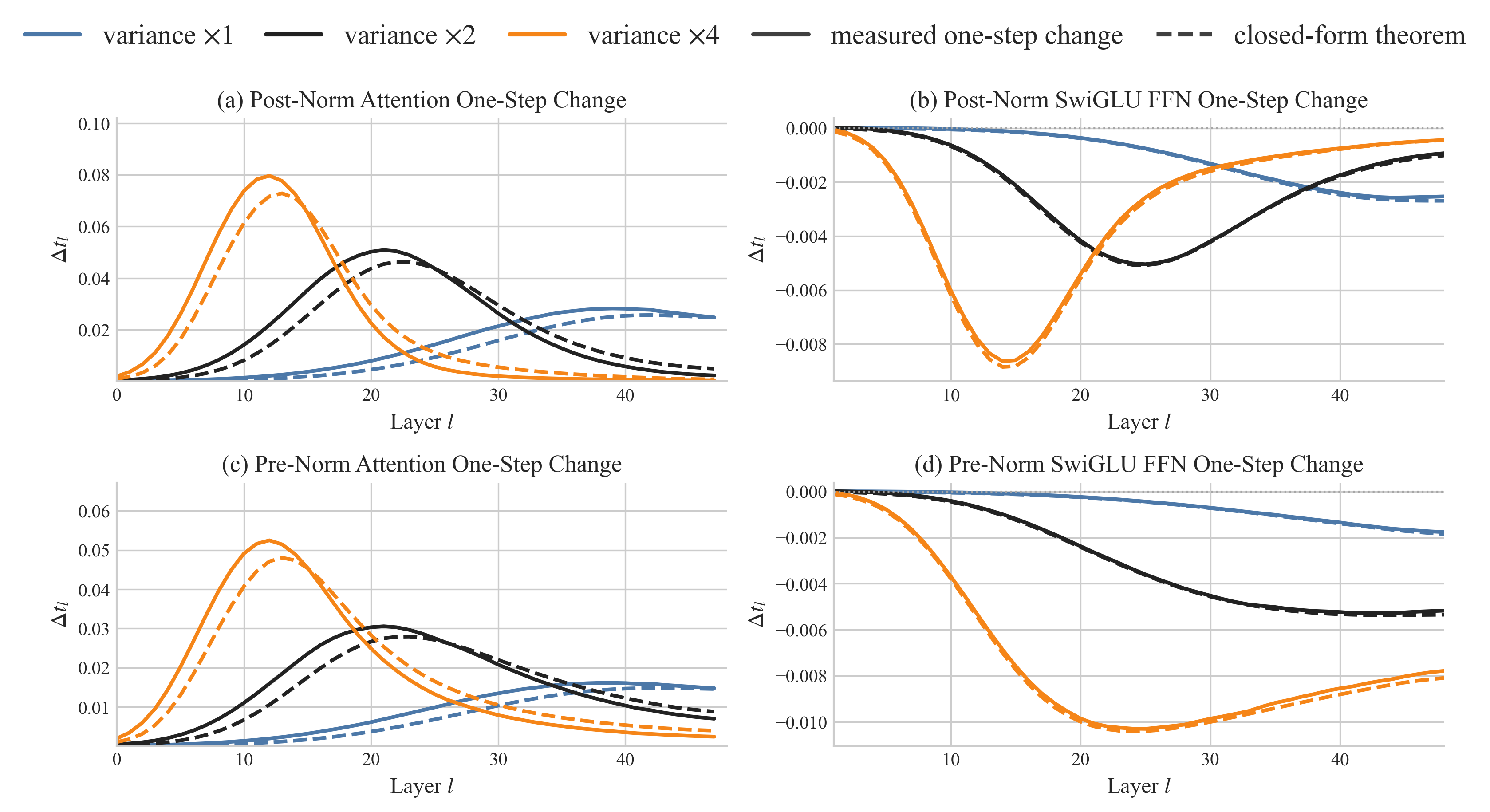}
    \caption{Quantitative evidence for forward similarity amplification on the 48-layer training-matched stack under sublayer initialization variance sweeps ($\times 1$, $\times 2$, $\times 4$).  Top row: Post-Norm. Bottom row: Pre-Norm. Left column: the attention branch produces a positive one-step increment whose magnitude is tracked well by Theorem~\ref{theorem_forward_amplification}. Right column: the SwiGLU feed-forward branch contributes a smaller negative correction tracked by Theorem~\ref{theorem_swiglu_pairwise}.}
    \label{figure_similarity_graph}
\end{figure}

\paragraph{Removing Prefix-Averaging Component.}\begin{wrapfigure}{r}{0.32\textwidth}
    \centering
    \vspace{-8pt}
    \includegraphics[width=0.32\textwidth]{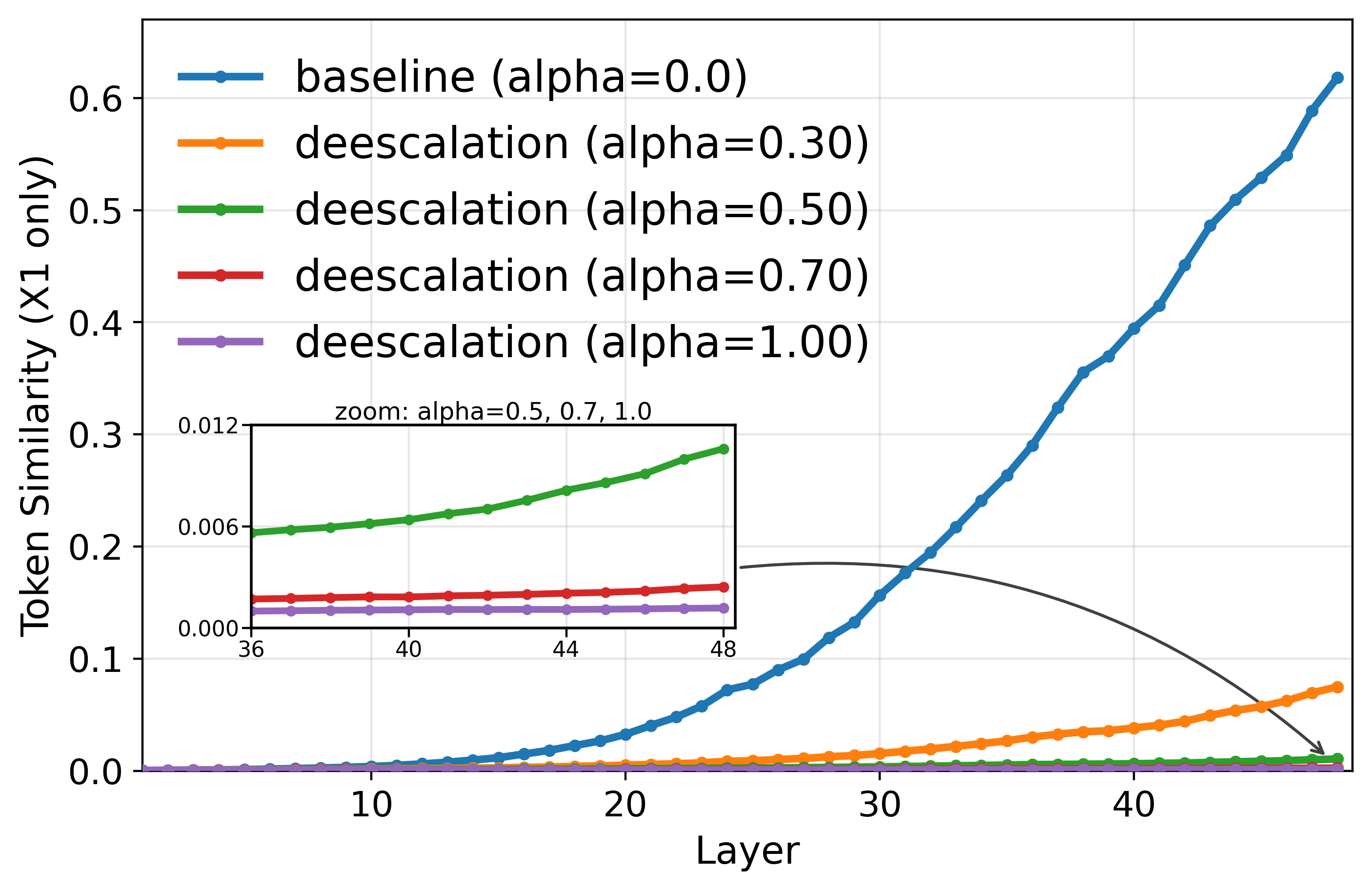}
    \caption{Removing Prefix-Averaging Component suppresses initialization-time token-similarity growth.}
    \label{figure_init_evidence_experiments}
    \vspace{-10pt}
\end{wrapfigure}

\red{Curve matching alone does not identify which part of attention drives the effect. To test this more directly, we subtract the prefix-averaging component $\alpha\bC_n$ from the attention matrix at initialization, where $\bC_n$ is the causal prefix-average matrix with $(\bC_n)_{i, j} = 1/i$ for $i \geq j$, and 0 otherwise.} Figure~\ref{figure_init_evidence_experiments} shows the result. As $\alpha$ increases from $0$ to $1$, token-similarity growth is uniformly suppressed, and $\alpha=1$ removes most of the increment. \red{This result shows that the prefix-averaging component of causal attention is a main driver of the similarity increase.} 

These results show that Post-Norm network is already pushed toward higher token similarity at initialization, but still does not explain why high token similarity state persists during training. If sufficiently strong gradient reach earlier layers, training can still restore token diversity. We next test why the backward signal to earlier layers becomes weak.


\subsection{Stage II: Backward Repair Incapacity}
\red{This section tracks three quantities used in Theorem~\ref{theorem_iterative_bound_on_gradient_norm}: $\alpha_k^l$, $\sqrt{d}/\|\by_k^l\|_2$, and $c(\by_k^l,\alpha_k^l)$, in one collapsed Post-Norm run.}  We call the optimizer steps where last-layer token similarity and training loss rise sharply the \textbf{transition window}. \blue{Post-Norm collapse depends on learning rate: at learning rate $6\times10^{-4}$ the same model trains stably for 40000 steps without collapse (Appendix~\ref{app:non_collapsed_postnorm_control}), while at
   $8\times10^{-4}$ it collapses within a few thousand steps. We therefore diagnose a collapsing run at learning rate $8\times10^{-4}$, whose transition occurs at approximately step 2644. The results for other learning rates can be found at Appendix~\ref{app:masked_lr_sweep_backward_incapacity}} \red{We first measure $\alpha_k^l$ over the full training run, then zoom into the transition window for $\sqrt{d}/\|\by_k^l\|_2$ and $c(\by_k^l,\alpha_k^l)$, and finally measure the gradient norms of every layer.}

\paragraph{Measuring Sublayer Gradient Contribution.}
\red{We first measure the attention and FFN sublayer gradient contribution factors $\alpha_1^l$ and $\alpha_2^l$ over the full training trajectory. Figure~\ref{fig:stage2_sublayer_contribution} plots $\alpha_1^l$ and $\alpha_2^l$ over the full training run. Most of the gradient contribution increase near the transition window, but the contribution factor stays below 2.5. If residual-norm growth is large enough to outweigh this increase in $\alpha_k^l$, then $c(\by_k^l,\alpha_k^l)$ should still decrease, which we test next.}

\begin{figure}[H]
    \centering
    \includegraphics[width=0.8\textwidth]{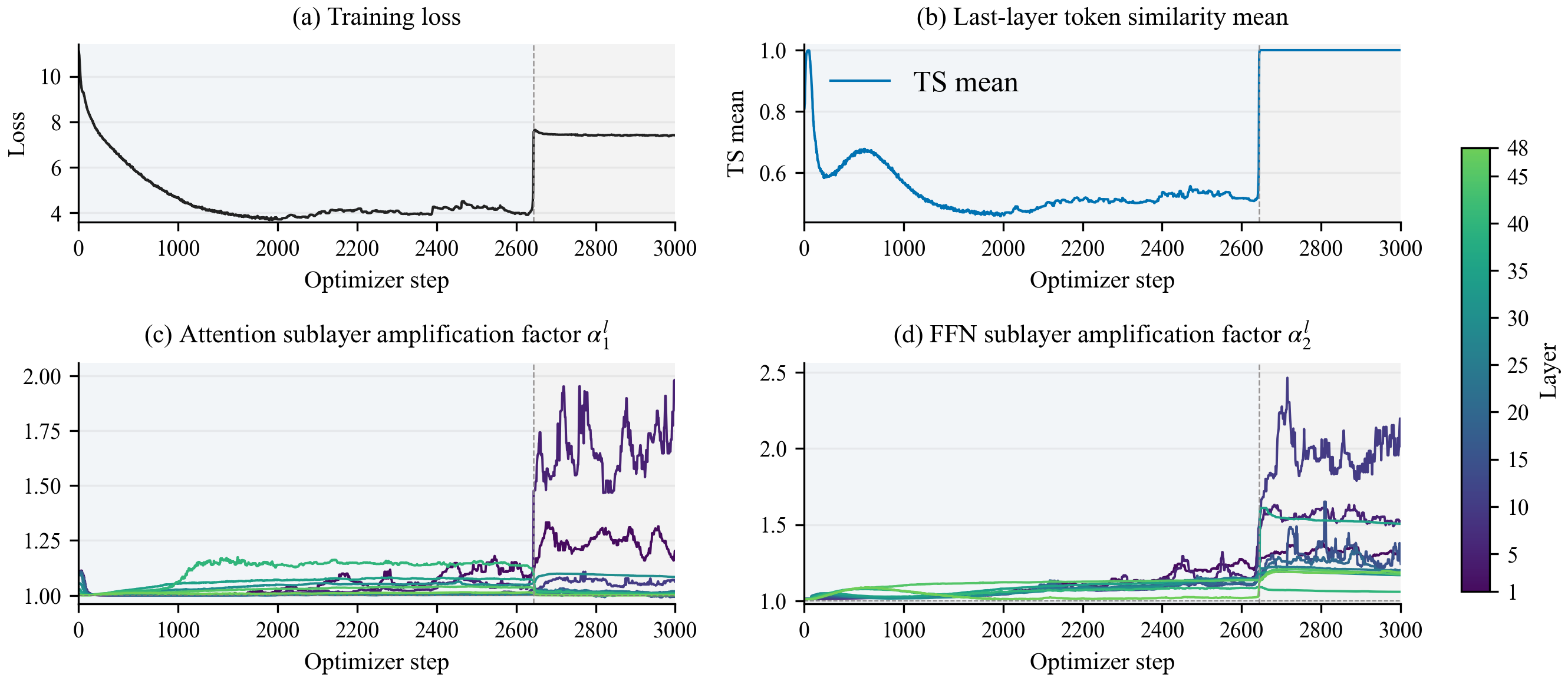}
    \caption{Sublayer gradient contribution in the collapsing Post-Norm run at LR=$8\times10^{-4}$. Dashed vertical line: transition step $\approx 2644$. \textbf{(a)} Training loss. \textbf{(b)} Last-layer token-similarity mean. \textbf{(c)} Attention gradient contribution across layers 1, 5, 10, 15, 20, 25, 30, 35, 40, 45, and 48. \textbf{(d)} FFN gradient contribution across the same layers.}
    \label{fig:stage2_sublayer_contribution}
\end{figure}

\paragraph{Residual Norm Growth and Gradient Contraction.}
\red{We now test the other two terms, $\sqrt{d}/\|\by_k^l\|_2$ and $c(\by_k^l,\alpha_k^l)$ near the transition window. The term $\sqrt{d}/\|\by_k^l\|_2$ measures the shrinkage coming from the pre-RMS residual norm alone. The term $c(\by_k^l,\alpha_k^l)$ combines this RMSNorm factor with the measured sublayer gradient contribution $\alpha_k^l$, and directly determines how much gradient norm changes after backpropagating through a sublayer. Figure~\ref{figure_collapse_gradients_experiments} reports these two quantities near the transition window. In panels (a) and (c), the curves for higher-similarity layers bend downward sharply in the transition window, showing a rapid drop in $\sqrt{d}/\|\by_k^l\|_2$ at both the attention sublayer and the FFN sublayer. In panels (b) and (d), the same layers show a matching drop in $c(\by_k^l,\alpha_k^l)$, and most of the curves cross below 1 shortly after the transition begins. The matching drops in panels (a)--(d) show that the decrease in $\sqrt{d}/\|\by_k^l\|_2$ is large enough to outweigh the increase in $\alpha_k^l$ in most layers. In Theorem~\ref{theorem_iterative_bound_on_gradient_norm}, the condition $c(\by_k^l,\alpha_k^l)<1$ means that the corresponding sublayer shrinks the gradient norm. The attention layer-1 curve is a visible exception: there the rise in $\alpha_1^l$ outweighs the RMSNorm shrinkage, so $c(\by_1^l,\alpha_1^l)$ does not show the same drop.}

\begin{figure}[H]
    \centering
    \includegraphics[width=0.8\textwidth]{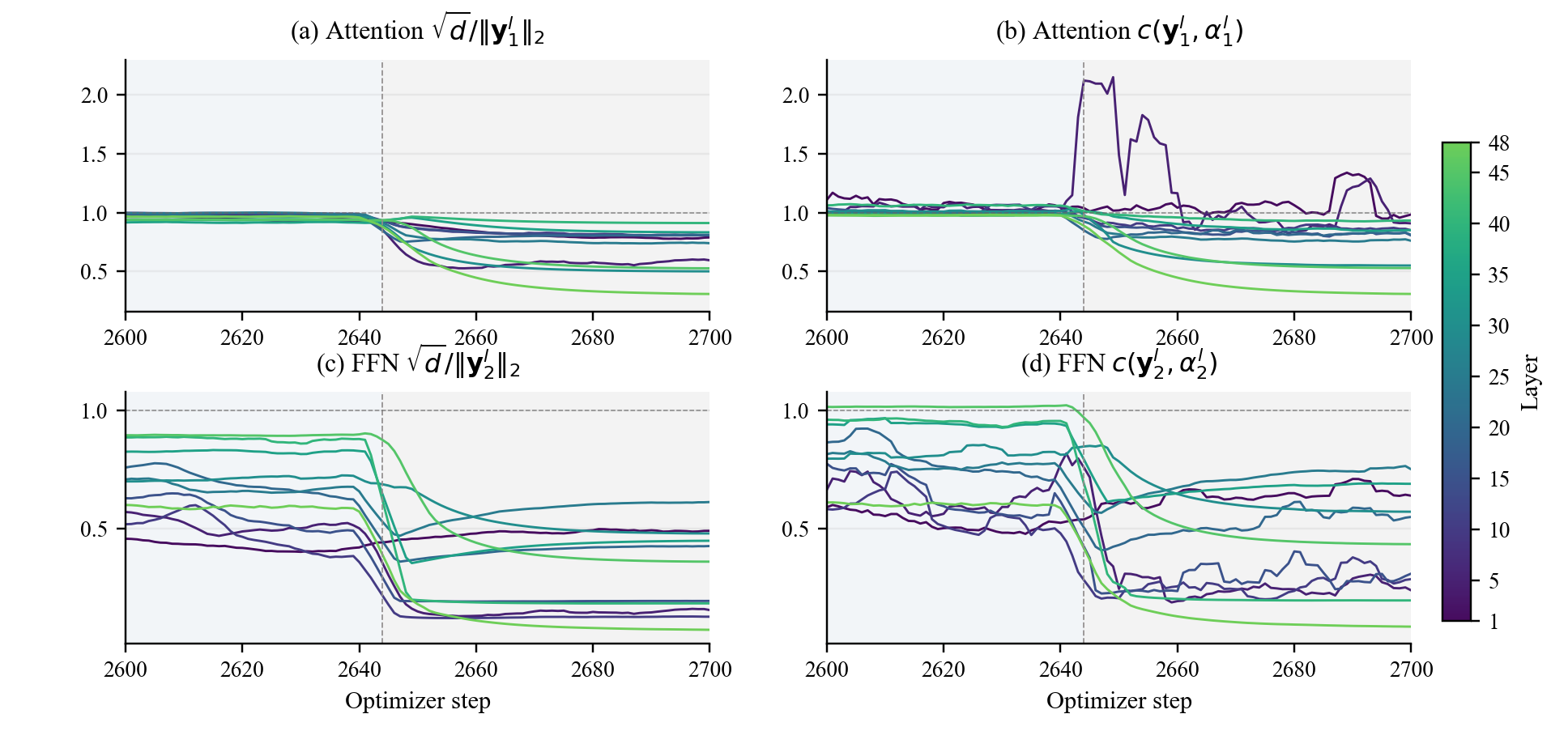}
    \caption{$\sqrt{d}/\|\by_k^l\|_2$ and $c(\by_k^l,\alpha_k^l)$ in the transition window of the Post-Norm run at LR=$8\times10^{-4}$. Dashed vertical line: transition step $\approx 2644$; blue/red shading: pre/post-transition windows. \textbf{(a)} Attention $\sqrt{d}/\|\by_1^l\|_2$. \textbf{(b)} Attention $c(\by_1^l,\alpha_1^l)$. \textbf{(c)} FFN $\sqrt{d}/\|\by_2^l\|_2$. \textbf{(d)} FFN $c(\by_2^l,\alpha_2^l)$. Measured layers are 1, 5, 10, 15, 20, 25, 30, 35, 40, 45, and 48.}
    \label{figure_collapse_gradients_experiments}
\end{figure}

\begin{figure}[H]
    \centering
    \includegraphics[width=0.7\textwidth]{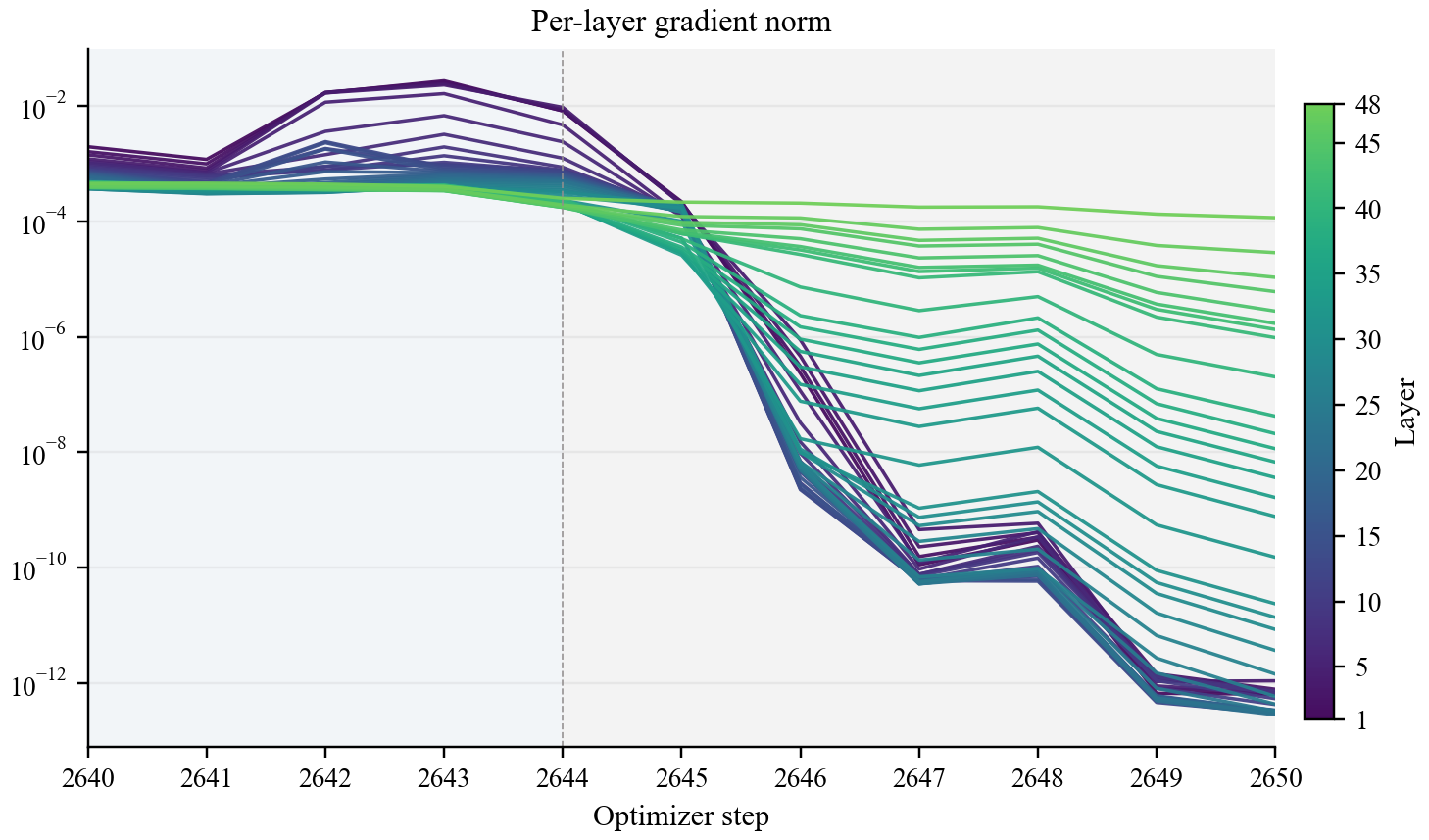}
    \caption{Each curve covers optimizer steps 2640--2650. Gradients to early layers drop sharply, while later-layer gradients remain relatively stable, consistent with geometric compounding of per-block RMSNorm contraction.}
    \label{fig:stage2_gradient_norm_during_collapse}
\end{figure}
\paragraph{Gradient Vanishing in Earlier Layers.} \red{We next test whether $c(\by_k^l,\alpha_k^l)<1$ in many successive layers results in much smaller gradient norms in earlier layers.  Figure~\ref{fig:stage2_gradient_norm_during_collapse} plots the per-layer gradient norms over optimizer steps 2640--2650 and shows that gradient norms become much smaller in earlier layers. During the transition window, gradient norms in early layers decrease by orders of magnitude, while gradient norms in later layers change much less. This leaves little gradient in earlier layers to reduce token similarity, making the high-similarity state hard to reverse.}

\subsection{Collapsed-State Loss Characterization}
\red{
We compare the training loss after transition window with the frequency loss. Theorem~\ref{theorem_frequency_distribution} shows that under exact collapse, the frequency loss is the smallest achievable cross-entropy loss.  Figure~\ref{figure_collapsed_state_loss_verification} tests whether the loss after collapse approaches the frequency loss predicted by Theorem~\ref{theorem_frequency_distribution}.
}
\red{
To extend the frequency-loss reference in Theorem~\ref{theorem_frequency_distribution} to multiple sequences, we consider two approaches. The first approach is to aggregate the labels from multiple sequences into a single label collection and computes the frequency loss from the resulting label frequencies. The second approach is to compute the frequency loss for each sequence separately and then averages these losses. In our experiments, the first one matches the post-collapse training loss more closely, so we use it to compute the frequency-loss curves in Figure~\ref{figure_collapsed_state_loss_verification}.
}

\red{
The left panel shows no-warmup training under three data scopes: full training, a single batch, and a single sequence. We plot each training loss curve together with the frequency loss curve  computed from the corresponding label collection, extending the frequency-distribution construction in Theorem~\ref{theorem_frequency_distribution} with the aggregated-label approach described above. These label collections contain 500M tokens sampled from the dataset for the full-training reference, 0.5M tokens for the single-batch reference, and 2048 tokens for the single-sequence reference. Different label scopes result in different empirical label frequencies and therefore different frequency-loss references. 
}
\red{
The right panel shows three warmup Post-Norm runs with learning rates $1.2\times 10^{-3}$, $1.5\times 10^{-3}$, and $1.8\times 10^{-3}$. The frequency loss reference is obtained using 500M tokens sampled from the dataset as in the left panel. All three runs collapse, and after the transition window the loss in each run stays near the frequency-loss curve. These results match Theorem~\ref{theorem_frequency_distribution}: after collapse, the training loss stays near the frequency loss.}

\begin{figure}[H]
    \centering
    \includegraphics[width=0.8\textwidth]{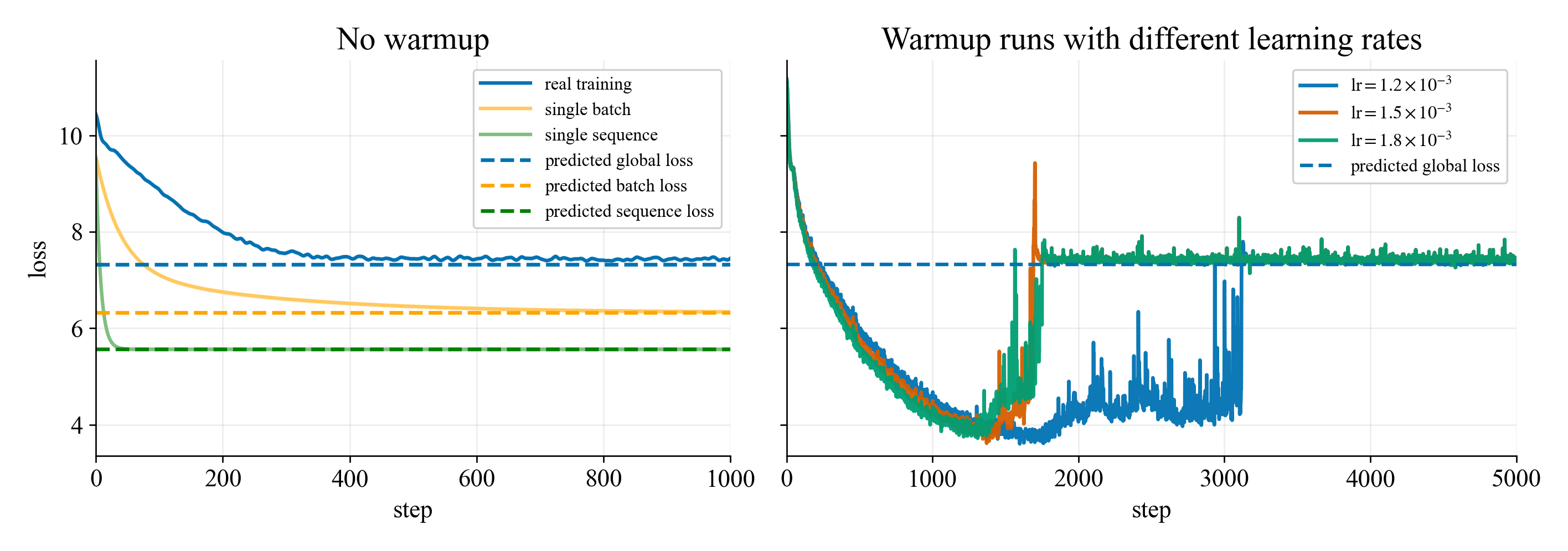}
    \caption{Training loss compared with frequency-loss curves. Left: in the no-warmup setting, real training, a single batch, and a single sequence each approach the frequency loss computed from the same source of labels. Right: in warmup Post-Norm runs with learning rates $1.2\times 10^{-3}$, $1.5\times 10^{-3}$, and $1.8\times 10^{-3}$, the loss rises sharply at different optimizer steps, but all three runs then stay near the frequency loss computed from all target labels in the training stream.}
    \label{figure_collapsed_state_loss_verification}
\end{figure}

\section{Limitations}


\red{The paper explains why token similarity increases at initialization, why gradients to layers with smaller indices become weak during training, and why collapsed runs stay near the frequency loss while gradients vanish in collapsed layers. However, it does not explain what causes the sharp rise in last-layer token similarity and training loss at a particular optimizer step. How this phenomenon occurs remains as future work.}




\section{Conclusion}
\red{The theory and experiments support the same picture of Post-Norm collapse. At initialization, causal attention increases token similarity, while the feed-forward correction is much smaller. During training, growth of the pre-RMS residual norm reduces the gradient that reaches layers with smaller indices, making the high-similarity state difficult to repair. Once collapse occurs, the training loss stays near the frequency loss. Together, these results show how Post-Norm is pushed toward collapse and why training has difficulty reversing it once collapse begins.}

\bibliographystyle{tmlr}
\bibliography{sections/references}

@article{poole2016exponential,
  title={Exponential expressivity in deep neural networks through transient chaos},
  author={Poole, Ben and Lahiri, Subhaneil and Raghu, Maithra and Sohl-Dickstein, Jascha and Ganguli, Surya},
  journal={Advances in Neural Information Processing Systems},
  volume={29},
  year={2016}
}

@article{schoenholz2016deep,
  title={Deep information propagation},
  author={Schoenholz, Samuel S and Gilmer, Justin and Ganguli, Surya and Sohl-Dickstein, Jascha},
  journal={arXiv preprint arXiv:1611.01232},
  year={2016}
}

@inproceedings{glorot2010understanding,
  title={Understanding the difficulty of training deep feedforward neural networks},
  author={Glorot, Xavier and Bengio, Yoshua},
  booktitle={Proceedings of the Thirteenth International Conference on Artificial Intelligence and Statistics},
  pages={249--256},
  year={2010},
  organization={JMLR Workshop and Conference Proceedings}
}

@inproceedings{he2015delving,
  title={Delving deep into rectifiers: Surpassing human-level performance on imagenet classification},
  author={He, Kaiming and Zhang, Xiangyu and Ren, Shaoqing and Sun, Jian},
  booktitle={Proceedings of the IEEE International Conference on Computer Vision},
  pages={1026--1034},
  year={2015}
}

@book{neal2012bayesian,
  title={Bayesian learning for neural networks},
  author={Neal, Radford M},
  volume={118},
  year={2012},
  publisher={Springer Science \& Business Media}
}

@article{lee2018gaussian,
  title={Deep Neural Networks as Gaussian Processes},
  author={Lee, Jaehoon and Bahri, Yasaman and Novak, Roman and Schoenholz, Samuel S. and Pennington, Jeffrey and Sohl-Dickstein, Jascha},
  journal={arXiv preprint arXiv:1711.00165},
  year={2018}
}

@article{matthews2018gaussian,
  title={Gaussian Process Behaviour in Wide Deep Neural Networks},
  author={Matthews, Alexander G. de G. and Rowland, Mark and Hron, Jiri and Turner, Richard E. and Ghahramani, Zoubin},
  journal={arXiv preprint arXiv:1804.11271},
  year={2018}
}

@article{pennington2017resurrecting,
  title={Resurrecting the Sigmoid in Deep Learning Through Dynamical Isometry: Theory and Practice},
  author={Pennington, Jeffrey and Schoenholz, Samuel S. and Ganguli, Surya},
  journal={arXiv preprint arXiv:1711.04735},
  year={2017}
}

@article{xiao2018dynamical,
  title={Dynamical Isometry and a Mean Field Theory of CNNs: How to Train 10,000-Layer Vanilla Convolutional Neural Networks},
  author={Xiao, Lechao and Bahri, Yasaman and Sohl-Dickstein, Jascha and Schoenholz, Samuel S. and Pennington, Jeffrey},
  journal={arXiv preprint arXiv:1806.05393},
  year={2018}
}

@article{chen2018dynamical,
  title={Dynamical Isometry and a Mean Field Theory of RNNs: Gating Enables Signal Propagation in Recurrent Neural Networks},
  author={Chen, Minmin and Pennington, Jeffrey and Schoenholz, Samuel S.},
  journal={arXiv preprint arXiv:1806.05394},
  year={2018}
}

@article{yu2026classic,
  title={Why" Classic" Transformers Are Shallow and A Depth-Enabling Technique},
  author={Yu, Yueyao and Zhang, Yin},
  journal={Journal of Machine Learning Research},
  volume={27},
  number={64},
  pages={1--32},
  year={2026}
}

@article{chen2026from,
  title={From condensation to rank collapse: A two-stage analysis of transformer training dynamics},
  author={Chen, Zheng-An and Luo, Tao},
  journal={Advances in Neural Information Processing Systems},
  volume={38},
  pages={41563--41598},
  year={2026}
}

@inproceedings{dong2021attention,
  title={Attention is not all you need: Pure attention loses rank doubly exponentially with depth},
  author={Dong, Yihe and Cordonnier, Jean-Baptiste and Loukas, Andreas},
  booktitle={International Conference on Machine Learning},
  pages={2793--2803},
  year={2021},
  organization={PMLR}
}

@article{vaswani2017attention,
  title={Attention is all you need},
  author={Vaswani, Ashish and Shazeer, Noam and Parmar, Niki and Uszkoreit, Jakob and Jones, Llion and Gomez, Aidan N and Kaiser, {\L}ukasz and Polosukhin, Illia},
  journal={Advances in Neural Information Processing Systems},
  volume={30},
  year={2017}
}

@article{ba2016layer,
  title={Layer normalization},
  author={Ba, Jimmy Lei and Kiros, Jamie Ryan and Hinton, Geoffrey E},
  journal={arXiv preprint arXiv:1607.06450},
  year={2016}
}

@inproceedings{he2016deep,
  title={Deep residual learning for image recognition},
  author={He, Kaiming and Zhang, Xiangyu and Ren, Shaoqing and Sun, Jian},
  booktitle={Proceedings of the IEEE Conference on Computer Vision and Pattern Recognition},
  pages={770--778},
  year={2016}
}

@article{saada2024mind,
  title={Mind the gap: a spectral analysis of rank collapse and signal propagation in attention layers},
  author={Saada, Thiziri Nait and Naderi, Alireza and Tanner, Jared},
  journal={arXiv preprint arXiv:2410.07799},
  year={2024}
}

@article{noci2022signal,
  title={Signal propagation in transformers: Theoretical perspectives and the role of rank collapse},
  author={Noci, Lorenzo and Anagnostidis, Sotiris and Biggio, Luca and Orvieto, Antonio and Singh, Sidak Pal and Lucchi, Aurelien},
  journal={Advances in Neural Information Processing Systems},
  volume={35},
  pages={27198--27211},
  year={2022}
}

@inproceedings{liu2020understanding,
  title={Understanding the difficulty of training transformers},
  author={Liu, Liyuan and Liu, Xiaodong and Gao, Jianfeng and Chen, Weizhu and Han, Jiawei},
  booktitle={Proceedings of the 2020 Conference on Empirical Methods in Natural Language Processing (EMNLP)},
  pages={5747--5763},
  year={2020}
}

@inproceedings{xiong2020layer,
  title={On layer normalization in the transformer architecture},
  author={Xiong, Ruibin and Yang, Yunchang and He, Di and Zheng, Kai and Zheng, Shuxin and Xing, Chen and Zhang, Huishuai and Lan, Yanyan and Wang, Liwei and Liu, Tieyan},
  booktitle={International Conference on Machine Learning},
  pages={10524--10533},
  year={2020},
  organization={PMLR}
}

@article{nguyen2019transformers,
  title={Transformers without Tears: Improving the Normalization of Self-Attention},
  author={Nguyen, Toan Q. and Salazar, Julian},
  journal={arXiv preprint arXiv:1910.05895},
  year={2019}
}

@inproceedings{yan2022addressing,
  title={Addressing token uniformity in transformers via singular value transformation},
  author={Yan, Hanqi and Gui, Lin and Li, Wenjie and He, Yulan},
  booktitle={Uncertainty in Artificial Intelligence},
  pages={2181--2191},
  year={2022},
  organization={PMLR}
}

@inproceedings{wang2020improving,
  title={Improving neural language generation with spectrum control},
  author={Wang, Lingxiao and Huang, Jing and Huang, Kevin and Hu, Ziniu and Wang, Guangtao and Gu, Quanquan},
  booktitle={International Conference on Learning Representations},
  year={2020}
}

@inproceedings{zhai2023stabilizing,
  title={Stabilizing transformer training by preventing attention entropy collapse},
  author={Zhai, Shuangfei and Likhomanenko, Tatiana and Littwin, Etai and Busbridge, Dan and Ramapuram, Jason and Zhang, Yizhe and Gu, Jiatao and Susskind, Joshua M},
  booktitle={International Conference on Machine Learning},
  pages={40770--40803},
  year={2023},
  organization={PMLR}
}

@article{gao2019representation,
  title={Representation degeneration problem in training natural language generation models},
  author={Gao, Jun and He, Di and Tan, Xu and Qin, Tao and Wang, Liwei and Liu, Tie-Yan},
  journal={arXiv preprint arXiv:1907.12009},
  year={2019}
}

@article{zhang2019fixup,
  title={Fixup initialization: Residual learning without normalization},
  author={Zhang, Hongyi and Dauphin, Yann N and Ma, Tengyu},
  journal={arXiv preprint arXiv:1901.09321},
  year={2019}
}

@misc{shazeer2020glu,
  title={{GLU} Variants Improve Transformer},
  author={Noam Shazeer},
  year={2020},
  eprint={2002.05202},
  archivePrefix={arXiv},
  primaryClass={cs.LG}
}

@article{wang2024deepnet,
  title={Deepnet: Scaling transformers to 1,000 layers},
  author={Wang, Hongyu and Ma, Shuming and Dong, Li and Huang, Shaohan and Zhang, Dongdong and Wei, Furu},
  journal={IEEE Transactions on Pattern Analysis and Machine Intelligence},
  volume={46},
  number={10},
  pages={6761--6774},
  year={2024},
  publisher={IEEE}
}

@article{he2020realformer,
  title={RealFormer: Transformer Likes Residual Attention},
  author={He, Ruining and Ravula, Anirudh and Kanagal, Bhargav and Ainslie, Joshua},
  journal={arXiv preprint arXiv:2012.11747},
  year={2020}
}

@article{shleifer2021normformer,
  title={NormFormer: Improved Transformer Pretraining with Extra Normalization},
  author={Shleifer, Sam and Weston, Jason and Ott, Myle},
  journal={arXiv preprint arXiv:2110.09456},
  year={2021}
}

@inproceedings{bachlechner2021rezero,
  title={Rezero is all you need: Fast convergence at large depth},
  author={Bachlechner, Thomas and Majumder, Bodhisattwa Prasad and Mao, Henry and Cottrell, Gary and McAuley, Julian},
  booktitle={Uncertainty in Artificial Intelligence},
  pages={1352--1361},
  year={2021},
  organization={PMLR}
}

@article{zhang2019root,
  title={Root mean square layer normalization},
  author={Zhang, Biao and Sennrich, Rico},
  journal={Advances in Neural Information Processing Systems},
  volume={32},
  year={2019}
}

@inproceedings{touvron2021going,
  title={Going deeper with image transformers},
  author={Touvron, Hugo and Cord, Matthieu and Sablayrolles, Alexandre and Synnaeve, Gabriel and J{\'e}gou, Herv{\'e}},
  booktitle={Proceedings of the IEEE/CVF International Conference on Computer Vision},
  pages={32--42},
  year={2021}
}

@inproceedings{takase2023b2t,
  title={B2t connection: Serving stability and performance in deep transformers},
  author={Takase, Sho and Kiyono, Shun and Kobayashi, Sosuke and Suzuki, Jun},
  booktitle={Findings of the Association for Computational Linguistics: ACL 2023},
  pages={3078--3095},
  year={2023}
}

@article{xie2023residual,
  title={ResiDual: Transformer with Dual Residual Connections},
  author={Xie, Shufang and Zhang, Huishuai and Guo, Junliang and Tan, Xu and Bian, Jiang and Awadalla, Hany Hassan and Menezes, Arul and Qin, Tao and Yan, Rui},
  journal={arXiv preprint arXiv:2304.14802},
  year={2023}
}

@article{kedia2024transformers,
  title={Transformers get stable: An end-to-end signal propagation theory for language models},
  author={Kedia, Akhil and Zaidi, Mohd Abbas and Khyalia, Sushil and Jung, Jungho and Goka, Harshith and Lee, Haejun},
  journal={arXiv preprint arXiv:2403.09635},
  year={2024}
}

@inproceedings{wang2019learning,
  title={Learning deep transformer models for machine translation},
  author={Wang, Qiang and Li, Bei and Xiao, Tong and Zhu, Jingbo and Li, Changliang and Wong, Derek F and Chao, Lidia S},
  booktitle={Proceedings of the 57th Annual Meeting of the Association for Computational Linguistics},
  pages={1810--1822},
  year={2019}
}

@inproceedings{devlin2019bert,
  title={Bert: Pre-training of deep bidirectional transformers for language understanding},
  author={Devlin, Jacob and Chang, Ming-Wei and Lee, Kenton and Toutanova, Kristina},
  booktitle={Proceedings of the 2019 Conference of the North American Chapter of the Association for Computational Linguistics: Human Language Technologies, Volume 1 (Long and Short Papers)},
  pages={4171--4186},
  year={2019}
}

@article{dosovitskiy2020image,
  title={An image is worth 16x16 words: Transformers for image recognition at scale},
  author={Dosovitskiy, Alexey and Beyer, Lucas and Kolesnikov, Alexander and Weissenborn, Dirk and Zhai, Xiaohua and Unterthiner, Thomas and Dehghani, Mostafa and Minderer, Matthias and Heigold, Georg and Gelly, Sylvain and others},
  journal={arXiv preprint arXiv:2010.11929},
  year={2020}
}

@article{brown2020language,
  title={Language models are few-shot learners},
  author={Brown, Tom and Mann, Benjamin and Ryder, Nick and Subbiah, Melanie and Kaplan, Jared D and Dhariwal, Prafulla and Neelakantan, Arvind and Shyam, Pranav and Sastry, Girish and Askell, Amanda and others},
  journal={Advances in Neural Information Processing Systems},
  volume={33},
  pages={1877--1901},
  year={2020}
}

@article{kaplan2020scaling,
  title={Scaling laws for neural language models},
  author={Kaplan, Jared and McCandlish, Sam and Henighan, Tom and Brown, Tom B and Chess, Benjamin and Child, Rewon and Gray, Scott and Radford, Alec and Wu, Jeffrey and Amodei, Dario},
  journal={arXiv preprint arXiv:2001.08361},
  year={2020}
}

@article{hoffmann2022training,
  title={Training compute-optimal large language models},
  author={Hoffmann, Jordan and Borgeaud, Sebastian and Mensch, Arthur and Buchatskaya, Elena and Cai, Trevor and Rutherford, Eliza and Casas, DDL and Hendricks, Lisa Anne and Welbl, Johannes and Clark, Aidan and others},
  journal={arXiv preprint arXiv:2203.15556},
  volume={10},
  year={2022}
}

@article{loshchilov2017decoupled,
  title={Decoupled weight decay regularization},
  author={Loshchilov, Ilya and Hutter, Frank},
  journal={arXiv preprint arXiv:1711.05101},
  year={2017}
}

@article{raffel2020exploring,
  title={Exploring the limits of transfer learning with a unified text-to-text transformer},
  author={Raffel, Colin and Shazeer, Noam and Roberts, Adam and Lee, Katherine and Narang, Sharan and Matena, Michael and Zhou, Yanqi and Li, Wei and Liu, Peter J},
  journal={Journal of Machine Learning Research},
  volume={21},
  number={140},
  pages={1--67},
  year={2020}
}

@article{touvron2023llama,
  title={Llama 2: Open foundation and fine-tuned chat models},
  author={Touvron, Hugo and Martin, Louis and Stone, Kevin and Albert, Peter and Almahairi, Amjad and Babaei, Yasmine and Bashlykov, Nikolay and Batra, Soumya and Bhargava, Prajjwal and Bhosale, Shruti and others},
  journal={arXiv preprint arXiv:2307.09288},
  year={2023}
}

@article{emadi2026exact,
  title={Exact Attention Sensitivity and the Geometry of Transformer Stability},
  author={Emadi, Seyed Morteza},
  journal={arXiv preprint arXiv:2602.18849},
  year={2026}
}

@article{chen2026post,
  title={Post-LayerNorm Is Back: Stable, ExpressivE, and Deep},
  author={Chen, Chen and Wei, Lai},
  journal={arXiv preprint arXiv:2601.19895},
  year={2026}
}

@article{wu2024role,
  title={On the role of attention masks and layernorm in transformers},
  author={Wu, Xinyi and Ajorlou, Amir and Wang, Yifei and Jegelka, Stefanie and Jadbabaie, Ali},
  journal={Advances in Neural Information Processing Systems},
  volume={37},
  pages={14774--14809},
  year={2024}
}

@inproceedings{zhang2019improving,
  title={Improving deep transformer with depth-scaled initialization and merged attention},
  author={Zhang, Biao and Titov, Ivan and Sennrich, Rico},
  booktitle={Proceedings of the 2019 Conference on Empirical Methods in Natural Language Processing and the 9th International Joint Conference on Natural Language Processing (EMNLP-IJCNLP)},
  pages={898--909},
  year={2019}
}

\appendix
\section*{Appendix}
The appendix follows the same order as the theoretical results in the main text. Appendix~\ref{app:forward_amplification_details} gives the technical assumptions and derivations for the forward analysis. Appendix~\ref{app:backward_bottleneck_details} proves the residual-norm growth result and the gradient bound used in Theorem~\ref{theorem_iterative_bound_on_gradient_norm}. Appendix~\ref{app:collapsed_properties_details} then gives the supporting proofs for the complementary characterization of a collapsed network. Appendix~\ref{app:diagnostics} collects empirical checks of the approximations used in these derivations.

\section{Related Works}

\paragraph{Deep Post-Norm Transformer Stability.}
Deep Post-Norm Transformers are known to be difficult to optimize at large depth and are highly sensitive to warmup and learning-rate scale \citep{xiong2020layer,liu2020understanding,wang2019learning,nguyen2019transformers}. Specifically, prior work has identified two characteristic failure modes in deep Post-Norm models: \textbf{gradient vanishing} makes optimization increasingly difficult \citep{xiong2020layer,liu2020understanding,emadi2026exact,chen2026post}, while \textbf{rank collapse} reduces representation diversity as depth increases \citep{dong2021attention,noci2022signal,saada2024mind,yu2026classic}. These instabilities motivate both practical stabilizers and theoretical analyses.

\paragraph{Stabilization Methods.}
Existing methods modify scale, residual routing, or normalization to mitigate the instability. The most widely used stabilization method is Pre-Norm, which has become the standard choice for training deep Transformers in practice \citep{xiong2020layer,wang2019learning,raffel2020exploring,touvron2023llama}. Other representative examples include residual or parameter rescaling such as Fixup, ReZero, Going Deeper, DeepNet, and B2T \citep{zhang2019fixup,bachlechner2021rezero,touvron2021going,wang2024deepnet,takase2023b2t}; residual-attention or hybrid residual designs such as RealFormer and ResiDual \citep{he2020realformer,xie2023residual}; normalization-centric changes such as ScaleNorm/FixNorm, and NormFormer\citep{nguyen2019transformers,shleifer2021normformer}; and objective-side approaches such as spectrum control, anti-degeneration penalties, and attention entropy regularization \citep{wang2020improving,gao2019representation,zhai2023stabilizing}. Our paper is complementary to this line. By analyzing the rank collapse mechanism in deep Post-Norm training, we pinpoint two concrete mechanisms behind Pre-Norm's stability over Post-Norm, providing insight into how normalization placement affects training stability.

\paragraph{Signal Propagation.}
Classical signal propagation theory studies how activations and gradients evolve with depth in randomly initialized neural networks. Early work studied training difficulty in models using ReLU or linear activations \citep{glorot2010understanding,he2015delving}. Later, researchers adopted multiple theoretical tools to analyze properties of the forward-backward process in fully-connected neural networks, including mean-field theory \citep{poole2016exponential,schoenholz2016deep}, Jacobian conditioning \citep{pennington2017resurrecting}, and Gaussian processes \citep{neal2012bayesian,lee2018gaussian,matthews2018gaussian}. Similar tools were later extended beyond fully connected networks to convolutional and recurrent architectures \citep{xiao2018dynamical,chen2018dynamical}. Our forward and backward analyses follow the signal-propagation viewpoint: we track layerwise quantities and study how they evolve across depth in deep Post-Norm Transformers.

\paragraph{Rank Collapse.}
Rank collapse in deep attention and Transformer stacks has been studied from several complementary angles \citep{dong2021attention,noci2022signal,saada2024mind,yu2026classic,yan2022addressing,wu2024role}. \citet{dong2021attention} first showed in pure attention networks that repeated attention drives token uniformity and doubly exponential convergence to rank-one representations. \citet{noci2022signal} then connected rank collapse to signal propagation at initialization. \citet{saada2024mind} revisited the same phenomenon from a spectral viewpoint, distinguishing collapse in depth from collapse in width. \citet{yu2026classic} provided a quantitative analysis of rank collapse in Post-Norm encoder Transformers using token similarity as the core metric. Our forward amplification analysis extends the work of \citet{yu2026classic} to decoder networks with additional analysis on SwiGLU FFN and RMSNorm.

\paragraph{Gradient Vanishing.}
Gradient vanishing and related imbalance issues in deep Transformers have been analyzed from multiple perspectives \citep{xiong2020layer,liu2020understanding,wang2019learning,zhang2019improving,kedia2024transformers,emadi2026exact,chen2026post}. \citet{xiong2020layer} showed that Post-LN has imbalanced gradient magnitudes at initialization, and more recent analyses attributed gradient vanishing in deep Post-LN networks to repeated normalization Jacobians along the residual path \citep{emadi2026exact,chen2026post}. Our backward analysis is complementary to these results. In addition to showing that Post-Norm backward propagation contracts, we identify a concrete mechanism for why this contraction becomes stronger after collapse.

\newpage
 \section{Appendix Notation Table}
\centerline{\bf Numbers}
\bgroup
\def\arraystretch{1.5}
\begin{longtable}{p{1.35in}p{4.35in}}
$\displaystyle n$ & Sequence length\\
$\displaystyle d$ & Model dimension\\
$\displaystyle \dff$ & Feed-forward hidden dimension\\
$\displaystyle v$ & Vocabulary size\\
$\displaystyle \ldepth$ & Total number of Transformer layers\\
$\displaystyle l$ & Layer index\\
$\displaystyle k \in \{1,2\}$ & Sublayer index, with $k=1$ for attention and $k=2$ for FFN\\
$\displaystyle H$ & Number of attention heads\\
$\displaystyle B$ & Batch size\\
$\displaystyle t_1,t_2$ & Token similarity of $\bX_1$ and $\bX_2$\\
$\displaystyle \rho_1,\rho_2$ & Pairwise row correlation coefficient of $\bX_1$ and $\bX_2$\\
$\displaystyle H_n$ & Harmonic number $\sum_{i=1}^n \frac{1}{i}$\\
$\displaystyle H_n^{(k)}$ & Generalized harmonic number $\sum_{i=1}^n \frac{1}{i^k}$\\
$\displaystyle \gamma_{\mathrm{euler}}$ & Euler--Mascheroni constant\\
$\displaystyle \xi$ & Scalar coefficient multiplying the pairwise SwiGLU moment\\
$\displaystyle \beta_i$ & Number of occurrences of label $i$ in the sequence\\
$\displaystyle \sigma_{\bG_k}^2$ & Gradient variance\\
$\displaystyle \alpha_k^l$ & Sublayer gradient contribution factor in layer $l$\\
$\displaystyle c_0$ & Exact-collapse RMSNorm shrinkage factor\\
\end{longtable}
\egroup
\vspace{0.25cm}

\centerline{\bf Vectors and Matrices}
\bgroup
\def\arraystretch{1.5}
\begin{longtable}{p{1.35in}p{4.35in}}
$\displaystyle \done_n$ & All-one column vector in $\mathbb{R}^{n\times 1}$\\
$\displaystyle \simmat$ & Mean-projection matrix $\frac{1}{n}\done_n\done_n^\top$\\
$\displaystyle \bC_n$ & Causal prefix-averaging matrix\\
$\displaystyle \tlower{\bM}{causal}$ & Additive causal mask used in attention logits\\
$\displaystyle \bX_k^l$ & Input of sublayer $k$ in layer $l$\\
$\displaystyle \bY_k^l$ & Output of sublayer $k$ in layer $l$, including the residual addition\\
$\displaystyle \bP$ & Attention weight matrix\\
$\displaystyle \hat{\bp}$ & Shared output distribution under exact collapse\\
$\displaystyle \hat\bP$ & Output probability matrix under exact collapse\\
$\displaystyle \bE(\by)$ & One-hot label matrix for the label sequence $\by$\\
$\displaystyle \bar\bX$ & Mean-projected component $\simmat \bX$ of $\bX$\\
$\displaystyle \bar\by_k$ & Mean token vector in the near-collapse decomposition of $\bY_k$\\
$\displaystyle \bG_k$ & Backpropagated gradient $\partial \caL / \partial \bY_k$\\
$\displaystyle \bR_s$ & SwiGLU hidden branch feature\\
$\displaystyle \bP_{de}$ & Deescalated attention matrix, $\bP-\alpha\bC_n$\\
$\displaystyle \bW_Q,\bW_K,\bW_V,\bW_O$ & Query, key, value, and output projection matrices\\
$\displaystyle \bW$ & Folded attention output matrix in the single-head theory\\
$\displaystyle \bW_1,\bW_2,\bW_3$ & SwiGLU parameter matrices\\
$\displaystyle \wlm$ & LM-head matrix\\
$\displaystyle \bX^H$ & Hidden representation before the LM head\\
\end{longtable}
\egroup
\vspace{0.25cm}

\centerline{\bf Functions}
\bgroup
\def\arraystretch{1.5}
\begin{longtable}{p{1.35in}p{4.35in}}
$\displaystyle \softmaxrow(\cdot)$ & Row-wise softmax\\
$\displaystyle \rms(\cdot)$ & RMSNorm applied row-wise\\
$\displaystyle \attn(\cdot)$ & Attention branch\\
$\displaystyle \ffn(\cdot)$ & Feed-forward branch\\
$\displaystyle \tsim{\bX}$ & Token similarity of $\bX$\\
$\displaystyle \tdiv{\bX}$ & Token diversity of $\bX$\\
$\displaystyle m(\rho)$ & Pairwise SwiGLU moment\\
$\displaystyle c(\by_k^l,\alpha_k^l)$ & One-step gradient contraction factor in layer $l$\\
\end{longtable}
\egroup
\vspace{0.25cm}

\section{Forward Similarity Amplification Details and Proofs}\label{app:forward_amplification_details}

This section develops the forward amplification part of the paper. It states the initialization-time assumptions, derives the one-step attention and SwiGLU updates, quantifies the RMSNorm correction, and explains why the ratio-of-expectations surrogate is accurate for the one-step change in token similarity.
\subsection{Technical Assumptions for the Forward Analysis}
This subsection collects the initialization-time assumptions used throughout the forward appendix. Under these assumptions, the attention and SwiGLU residual updates can be reduced to expected second moments, which leads to the closed-form one-step token-similarity formulas proved later in this section.

\begin{assumption}[Initialization-Time Structure for the Forward Analysis]
\label{assumption_on_X}
We use the following initialization-time structure in the forward analysis.
\begin{enumerate}[(i)]
    \item \textbf{Initialization Scheme.}
    All weight matrices $\bW, \bW_Q, \bW_K, \bW_1, \bW_2, \bW_3$ have i.i.d.\ Gaussian entries with mean $0$.

    \item \textbf{Input Structure.}
    For $k\in\{1,2\}$, each row of $\bX_k$ has squared norm $d$, namely
    \[
    \|(\bX_k)_{i,:}\|_2^2=d,\qquad \forall i\in\{1,\dots,n\}.
    \]
    Hence $\|\bX_k\|_F^2=nd$.

    \item \textbf{Equal-Correlation Closure.}
    For $k\in\{1,2\}$, let
    \[
    t_k:=\tsim{\bX_k},
    \qquad
    \rho_k:=\rho(t_k)=\frac{nt_k-1}{n-1}.
    \]
    We approximate the token Gram matrix by
    \begin{equation}
    \label{expect_xxt}
    \bX_k\bX_k^\top
    \approx
    d\bigl(\rho_k\,n\simmat + (1-\rho_k)\bI\bigr).
    \end{equation}
    Combined with part (ii), this means that each diagonal entry equals $d$, while each off-diagonal entry is approximated by $d\,\rho_k$.

    \item \textbf{Ratio-of-Expectations Surrogate.}
    For the sublayer outputs $\bY_k$ analyzed below, we approximate
    \[
    \expect{}{\tsim{\bY_k}}
    =
    \expect{}{\frac{\frobsq{\simmat\bY_k}}{\frobsq{\bY_k}}}
    \approx
    \frac{\expect{}{\frobsq{\simmat\bY_k}}}{\expect{}{\frobsq{\bY_k}}}.
    \]
\end{enumerate}
\end{assumption}

\paragraph{Remark.}
Parts (i)--(iii) specify the initialization-time structure used in the forward analysis. In particular, part (iii) gives a one-parameter equal-correlation closure for the token Gram matrix, with the parameter determined by the token similarity $t_k=\tsim{\bX_k}$.

\subsection{Proof of Theorem~\ref{theorem_forward_amplification}}
\label{app:stage1_one_step_expectations}

We now derive the attention-side one-step formula in Theorem~\ref{theorem_forward_amplification}. The derivation computes the expected numerator and denominator after one attention residual update, and applying the ratio-of-expectations surrogate gives the one-step token-similarity change. In addition to Assumption~\ref{assumption_on_X}, the attention derivation uses one extra approximation on the attention weights.
\begin{assumption}[Prefix-Averaging Approximation]
\label{assumption_prefix_averaging}
At initialization, we approximate the causal attention matrix $\bP$ by the prefix-averaging matrix $\bC_n$, where
\[
(\bC_n)_{i,j}
=
\begin{cases}
    1/i, & \text{if } i \geq j,\\
    0,   & \text{otherwise}.
\end{cases}
\]
\end{assumption}
\paragraph{Remark.}
At initialization, the prefix-averaging matrix $\bC_n$ serves as the deterministic surrogate used for the causal attention matrix in the forward appendix.

The attention layer with residual connection is:
\begin{align*}
    \bP &= \softmaxrow\left(\frac{(\bX_1\bW_Q)(\bX_1\bW_K)^\top}{\sqrt{d}} + \tlower{\bM}{causal}\right)\\
    \bY_1 &= \bX_1 + \bP\bX_1\bW
\end{align*}
where $(\tlower{\bM}{causal})_{i, j} = 0$ for $i \geq j$, and $-\infty$ otherwise. The matrix $\bP$ is the attention weight matrix. We can expand the denominator and numerator part of $\bY_1$ as :
\begin{align*}
    \frob{\simmat\bY_1}^2&= \frob{\simmat\bX_1}^2 + 2\inner{\simmat\bX_1}{\simmat\bP\bX_1\bW} + \frob{\simmat\bP\bX_1\bW}^2 \\
    \frob{\bY_1}^2 &= \frob{\bX_1}^2 + 2\inner{\bX_1}{\bP\bX_1\bW} + \frob{\bP\bX_1\bW}^2
\end{align*}
Using the zero-mean initialization of $\bW$, the cross terms vanish after taking expectation over $\bW$. We then evaluate the remaining quadratic terms with $\bX_1$ treated as fixed. Under Assumption~\ref{assumption_on_X}(iii), we use the closure
\[
\bX_1\bX_1^\top
\approx
d\bigl(\rho_1\,n\simmat + (1-\rho_1)\bI\bigr),
\qquad
\rho_1:=\frac{n\tsim{\bX_1}-1}{n-1},
\]
and under the prefix-averaging approximation we replace $\bP$ by $\bC_n$. This yields the closed-form evolution:
\begin{theorem}
\label{theorem_ts_attn}
    Under Assumptions~\ref{assumption_on_X} and~\ref{assumption_prefix_averaging}, we have
    \begin{align*}
        \expect{}{\frob{\simmat\bY_1}^2} &= nd\tsim{\bX_1} + d^2\sigma_{\bW}^2\left(\rho_1 n + (1 - \rho_1)(2 - H_n/n)\right) \\
        \expect{}{\frob{\bY_1}^2} &= nd + d^2\sigma_{\bW}^2\left(\rho_1 n + (1 - \rho_1)H_n\right)
    \end{align*}
    where $\rho_1 = \frac{n\tsim{\bX_1}-1}{n-1}$ and the harmonic series $H_n = \sum_{i = 1}^n\frac{1}{i}$. 
\end{theorem}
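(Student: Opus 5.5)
The plan is to compute both expectations directly from the expansions displayed just before the statement. The cross terms $2\inner{\simmat\bX_1}{\simmat\bP\bX_1\bW}$ and $2\inner{\bX_1}{\bP\bX_1\bW}$ are linear in $\bW$, so they vanish in expectation because $\bW$ has zero mean. After replacing $\bP$ by $\bC_n$ (Assumption~\ref{assumption_prefix_averaging}), only the zeroth-order and quadratic terms remain. The zeroth-order terms are immediate: $\frobsq{\bX_1}=nd$ by Assumption~\ref{assumption_on_X}(ii), and $\frobsq{\simmat\bX_1}=\tsim{\bX_1}\frobsq{\bX_1}=nd\,\tsim{\bX_1}$ by Definition~\ref{definition_ts}.

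For the quadratic terms I will use one Gaussian identity. For any fixed $\bA\in\bbR^{n\times d}$ and $\bW$ with i.i.d.\ entries of variance $\sigma_{\bW}^2$, each entry $(\bA\bW)_{ij}$ has variance $\sigma_{\bW}^2\|\bA_{i,:}\|^2$. Summing over the $d$ columns gives
\[
\expect{}{\frobsq{\bA\bW}}=d\sigma_{\bW}^2\,\tr(\bA\bA^\top).
\]
Taking $\bA=\bM\bX_1$ with $\bM\in\{\bC_n,\simmat\bC_n\}$ and inserting the closure \eqref{expect_xxt} gives
\[
\expect{}{\frobsq{\bM\bX_1\bW}}=d^2\sigma_{\bW}^2\bigl(\rho_1 n\,\tr(\bM\simmat\bM^\top)+(1-\rho_1)\frobsq{\bM}\bigr).
\]
It then remains to evaluate two traces and two Frobenius norms of explicit matrices.

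The two traces come from the row-stochasticity of $\bC_n$, namely $\bC_n\done_n=\done_n$. This gives $\bC_n\simmat\bC_n^\top=\simmat$, and therefore $\tr(\bC_n\simmat\bC_n^\top)=1$. Similarly, $\simmat\bC_n\simmat\bC_n^\top\simmat=\simmat$, which also has trace $1$. So the $\rho_1$ part contributes $\rho_1 n$ in both expectations.

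For the denominator, $\frobsq{\bC_n}=\sum_{i=1}^n i\cdot i^{-2}=H_n$. For the numerator, every row of $\simmat\bC_n$ equals $\frac1n\done_n^\top\bC_n$. This row has $j$-th entry $\frac1n c_j$, where $c_j=\sum_{i\ge j}1/i$ is the $j$-th column sum of $\bC_n$. Since the row is repeated $n$ times, $\frobsq{\simmat\bC_n}=\frac1n\sum_j c_j^2$.

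The one step needing care is the identity $\sum_{j=1}^n c_j^2=2n-H_n$. I will prove it by exchanging the order of summation:
\[
\sum_j c_j^2=\sum_{i,i'}\frac{\min(i,i')}{ii'}=\sum_i\frac1i+2\sum_{i<i'}\frac{1}{i'}=H_n+2\sum_{i'=1}^n\frac{i'-1}{i'}=2n-H_n.
\]
This gives $\frobsq{\simmat\bC_n}=2-H_n/n$. Assembling the pieces yields exactly the two displayed formulas in Theorem~\ref{theorem_ts_attn}.
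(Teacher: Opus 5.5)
Your proposal is correct and follows essentially the same route as the paper. You kill the cross terms via $\mathbb{E}[\bW]=0$, reduce the quadratic terms using $\mathbb{E}[\bW\bW^\top]=d\sigma_{\bW}^2\bI$ together with the equal-correlation closure, use $\bC_n\done_n=\done_n$ to get the $\rho_1 n$ contribution, and evaluate $\frobsq{\bC_n}=H_n$ and $\frobsq{\simmat\bC_n}=2-H_n/n$ by the same exchange of summation, where your $\min(i,i')/(ii')$ form is just a tidier way of writing the paper's double-sum expansion.
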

\begin{proof}
     We first consider the more complicated term, $\expect{}{\frobsq{\simmat\bY}}$, by expanding it directly
\begin{align*}
    \expect{}{\frob{\simmat\bY}^2} &= \expect{}{\frob{\simmat\bX + \simmat\bP\bX\bW}^2} \\
    &= \expect{}{\frob{\simmat\bX}^2} + 2\expect{}{\inner{\simmat\bX}{\simmat\bP\bX\bW}} + \expect{}{\frob{\simmat\bP\bX\bW}^2}\\
    &= \expect{}{\frob{\simmat\bX}^2} + \expect{}{\frob{\simmat\bP\bX\bW}^2}
\end{align*}
The last equality follows from 
\begin{align*}
    \expect{}{\inner{\simmat\bX}{\simmat\bP\bX\bW}} &= \expect{}{\tr(\bX^T\simmat\bP\bX\bW)}&& \\
    &=\tr(\expect{}{\bX^T\simmat\bP\bX}\expect{}{\bW}) =0 \qquad \text{(By (i) of Assumption \ref{assumption_on_X}, $\expect{}{\bW} = 0$})
\end{align*}
We already have $\frob{\simmat\bX}^2 = \tsim{\bX}\frob{\bX}^2 = nd \tsim{\bX}$ from the definition of $\tsim{\bX}$. Let $\rho := \frac{n\tsim{\bX}-1}{n-1}$. For the term $\expect{}{\frob{\simmat\bP\bX\bW}^2}$, using $\expect{}{\bW\bW^T}=d\sigma_{\bW}^2\bI$, we obtain
    \begin{align*}
    \expect{\bW_Q, \bW_K, \bW}{\frob{\simmat\bP\bX\bW}^2} &= \expect{\bW_Q, \bW_K, \bW}{\tr(\simmat\bP\bX\bW\bW^T\bX^T\bP^T\simmat)} \\
    &= \expect{\bW_Q, \bW_K}{\tr(\simmat\bP\bX\expect{\bW}{\bW\bW^T}\bX^T\bP^T\simmat)} \\
    &= d\sigma_{\bW}^2\expect{\bW_Q, \bW_K}{\tr(\simmat\bP\bX\bX^T\bP^T\simmat)} \\
    &= d\sigma_{\bW}^2\tr(\simmat\expect{\bW_Q, \bW_K}{\bP\bX\bX^T\bP^T}\simmat)
\end{align*}

We then need to compute $\expect{}{\bP\bX\bX^T\bP^T}$. Under Assumption~\ref{assumption_prefix_averaging}, we replace $\bP$ by $\bC_n$ and obtain
\[
\expect{}{\bP\bX\bX^T\bP^T}
\approx
\bC_n\,\bX\bX^T\,\bC_n^T.
\]
Applying Assumption~\ref{assumption_on_X}(iii) then gives
\begin{align*}
    \expect{}{\bP\bX\bX^T\bP^T}
    &\approx
    \bC_n\,\bX\bX^T\,\bC_n^T\\
    &\approx
    \bC_n \bigl(d(\rho\, n \simmat + (1-\rho)\bI)\bigr)\bC_n^T \\
    &= d\rho\, n \simmat + d(1-\rho)\bC_n\bC_n^T
\end{align*}
and
\begin{align*}
    \tr(\simmat\expect{}{\bP\bX\bX^T\bP^T}\simmat) &= \tr(d\rho\, n \simmat + d(1-\rho)\simmat\bC_n\bC_n^T\simmat) \\
    &= d\rho\, n + d(1-\rho) \frob{\simmat \bC_n}^2
\end{align*}

Denote the harmonic series as $H_n = \sum_{k = 1}^n \frac{1}{k}$, we can express $\frob{\simmat\bC_n}$ as follows
\begin{align*}
    \frob{\simmat\bC_n}^2
    &= \frac{1}{n}\sum_{i=1}^n\left(\sum_{j=i}^n \frac{1}{j}\right)^2 \\
    &= \frac{1}{n}\sum_{i=1}^n
    \left(
        \sum_{j=i}^n\frac{1}{j^2}
        +2\sum_{i\le j<k\le n}\frac{1}{jk}
    \right) \\
    &= \frac{1}{n}
    \left(
        \sum_{j=1}^n \frac{j}{j^2}
        +2\sum_{1\le j<k\le n}\frac{j}{jk}
    \right) \\
    &= \frac{1}{n}\left(H_n+2\sum_{k=1}^n\frac{k-1}{k}\right) \\
    &= 2 - \frac{1}{n}\sum_{i = 1}^n \frac{1}{i} \\
    &= 2 - \frac{1}{n}H_n
\end{align*}
Plug $\frob{\simmat\bC_n}^2 = 2 - \frac{1}{n}H_n$ back to previous formula, we get 
\begin{align*}
    \expect{}{\frob{\simmat\bY}^2} &= nd\tsim{\bX} + d^2\sigma_{\bW}^2\left(\rho n + (1 - \rho)(2 - H_n/n)\right)
\end{align*}
Following similar line of argument, we have the following formulas for $\expect{}{\frob{\bY}^2}$

\begin{align*}
    \expect{}{\frob{\bY}^2} &= \expect{}{\frob{\bX}^2} + \expect{}{\frob{\bP\bX\bW}^2} \\
    \expect{\bW_Q, \bW_K, \bW}{\frob{\bP\bX\bW}^2} &= d\sigma_{\bW}^2\tr(\expect{\bW_Q, \bW_K}{\bP\bX\bX^T\bP^T}) \\
    \tr(\expect{\bW_Q, \bW_K}{\bP\bX\bX^T\bP^T}) &= d\rho\, n + d(1 - \rho) \frob{\bC_n}^2
\end{align*}
We can express $\frob{\bC_n}^2$ as
\begin{align*}
    \frob{\bC_n}^2 &= \sum_{i = 1}^n i \times \frac{1}{i^2} = H_n
\end{align*}
Plug $\frob{\bC_n}^2 = H_n$ back into previous derivation we get
\begin{align*}
    \expect{}{\frob{\bY}^2} &= nd + d^2\sigma_{\bW}^2\left(\rho n + (1 - \rho)H_n\right)
\end{align*}
For $\expect{}{\tsim{\bY}}$, plug in our derivation for $\expect{}{\frob{\bY}^2}, \expect{}{\frob{\simmat\bY}^2}$ respectively and we get the final formula. The harmonic series can be approximated by
\begin{align*}
    H_n \approx \log n + \gamma_{\mathrm{euler}} + \frac{1}{2n} - \frac{1}{12n^2}
\end{align*}
where $\gamma_{\mathrm{euler}} \approx 0.5772$ is the Euler–Mascheroni constant. For neural network $n$ is generally greater than 100, which makes $H_n \approx  \log n + \gamma_{\mathrm{euler}}$ a good approximation.
\end{proof}

\subsection{Proof of Corollary~\ref{theorem_attention_deescalation}}
\label{app:proof_attention_deescalation}
 We now prove Corollary~\ref{theorem_attention_deescalation}, which isolates the role of the prefix-averaging component $\alpha\bC_n$. The calculation shows how removing this component weakens the one-step similarity increase for the attention sublayer.
\begin{proof}
Define
\[
\bP_{de}:=\bP-\alpha\bC_n,\qquad
\bY_{de}=\bX_1+\bP_{de}\bX_1\bW,\qquad
t:=\tsim{\bX_1},
    \qquad
    \rho_1:=\frac{nt-1}{n-1}.
\]
As in the proof of the attention forward theorem, cross terms vanish in expectation because $\expect{}{\bW}=0$, so it suffices to compute second moments of the attention branch.

Using the same prefix-averaging replacement,
\[
\expect{}{\bP_{de}\bX_1\bX_1^\top\bP_{de}^\top}
\approx
(1-\alpha)^2 \bC_n \bX_1\bX_1^\top \bC_n^\top,
\]
as above,
\begin{align*}
\expect{}{\bP_{de}\bX_1\bX_1^\top\bP_{de}^\top}
&\approx
(1-\alpha)^2 \bC_n \bX_1\bX_1^\top \bC_n^\top \\
&=
(1-\alpha)^2 d\,\bC_n\bigl(\rho_1\done_n\done_n^\top+(1-\rho_1)\bI\bigr)\bC_n^\top.
\end{align*}
Let $\lambda_\alpha:=(1-\alpha)^2$. Repeating the trace computations from Theorem \ref{theorem_ts_attn} gives
\begin{align*}
\expect{}{\frobsq{\simmat\bY_{de}}}
&= nd\,t+d\sigma_{\bW}^2\lambda_\alpha\!\left(\rho_1 n+(1-\rho_1)\left(2-\frac{H_n}{n}\right)\right),\\
\expect{}{\frobsq{\bY_{de}}}
&= nd+d\sigma_{\bW}^2\lambda_\alpha\!\left(\rho_1 n+(1-\rho_1)H_n\right).
\end{align*}
Dividing numerator and denominator by $nd$, and using the Post-Norm identity $\frobsq{\bX_1}=nd$ so that the main-text attention-strength scalar becomes
\[
s=\frac{nd^2\sigma_{\bW}^2}{\frobsq{\bX_1}}=d\sigma_{\bW}^2,
\]
we get
\[
\expect{}{\tsim{\bY_{de}}}
\approx
\frac{
t+\lambda_\alpha s\left[\rho_1+\frac{1-\rho_1}{n}\left(2-\frac{H_n}{n}\right)\right]
}{
1+\lambda_\alpha s\left[\rho_1+\frac{1-\rho_1}{n}H_n\right]
}.
\]
Using $\frac{a+b}{1+c}=a+\frac{b-ac}{1+c}$, this becomes
\[
\expect{}{\tsim{\bY_{de}}}
\approx
t+\frac{
\lambda_\alpha s\left(
\rho_1(1-t)+\frac{1-\rho_1}{n}\left(2-\frac{H_n}{n}-tH_n\right)
\right)
}{
1+\lambda_\alpha s\left[\rho_1+\frac{1-\rho_1}{n}H_n\right]
}.
\]
Substituting $\rho_1=\frac{nt-1}{n-1}$ gives
\begin{align*}
\rho_1(1-t)+\frac{1-\rho_1}{n}\left(2-\frac{H_n}{n}-tH_n\right)
&=
\frac{(1-t)(n-H_n)(nt+1)}{n(n-1)}
=
f_1(t), \\
\rho_1+\frac{1-\rho_1}{n}H_n
&=
\frac{(n-H_n)t+H_n-1}{n-1}
=
f_2(t),
\end{align*}
which are exactly the two functions defined in Theorem~\ref{theorem_forward_amplification}. Therefore,
\[
\expect{}{\tsim{\bY_{de}}}
\approx
R(\lambda_\alpha)
:=
t+\frac{\lambda_\alpha s f_1(t)}{1+\lambda_\alpha s f_2(t)},
\]
To derive monotonicity, differentiate w.r.t. $\lambda_\alpha$:
\[
\frac{dR}{d\lambda_\alpha}
=
\frac{s f_1(t)}{(1+\lambda_\alpha s f_2(t))^2}.
\]
For $t\in[1/n,1)$, we have $f_1(t)>0$, so $\frac{dR}{d\lambda_\alpha}>0$. Since
\[
\frac{d\lambda_\alpha}{d\alpha}=-2(1-\alpha)\le 0,
\]
we obtain $\frac{dR}{d\alpha}<0$ for $\alpha\in[0,1)$ in this regime, so larger $\alpha$ reduces expected similarity. Also,
\[
\lim_{\alpha\to 1}\expect{}{\tsim{\bY_{de}}}
\approx
\lim_{\lambda_\alpha\to 0}R(\lambda_\alpha)=t.
\]
\end{proof}

\subsection{Proof of Theorem~\ref{theorem_swiglu_pairwise}}
\label{app:proof_swiglu_forward}

\begin{proof}
We next derive the SwiGLU one-step correction in Theorem~\ref{theorem_swiglu_pairwise}. Let
\[
\bR_s:=\silu(\bX_2\bW_1)\odot(\bX_2\bW_3),
\qquad
\bY_2=\bX_2+\bR_s\bW_2.
\]
As in the attention case, we compute the expected numerator and denominator separately so the FFN contribution can be compared directly with the attention-side increment. Since $\bW_1,\bW_3,\bW_2$ are all mean-zero and independent of $\bX_2$, we have
\begin{align*}
\expect{}{\inner{\bX_2}{\bR_s\bW_2}}
&= \expect{}{\tr(\bX_2^\top\bR_s\bW_2)}
= \tr\!\bigl(\expect{}{\bX_2^\top\bR_s}\expect{}{\bW_2}\bigr) = 0.
\end{align*}
Hence,
\begin{align*}
\expect{}{\frobsq{\simmat\bY_2}} &= \expect{}{\frobsq{\simmat\bX_2}}
  + \expect{}{\frobsq{\simmat\bR_s\bW_2}}, \\
\expect{}{\frobsq{\bY_2}} &= \expect{}{\frobsq{\bX_2}}
  + \expect{}{\frobsq{\bR_s\bW_2}}.
\end{align*}

It therefore remains to compute the branch second moments. Since $\bW_2$ is independent of $(\bX_2,\bW_1,\bW_3)$ and $\expect{}{\bW_2\bW_2^\top}=d\sigma_{\bW_2}^2\bI$,
\begin{align*}
\expect{}{\frobsq{\simmat\bR_s\bW_2}}
&= d\sigma_{\bW_2}^2\,\tr\!\bigl(\simmat\expect{}{\bR_s\bR_s^\top}\simmat\bigr), \\
\expect{}{\frobsq{\bR_s\bW_2}}
&= d\sigma_{\bW_2}^2\,\tr\!\bigl(\expect{}{\bR_s\bR_s^\top}\bigr).
\end{align*}

Let $\bA=\bX_2\bW_1$ and $\bB=\bX_2\bW_3$. Since $\bW_1\perp\bW_3$, for any $k\in[\dff]$:
\begin{align*}
(\bR_s)_{i,k} &= \silu(\bA_{i,k})\cdot \bB_{i,k}.
\end{align*}
For $i\neq j$, using independence of $\bW_1$ and $\bW_3$:
\begin{align*}
\expect{}{(\bR_s)_{i,k}(\bR_s)_{j,k}}
&= \expect{}{\silu(\bA_{i,k})\silu(\bA_{j,k})}\cdot\expect{}{\bB_{i,k}\bB_{j,k}}.
\end{align*}
Under Assumption \ref{assumption_on_X}, $(\bA_{i,k},\bA_{j,k})$ is bivariate Gaussian with marginal
variance $\sigma_{\bW_1}^2 d$ and correlation $\rho_2$, and
$\expect{}{\bB_{i,k}\bB_{j,k}}=d\sigma_{\bW_3}^2 \rho_2$ for $i\neq j$, $\sigma_{\bW_3}^2 d$ for $i=j$.
Define
\[
m(\rho) := \expect{}{\silu(u)\silu(v)},
\qquad (u,v)\sim\caN\!\left(0,\sigma_{\bW_1}^2 d\begin{pmatrix}1&\rho\\\rho&1\end{pmatrix}\right).
\]
Summing over $k$ under the equal-correlation closure gives
\begin{align*}
\expect{}{\bR_s\bR_s^\top}_{ij}
&= \dff\cdot m(\rho_2)\cdot\sigma_{\bW_3}^2 d\,\rho_2,
\quad i\neq j, \\
\expect{}{\bR_s\bR_s^\top}_{ii}
&= \dff\cdot m(1)\cdot\sigma_{\bW_3}^2 d. \\
\expect{}{\bR_s\bR_s^\top}
&= \dff\sigma_{\bW_3}^2 d
\bigl(n\,m(\rho_2)\rho_2\,\simmat
  + (m(1)-m(\rho_2)\rho_2)\bI\bigr).
\end{align*}

Using $\tr(\simmat\bM\simmat)=\frac{1}{n}\sum_{i,j}M_{ij}$ and $\tr(\bM)=\sum_i M_{ii}$, we obtain
\begin{align*}
\tr\!\bigl(\simmat\expect{}{\bR_s\bR_s^\top}\simmat\bigr)
&= \dff\sigma_{\bW_3}^2 d
\bigl(m(1)+(n-1)m(\rho_2)\rho_2\bigr) \\
&= \dff\sigma_{\bW_3}^2 d\,h_n(\rho_2), \\
\tr\!\bigl(\expect{}{\bR_s\bR_s^\top}\bigr)
&= n\dff\sigma_{\bW_3}^2 d\,m(1),
\end{align*}
where
\[
h_n(\rho_2):=m(1)+(n-1)m(\rho_2)\rho_2.
\]
The important asymmetry is that the denominator trace depends only on diagonal terms, whereas the equal-correlation closure only enters through the off-diagonal contribution to the projected numerator trace. Substituting these traces back into the numerator and denominator expressions, and using $\expect{}{\frobsq{\simmat\bX_2}}=nd\,\tsim{\bX_2}$ and $\expect{}{\frobsq{\bX_2}}=nd$, we obtain
\begin{align*}
	\expect{}{\frobsq{\simmat\bY_2}}
	&= nd\,\tsim{\bX_2}
    + \dff d^2\sigma_{\bW_3}^2\sigma_{\bW_2}^2\,h_n(\rho_2), \\
	\expect{}{\frobsq{\bY_2}}
	&= nd + n\dff d^2\sigma_{\bW_3}^2\sigma_{\bW_2}^2\,m(1).
	\end{align*}
Therefore, using the ratio-of-expectations surrogate and writing $t_2:=\tsim{\bX_2}$, we obtain
\[
\expect{}{\tsim{\bY_2}}
\approx
\frac{
t_2
+
\frac{d\tlower{d}{ff}\sigma_{\bW_2}^2\sigma_{\bW_3}^2}{n}
\bigl(m(1)+(nt_2-1)m(\rho_2)\bigr)
}
{1+d\tlower{d}{ff}\sigma_{\bW_2}^2\sigma_{\bW_3}^2\,m(1)},
\]
where the coefficient is denoted $\xi := d\tlower{d}{ff}\sigma_{\bW_2}^2\sigma_{\bW_3}^2$ in the main text.

\begin{figure}
    \centering
    \includegraphics[width=0.4\textwidth]{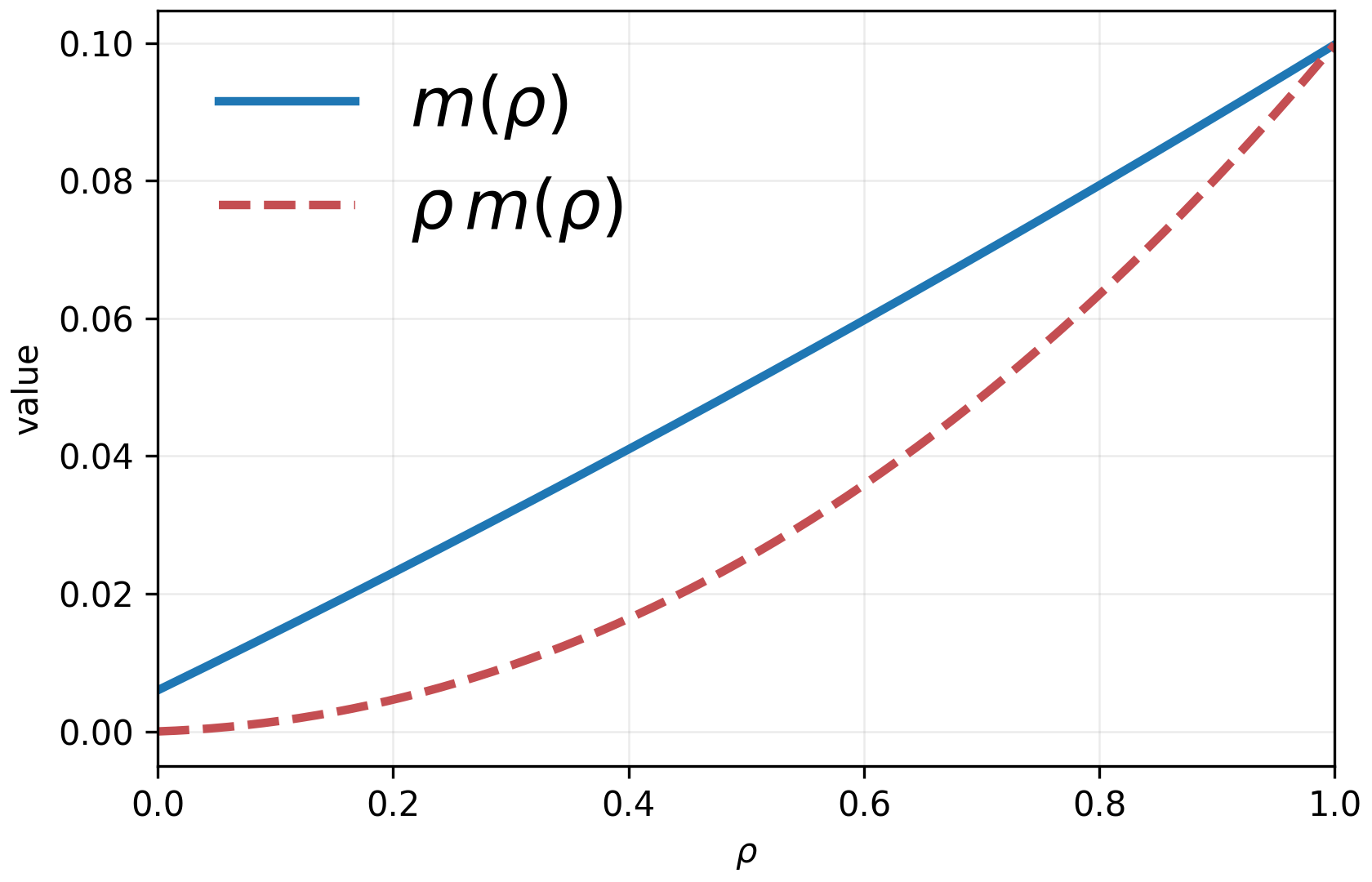}
    \caption{$m(\rho)$ under standard initialization.}
    \label{fig:swiglu_mrho_numerical}
\end{figure}
Rewriting the same expression in terms of $t_2$ gives Eq.~\eqref{eq:swiglu_scalar_summary}. Without that specialization, the same derivation already shows that only the projected numerator term changes, while the denominator term stays the same.
\end{proof}

\paragraph{Numerical evaluation of $m(\rho)$.}
To give a better understanding of the numerical properties of the SwiGLU correction, we evaluate $m(\rho)$ numerically. We can see that $m(\rho)$ is approximately linear and increasing on $\rho\in[0,1]$, and $\rho m(\rho)$ is also increasing on $\rho\in[0,1]$. Numerically, $m(0)\approx 0.006$ and $m(1)\approx 0.100$ under current variance setting



\subsection{Deterministic Control of the RMSNorm Forward Correction}
We now bound the change from $\tsim{\bY}$ to $\tsim{\bT\bY}$ for a fixed diagonal matrix $\bT$. The bound is controlled by
\[
\eta := \frac{\|\bT-\mu\bI\|_2}{\mu},
\]
which measures how far the RMS factors are from their mean scalar $\mu$. Let $\bT := \bGamma(\bY)$ be diagonal, with $\bT_{i,i} = 1/\sqrt{\frac{1}{d}\|\bY_{i,:}\|_2^2 + \epsilon}$. Then RMS normalization is $\bZ = \bT\bY$. In this subsection only, we keep the numerical stabilizer $\epsilon$ so that the row-wise scaling factors remain well-defined even when some row $\bY_{i,:}$ is zero.
On the FFN branch, $\mathbb{E}\|\bY_{i,:}\|_2^2$ is the same for every position $i$. On the attention branch, the next lemma writes $\mathbb{E}\|\bY_{i,:}\|_2^2$ as $\alpha+\beta/i$, so the expected squared row norm varies with $i$ through the $1/i$ term.

\paragraph{Token Similarity after RMS.}

We begin with the SwiGLU branch, where $\mathbb{E}\|\bY_{i,:}\|_2^2$ is the same for every position $i$.

\begin{lemma} \label{lemma_ffn_row_homogenity}
Under Assumption \ref{assumption_on_X}, the expected squared row norms are constant across positions:
\begin{align*}
    \expect{}{\|\bY_{i,:}\|_2^2}
    =
    d + \dff d^2 \sigma_{\bW_3}^2\sigma_{\bW_2}^2\,m(1),
    \qquad \text{for all } i \in [n],
\end{align*}
\end{lemma}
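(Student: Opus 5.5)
We work at the level of a single row and reuse the second-moment calculation from the proof of Theorem~\ref{theorem_swiglu_pairwise}. Here $\bY = \bX_2 + \bR_s\bW_2$ with $\bR_s = \silu(\bX_2\bW_1)\odot(\bX_2\bW_3)$. For a fixed position $i$ we expand
\[
\|\bY_{i,:}\|_2^2 = \|(\bX_2)_{i,:}\|_2^2 + 2\inner{(\bX_2)_{i,:}}{(\bR_s)_{i,:}\bW_2} + \|(\bR_s)_{i,:}\bW_2\|_2^2 .
\]
By Assumption~\ref{assumption_on_X}(ii), the first term equals $d$ deterministically.

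The cross term has zero expectation. The reason is that $\bW_2$ is mean-zero and independent of $(\bX_2,\bW_1,\bW_3)$, so conditioning on $\bW_1,\bW_3$ kills it, exactly as in the cross-term argument of Theorem~\ref{theorem_swiglu_pairwise}.

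For the quadratic term, we condition on $\bR_s$ and use $\expect{}{\bW_2\bW_2^\top} = d\sigma_{\bW_2}^2\bI$. This gives $d\sigma_{\bW_2}^2\,\expect{}{\|(\bR_s)_{i,:}\|_2^2}$, which is just the diagonal entry $\expect{}{\bR_s\bR_s^\top}_{ii}$ computed earlier.

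That diagonal entry must be checked to be independent of $i$. Each coordinate gives $\expect{}{\silu(\bA_{i,k})^2\bB_{i,k}^2} = \expect{}{\silu(\bA_{i,k})^2}\,\expect{}{\bB_{i,k}^2}$, using independence of $\bW_1$ and $\bW_3$. Because every row of $\bX_2$ has squared norm exactly $d$, $\bA_{i,k} = (\bX_2)_{i,:}(\bW_1)_{:,k}$ is Gaussian with variance $\sigma_{\bW_1}^2 d$. Likewise $\bB_{i,k}$ has variance $\sigma_{\bW_3}^2 d$. Both variances depend only on the row norm, not on $i$ or on the correlations with other rows.

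Hence $\expect{}{\silu(\bA_{i,k})^2} = m(1)$ by the definition of $m$ with $\rho = 1$. Summing over the $\dff$ hidden units gives $\expect{}{\|(\bR_s)_{i,:}\|_2^2} = \dff\, m(1)\,\sigma_{\bW_3}^2 d$. Multiplying by $d\sigma_{\bW_2}^2$ and adding $d$ yields the claimed formula, and the formula contains nothing that depends on $i$.

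The only delicate point is to argue the row-wise Gaussianity directly from Assumption~\ref{assumption_on_X}(i)--(ii). The equal-correlation closure (iii) should not be invoked here: the diagonal Gram entries are exactly $d$, whereas the closure is only an approximation for the off-diagonal entries. Using only (i)--(ii) makes the homogeneity exact rather than approximate. This contrasts with the attention branch, where the $1/i$ weights of $\bC_n$ break the symmetry across positions.
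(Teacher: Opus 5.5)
Your proposal is correct and follows the paper's proof essentially step by step. Both use the same three-term row expansion, kill the cross term with $\expect{}{\bW_2}=0$ and independence, and reduce the quadratic term via $\expect{}{\bW_2\bW_2^\top}=d\sigma_{\bW_2}^2\bI$ to the diagonal entry $\dff d\sigma_{\bW_3}^2 m(1)$. The only difference is that the paper cites this diagonal entry from the proof of Theorem~\ref{theorem_swiglu_pairwise}, whereas you re-derive it and correctly note that it uses only parts (i)--(ii) of Assumption~\ref{assumption_on_X}, so the row-homogeneity is exact rather than relying on the equal-correlation closure.
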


Lemma~\ref{lemma_ffn_row_homogenity} shows that $\mathbb{E}\|\bY_{i,:}\|_2^2$ does not depend on the position $i$ on the SwiGLU branch. We next turn to the attention branch, where causal masking makes the row norms position-dependent.

Self-attention outputs have position-dependent row norms because later tokens attend to more previous tokens. The next lemma gives the form of $\mathbb{E}\|\bY_{i,:}\|_2^2$ across positions.

\begin{lemma} \label{lem:attn-row-norm}
For $\bY = \bX + \bP\bX\bW$ under Assumption \ref{assumption_on_X}, 
the expected squared row norms follow:
\[
\mathbb{E}\bigl[\|\bY_{i,:}\|_2^2\bigr] = \alpha + \frac{\beta}{i}, 
\quad \text{where} \quad 
\alpha := d + d^2\rho\,\sigma_{\bW}^2, \quad \beta := d^2\sigma_{\bW}^2(1-\rho), \quad \rho:=\frac{n\tsim{\bX}-1}{n-1}.
\]

\end{lemma}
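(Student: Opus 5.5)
Since everything reduces to computing $\expect{}{\|\bY_{i,:}\|_2^2}$ row by row, I will follow the same route as the proof of Theorem~\ref{theorem_ts_attn}, but keep the $i$-th diagonal entry of the relevant matrices instead of tracing over all rows. Write $\bY_{i,:} = \bX_{i,:} + (\bP\bX\bW)_{i,:}$. Then
\[
\|\bY_{i,:}\|_2^2 = \|\bX_{i,:}\|_2^2 + 2\inner{\bX_{i,:}}{(\bP\bX\bW)_{i,:}} + \|(\bP\bX\bW)_{i,:}\|_2^2 .
\]
Assumption~\ref{assumption_on_X}(ii) gives $\|\bX_{i,:}\|_2^2 = d$. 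The cross term has zero expectation because $\bW$ is mean-zero and independent of $\bX$ (and of $\bP$ once we use the prefix-averaging surrogate). So the whole task is the branch term.

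For the branch term, I first condition on $\bP$ and $\bX$ and take the expectation over $\bW$, using $\expect{}{\bW\bW^\top} = d\sigma_{\bW}^2\bI$. This gives
\[
\expect{}{\|(\bP\bX\bW)_{i,:}\|_2^2} = d\sigma_{\bW}^2\,\bigl(\bP\bX\bX^\top\bP^\top\bigr)_{i,i}.
\]
Under Assumption~\ref{assumption_prefix_averaging} I replace $\bP$ by $\bC_n$. Under Assumption~\ref{assumption_on_X}(iii) I replace $\bX\bX^\top$ by $d(\rho\,\done_n\done_n^\top + (1-\rho)\bI)$, where $\rho = (n\tsim{\bX}-1)/(n-1)$. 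This yields
\[
\bigl(\bC_n\bX\bX^\top\bC_n^\top\bigr)_{i,i} \approx d\rho\,\bigl((\bC_n\done_n)_i\bigr)^2 + d(1-\rho)\,\|(\bC_n)_{i,:}\|_2^2 .
\]

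Two facts about $\bC_n$ finish the computation. First, $\bC_n$ is row-stochastic, so $(\bC_n\done_n)_i = 1$. Second, row $i$ of $\bC_n$ has $i$ entries equal to $1/i$, so $\|(\bC_n)_{i,:}\|_2^2 = i\cdot 1/i^2 = 1/i$. Substituting,
\[
\expect{}{\|\bY_{i,:}\|_2^2} = d + d^2\sigma_{\bW}^2\rho + \frac{d^2\sigma_{\bW}^2(1-\rho)}{i},
\]
which is exactly $\alpha + \beta/i$. As a consistency check, I will sum over $i$ and confirm that the total matches $\expect{}{\frobsq{\bY}}$ in Theorem~\ref{theorem_ts_attn}, since $\sum_i 1/i = H_n$.

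There is no genuine obstacle here; the argument is the diagonal version of the trace computation already done in Theorem~\ref{theorem_ts_attn}. The one point that needs care is how the two approximations are stated. The identity is exact given the surrogates $\bP \approx \bC_n$ and the equal-correlation closure, and these replace $\bP$ and $\bX\bX^\top$ inside the expectation, just as in the earlier proof. I will state the result in that sense. I will also note that treating $\bP$ as independent of $\bW$ is justified because $\bP$ depends only on $\bW_Q$ and $\bW_K$.
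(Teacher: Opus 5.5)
Your proposal is correct and follows essentially the same route as the paper. Both expand the row norm, drop the cross term using $\mathbb{E}[\bW]=0$, and evaluate the branch term as $d\sigma_{\bW}^2(\bC_n\bX\bX^\top\bC_n^\top)_{i,i} = d^2\sigma_{\bW}^2\bigl(\rho + (1-\rho)/i\bigr)$ under the equal-correlation closure and the prefix-averaging surrogate. You are also right that the lemma tacitly needs the prefix-averaging assumption in addition to Assumption~\ref{assumption_on_X}, which the paper's proof invokes only implicitly through the ``prefix-averaging profile''.
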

Lemma~\ref{lemma_ffn_row_homogenity} and Lemma~\ref{lem:attn-row-norm} give formulas for $\mathbb{E}\|\bY_{i,:}\|_2^2$. Theorem~\ref{theorem_attention_rms} then bounds $\tsim{\bT\bY}$ directly in terms of the diagonal matrix $\bT$.

\begin{theorem}
\label{theorem_attention_rms}
Let $\bT$ be any diagonal matrix with positive diagonal entries, and define
\[
\mu := \frac{1}{n}\tr(\bT),
\qquad
\eta := \frac{\|\bT-\mu\bI\|_2}{\mu}.
\]
For any nonzero $\bY \in \mathbb{R}^{n\times d}$, let $t := \tsim{\bY}$. If $\eta < 1$, then
\[
\frac{\bigl(\max\{\sqrt{t}-\eta,0\}\bigr)^2}{(1+\eta)^2}
\le
\tsim{\bT\bY}
\le
\frac{(\sqrt{t}+\eta)^2}{(1-\eta)^2}.
\]
In particular, if $\eta \ll \sqrt{t}$, then the bounds above force $\tsim{\bT\bY}/\tsim{\bY}$ to stay close to $1$.
\end{theorem}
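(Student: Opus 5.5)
Write $\bT = \mu\bI + \bE$ with $\bE := \bT-\mu\bI$, so that $\|\bE\|_2 = \eta\mu$. Then
\[
\bT\bY = \mu\bY + \bE\bY .
\]
The plan is to control the two Frobenius norms in $\tsim{\bT\bY} = \frobsq{\simmat\bT\bY}/\frobsq{\bT\bY}$ separately, using the triangle inequality and $\frob{\bA\bB}\le\|\bA\|_2\frob{\bB}$. Throughout, $\frob{\bY}$ serves as the common reference scale. By definition, $\frob{\simmat\bY} = \sqrt{t}\,\frob{\bY}$.

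\textbf{Denominator.} From $\bT\bY = \mu\bY + \bE\bY$ and $\frob{\bE\bY}\le \eta\mu\frob{\bY}$ we get
\[
(1-\eta)\mu\frob{\bY} \le \frob{\bT\bY} \le (1+\eta)\mu\frob{\bY}.
\]
The lower bound is positive precisely because $\eta<1$. This is the only place that hypothesis is used, and it guarantees $\bT\bY\neq 0$.

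\textbf{Numerator.} Here $\simmat\bT\bY = \mu\simmat\bY + \simmat\bE\bY$. Since $\simmat$ is an orthogonal projection, $\|\simmat\|_2 = 1$, and hence
\[
\frob{\simmat\bE\bY}\le \|\simmat\|_2\,\|\bE\|_2\,\frob{\bY} \le \eta\mu\frob{\bY}.
\]
The triangle inequality then gives
\[
\mu\frob{\bY}\,(\sqrt{t}-\eta) \le \frob{\simmat\bT\bY} \le \mu\frob{\bY}\,(\sqrt{t}+\eta).
\]
Because a norm is nonnegative, the lower bound can be replaced by $\mu\frob{\bY}\max\{\sqrt{t}-\eta,0\}$.

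\textbf{Combining.} For the upper bound on $\tsim{\bT\bY}$, divide the squared upper bound of the numerator by the squared lower bound of the denominator. For the lower bound, divide the squared lower bound of the numerator by the squared upper bound of the denominator. In both cases the factors $\mu^2\frob{\bY}^2$ cancel, which leaves exactly the two stated inequalities. Note that $\mu>0$ because $\bT$ has positive diagonal entries.

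For the final remark, suppose $\eta\ll\sqrt{t}$. Dividing both bounds by $t$, they become
\[
\frac{(1\mp\eta/\sqrt{t})^2}{(1\pm\eta)^2} = 1 + O\!\left(\eta/\sqrt{t}\right),
\]
so the ratio $\tsim{\bT\bY}/\tsim{\bY}$ is close to $1$.

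The main point needing care is the bound $\|\simmat\bE\|_2\le\|\bE\|_2$. One might be tempted to exploit commutation or diagonal structure here, but that is not needed: submultiplicativity together with $\|\simmat\|_2=1$ suffices. Everything else is routine.
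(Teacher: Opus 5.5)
Your proposal is correct and follows essentially the same route as the paper's proof. Both split $\bT=\mu\bI+\bE$ with $\|\bE\|_2=\eta\mu$, apply the triangle inequality together with $\|\simmat\|_2=1$ to the numerator and denominator separately, clip the numerator's lower bound at zero, and divide the squared bounds.
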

\subsection{Supporting Proof of Theorem~\ref{theorem_attention_rms}}

\subsubsection{Proof of Lemma~\ref{lemma_ffn_row_homogenity}}

\begin{proof}
For SwiGLU FFN, $\bY=\bX+\bR_s\bW_2$ with
\[
\bR_s=\silu(\bX\bW_1)\odot(\bX\bW_3).
\]
For any row $i$,
\begin{align*}
\expect{}{\| \bY_{i,:}\|_2^2}
&=
\expect{}{\|\bX_{i,:}\|_2^2}
 + \expect{}{\|\bR_{s,i,:}\bW_2\|_2^2}
 + 2\expect{}{\langle \bX_{i,:}, \bR_{s,i,:}\bW_2\rangle}.
\end{align*}
The cross term is zero because $\expect{}{\bW_2}=0$ and $\bW_2$ is independent of $(\bX,\bW_1,\bW_3)$.
By Assumption \ref{assumption_on_X}(ii), $\expect{}{\|\bX_{i,:}\|_2^2}=d$.
As computed in the proof of Theorem \ref{theorem_swiglu_pairwise}, the diagonal branch second moment is row-independent:
\[
\expect{}{\|\bR_{s,i,:}\|_2^2}
=
\expect{}{\bigl(\bR_s\bR_s^\top\bigr)_{ii}}
=
\dff d\sigma_{\bW_3}^2\,m(1).
\]
Finally,
\[
\expect{}{\|\bR_{s,i,:}\bW_2\|_2^2}
=
d\sigma_{\bW_2}^2\,\expect{}{\|\bR_{s,i,:}\|_2^2}
=
\dff d^2\sigma_{\bW_3}^2\sigma_{\bW_2}^2\,m(1).
\]
Hence
\[
\expect{}{\| \bY_{i,:}\|_2^2}
=
d+\dff d^2\sigma_{\bW_3}^2\sigma_{\bW_2}^2\,m(1),
\]
independent of $i$.
\end{proof}

\subsubsection{Proof of Lemma~\ref{lem:attn-row-norm}}
\begin{proof}
For the attention module, expand the row norm as
\begin{align*}
    \bY &= \bX + \bP\bX\bW, \\
    \expect{}{\|\bY_{i,:}\|_2^2}
    &= \expect{}{\|\bX_{i,:}\|_2^2}
     + \expect{}{\|\bP_{i,:}\bX\bW\|_2^2}
     + 2\expect{}{\inner{\bX_{i,:}}{\bP_{i,:}\bX\bW}}.
\end{align*}
The cross term vanishes because $\expect{}{\bW}=0$. By Assumption~\ref{assumption_on_X}(ii), $\expect{}{\|\bX_{i,:}\|_2^2}=d$. Using the prefix-averaging profile of the attention row and the equal-correlation closure, we obtain
\[
\expect{}{\|\bP_{i,:}\bX\bW\|_2^2}
=
d^2\sigma_{\bW}^2\left(\frac{1}{i}+\frac{i-1}{i}\rho\right),
\qquad
\rho=\frac{n\tsim{\bX}-1}{n-1}.
\]
Rearranging gives
\[
\expect{}{\|\bY_{i,:}\|_2^2}
=
d+d^2\rho\sigma_{\bW}^2+\frac{d^2\sigma_{\bW}^2(1-\rho)}{i}
=
\alpha+\frac{\beta}{i},
\]
with $\alpha := d + d^2\rho\,\sigma_{\bW}^2$ and $\beta := d^2\sigma_{\bW}^2(1-\rho)$.
\end{proof}

\subsubsection{Proof of Theorem~\ref{theorem_attention_rms}}
\begin{proof}
Write
\[
\bT = \mu \bI + \bDelta,
\qquad
\|\bDelta\|_2 = \eta \mu,
\]
and let $\bP := \simmat$. Then $\|\bP\|_2 = 1$ and
\[
t = \frac{\|\bP\bY\|_F^2}{\|\bY\|_F^2}.
\]
Since $\eta < 1$, the diagonal entries of $\bT$ stay positive and
\begin{align*}
\|\bT\bY\|_F
&\le
\mu\|\bY\|_F + \|\bDelta\bY\|_F
\le
\mu(1+\eta)\|\bY\|_F, \\
\|\bT\bY\|_F
&\ge
\mu\|\bY\|_F - \|\bDelta\bY\|_F
\ge
\mu(1-\eta)\|\bY\|_F.
\end{align*}
For the numerator $\|\bP\bT\bY\|_F$,
\begin{align*}
\|\bP\bT\bY\|_F
&=
\|\mu \bP\bY + \bP\bDelta\bY\|_F \\
&\le
\mu\|\bP\bY\|_F + \|\bP\|_2\|\bDelta\|_2\|\bY\|_F \\
&\le
\mu(\sqrt{t}+\eta)\|\bY\|_F,
\end{align*}
and similarly
\begin{align*}
\|\bP\bT\bY\|_F
&\ge
\mu\|\bP\bY\|_F - \|\bP\|_2\|\bDelta\|_2\|\bY\|_F \\
&\ge
\mu(\sqrt{t}-\eta)\|\bY\|_F.
\end{align*}
Since $\|\bP\bT\bY\|_F \ge 0$, we can rewrite the lower bound as
\[
\|\bP\bT\bY\|_F
\ge
\mu \max\{\sqrt{t}-\eta,0\}\|\bY\|_F.
\]
Squaring the numerator bounds and dividing by the squared denominator bounds gives
\[
\frac{\bigl(\max\{\sqrt{t}-\eta,0\}\bigr)^2}{(1+\eta)^2}
\le
\frac{\|\bP\bT\bY\|_F^2}{\|\bT\bY\|_F^2}
\le
\frac{(\sqrt{t}+\eta)^2}{(1-\eta)^2},
\]
which is exactly the claimed estimate for $\tsim{\bT\bY}$.
\end{proof}

\paragraph{Interpretation for RMSNorm.}
For RMSNorm, $\bT$ is the diagonal matrix with entries
\[
\bT_{i,i} = \frac{1}{\sqrt{\|\bY_{i,:}\|_2^2/d+\epsilon}}.
\]
Therefore
\[
\eta = \frac{\|\bT-\mu\bI\|_2}{\mu}
\]
measures the spread of the RMS factors around their mean scalar $\mu$. When this spread satisfies $\eta \ll \sqrt{\tsim{\bY}}$, Theorem~\ref{theorem_attention_rms} keeps $\tsim{\bT\bY}$ close to $\tsim{\bY}$.


\subsection{Justification of Ratio-of-Expectations Surrogate}\label{subsection_concentration}
This subsection checks the ratio-of-expectations approximation after the matrices
before the last Gaussian projection are fixed. For the attention update, we
condition on $(\bX,\bP)$ and take probability only over $\bW$. For the SwiGLU
update, we condition on $(\bX,\bR_s)$ and take probability only over $\bW_2$.
The closed-form centers used in the forward theorem are obtained by applying the
same equal-correlation, prefix-averaging, and SwiGLU moment closures used in the
preceding derivations.

For matrix updates of the form $\bB = \bA + \bM(\bA)$, define
\[
\xi_1 := \frac{\frobsq{\simmat\bB}}{\frobsq{\simmat\bA}},
\qquad
\xi_2 := \frac{\frobsq{\bB}}{\frobsq{\bA}}.
\]
Then
\[
\tsim{\bB} = \frac{\xi_1}{\xi_2}\,\tsim{\bA}.
\]
The next lemma controls the difference between $\xi_1/\xi_2$ and the ratio of
conditional expectations after conditioning on $(\bX,\bP)$.

\begin{lemma}\label{lemma_eta_bound}
Condition on $(\bX,\bP)$ and let
\[
m_i := \mathbb{E}[\xi_i\mid \bX,\bP],\qquad i\in\{1,2\}.
\]
Assume $m_1,m_2 > 0$. Then for every $\varepsilon > 0$,
\[
\mathbb{P}\!\left(
\left|\frac{\xi_1}{\xi_2} - \frac{m_1}{m_2}\right| \ge \varepsilon
\;\middle|\; \bX,\bP
\right)
\le
\mathbb{P}\!\left(
|\xi_2 - m_2| \ge \frac{m_2}{2}
\;\middle|\; \bX,\bP
\right)
+
\mathbb{P}\!\left(
\max_{i=1,2} |\xi_i - m_i| \ge \gamma \varepsilon
\;\middle|\; \bX,\bP
\right),
\]
where
\[
\gamma := \min\left\{\frac{m_2}{4}, \frac{m_2^2}{4m_1}\right\}.
\]
\end{lemma}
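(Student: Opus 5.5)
The statement to prove is Lemma~\ref{lemma_eta_bound}. The argument is deterministic once we condition on $(\bX,\bP)$: I will show that the event $\{|\xi_1/\xi_2 - m_1/m_2|\ge\varepsilon\}$ lies inside the union of $\{|\xi_2-m_2|\ge m_2/2\}$ and $\{\max_i|\xi_i-m_i|\ge\gamma\varepsilon\}$. A union bound under the conditional probability then gives the claim, so no distributional property of $\bW$ is needed here. Concentration of $\xi_1,\xi_2$ is used only afterwards, when the two right-hand probabilities are bounded.

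The key step is an algebraic decomposition of the ratio error. Write
\[
\frac{\xi_1}{\xi_2}-\frac{m_1}{m_2}
=\frac{(\xi_1-m_1)m_2 - m_1(\xi_2-m_2)}{\xi_2 m_2}.
\]
Assume both events on the right fail, so $|\xi_2-m_2|<m_2/2$ and $|\xi_i-m_i|<\gamma\varepsilon$ for $i=1,2$. The first condition gives $\xi_2>m_2/2>0$, so the ratio is well defined and $1/(\xi_2m_2)<2/m_2^2$. The numerator is then bounded by
\[
m_2|\xi_1-m_1|+m_1|\xi_2-m_2|<\gamma\varepsilon\,(m_2+m_1).
\]

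Combining these, the error is strictly below
\[
\frac{2\gamma\varepsilon}{m_2}+\frac{2m_1\gamma\varepsilon}{m_2^2}.
\]
The definition of $\gamma$ bounds each term: $\gamma\le m_2/4$ gives a first term of at most $\varepsilon/2$, and $\gamma\le m_2^2/(4m_1)$ gives a second term of at most $\varepsilon/2$. Hence the error is $<\varepsilon$, which contradicts the event on the left-hand side and establishes the inclusion.

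The only delicate point is to keep the inequalities strict, or at least consistent with the $\ge$ in the events, so the contrapositive is exact. Here they are strict, since each failed event is a strict inequality, so the argument closes.
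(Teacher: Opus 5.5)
Your proof is correct and follows the paper's argument. It uses the same decomposition $\frac{\xi_1}{\xi_2}-\frac{m_1}{m_2}=\frac{m_2(\xi_1-m_1)-m_1(\xi_2-m_2)}{\xi_2 m_2}$, the same lower bound $\xi_2\ge m_2/2$ off the first event, and the same split of $\gamma$ into two $\varepsilon/2$ contributions followed by a union bound. The only cosmetic difference is that you argue by contrapositive with the combined threshold $\gamma\varepsilon$, whereas the paper intersects with $\mathcal{E}_0$, uses the separate thresholds $\varepsilon m_2/4$ and $\varepsilon m_2^2/(4m_1)$, and only then relaxes them to $\gamma\varepsilon$.
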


\begin{proof}
Let
\[
\mathcal{E}_0 := \left\{ |\xi_2 - m_2| \le \frac{m_2}{2} \right\}.
\]
On $\mathcal{E}_0$, we have $\xi_2 \ge m_2/2$, so
\begin{align*}
\left|\frac{\xi_1}{\xi_2} - \frac{m_1}{m_2}\right|
&=
\left|
\frac{m_2(\xi_1 - m_1) - m_1(\xi_2 - m_2)}{\xi_2 m_2}
\right| \\
&\le
\frac{2}{m_2} |\xi_1 - m_1|
+
\frac{2m_1}{m_2^2} |\xi_2 - m_2|.
\end{align*}
Therefore,
\[
\left\{
\left|\frac{\xi_1}{\xi_2} - \frac{m_1}{m_2}\right| \ge \varepsilon
\right\}
\cap \mathcal{E}_0
\subseteq
\left\{
|\xi_1 - m_1| \ge \frac{\varepsilon m_2}{4}
\right\}
\cup
\left\{
|\xi_2 - m_2| \ge \frac{\varepsilon m_2^2}{4m_1}
\right\}.
\]
Taking conditional probabilities given $(\bX,\bP)$ and using the definition of
$\gamma$ gives the claim.
\end{proof}

Therefore, after conditioning on $(\bX,\bP)$, it suffices to show concentration
for $\xi_1$ and $\xi_2$ around
$\mathbb{E}[\xi_1\mid\bX,\bP]$ and $\mathbb{E}[\xi_2\mid\bX,\bP]$. We analyze
\[
\xi_1 := \frac{\|\simmat\bY\|_F^2}{\|\simmat\bX\|_F^2},
\qquad
\xi_2 := \frac{\|\bY\|_F^2}{\|\bX\|_F^2}.
\]
Under Assumption~\ref{assumption_on_X}, $\|\simmat\bX\|_F^2 = t\,nd$ and $\|\bX\|_F^2 = nd$. The following argument uses three standard concentration inequalities.

\begin{lemma}
    (General Hoeffding’s Inequality) Let $X_1, \dots, X_n$ be independent, zero-mean random variables such that each $X_i$ is \textbf{sub-Gaussian} with norm $\|X_i\|_{\psi_2} \leq K_i$. Let $S = \sum_{i=1}^n X_i$ and $v^2 = \sum_{i=1}^n K_i^2$.

Then for all $\epsilon \geq 0$:
$$\mathbb{P}(|S| \geq \epsilon) \leq 2\exp\left(-\frac{c\epsilon^2}{v^2}\right)$$
where $c > 0$ is an absolute constant (one can take $c = 1/2$ for Gaussians).
\end{lemma}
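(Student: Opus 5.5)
This is the standard Chernoff argument for sums of independent sub-Gaussian variables, in three steps.

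\textbf{Step 1: MGF bound for each summand.} We show that a zero-mean $X$ with $\|X\|_{\psi_2}\le K$ satisfies $\expect{}{e^{\lambda X}}\le e^{C\lambda^2K^2}$ for all $\lambda\in\bbR$, with an absolute constant $C$. We take the definition $\|X\|_{\psi_2}=\inf\{K>0:\expect{}{e^{X^2/K^2}}\le 2\}$, and split on the size of $\lambda$.
\begin{itemize}
\item If $|\lambda|\le 1/K$, we use the elementary inequality $e^{x}\le x+e^{x^2}$, valid for all real $x$. Zero mean kills the linear term, so $\expect{}{e^{\lambda X}}\le \expect{}{e^{\lambda^2X^2}}$. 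Since $\lambda^2K^2\le 1$, Jensen (concavity of $u\mapsto u^{\lambda^2K^2}$) gives $\expect{}{e^{\lambda^2X^2}}\le \bigl(\expect{}{e^{X^2/K^2}}\bigr)^{\lambda^2K^2}\le 2^{\lambda^2K^2}\le e^{\lambda^2K^2}$.
\item If $|\lambda|>1/K$, we use $\lambda X\le \lambda^2K^2/2+X^2/(2K^2)$. Then $\expect{}{e^{\lambda X}}\le e^{\lambda^2K^2/2}\,\bigl(\expect{}{e^{X^2/K^2}}\bigr)^{1/2}\le \sqrt{2}\,e^{\lambda^2K^2/2}$. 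Because $\lambda^2K^2\ge 1$, we have $\sqrt2\le e^{1/2}\le e^{\lambda^2K^2/2}$, so the bound is at most $e^{\lambda^2K^2}$.
\end{itemize}
Both cases give $C=1$. This is the only step with real content: it converts the $\psi_2$ tail definition into MGF form, and it is where the absolute constant is fixed.

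\textbf{Step 2: tensorize.} By independence, $\expect{}{e^{\lambda S}}=\prod_i \expect{}{e^{\lambda X_i}}\le \exp(\lambda^2\sum_i K_i^2)=\exp(\lambda^2 v^2)$.

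\textbf{Step 3: Chernoff bound.} For $\lambda>0$, Markov's inequality gives $\mathbb{P}(S\ge\epsilon)\le \exp(-\lambda\epsilon+\lambda^2v^2)$. Choosing $\lambda=\epsilon/(2v^2)$ yields $\exp(-\epsilon^2/(4v^2))$. Applying the same argument to $-S$, whose summands $-X_i$ have the same $\psi_2$ norms, and taking a union bound gives
\[
\mathbb{P}(|S|\ge\epsilon)\le 2\exp\!\left(-\frac{\epsilon^2}{4v^2}\right),
\]
that is, the claim with the absolute constant $c=1/4$.

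\textbf{Gaussian case.} When $X_i\sim\caN(0,\sigma_i^2)$, Step 1 can be skipped because the MGF is exact, $\expect{}{e^{\lambda X_i}}=e^{\lambda^2\sigma_i^2/2}$. Taking $K_i=\sigma_i$ as the variance proxy and repeating Steps 2 and 3 gives $\mathbb{P}(|S|\ge\epsilon)\le 2e^{-\epsilon^2/(2v^2)}$, i.e.\ $c=1/2$ as stated.
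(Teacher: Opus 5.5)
Your proof is correct. The paper does not prove this inequality; it quotes it as a standard concentration tool. Your argument is the usual textbook one: turn the $\psi_2$ condition into an MGF bound using $e^{x}\le x+e^{x^2}$ and Jensen, then tensorize and apply Chernoff. Every step checks out and gives the explicit constant $c=1/4$.

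One small point concerns the Gaussian remark. Writing ``taking $K_i=\sigma_i$'' quietly changes what $K_i$ means. Under your definition, $X_i\sim\caN(0,\sigma_i^2)$ has $\|X_i\|_{\psi_2}=\sqrt{8/3}\,\sigma_i>\sigma_i$, so the lemma's hypothesis $\|X_i\|_{\psi_2}\le\sigma_i$ is false. To connect back to the statement, note that any admissible $K_i$ satisfies $\sigma_i^2=\mathbb{E}X_i^2\le K_i^2$, which follows from $1+u\le e^u$ and $\mathbb{E}e^{X_i^2/K_i^2}\le 2$. Hence $\sum_i\sigma_i^2\le v^2$, and the exact Gaussian bound $2e^{-\epsilon^2/(2\sum_i\sigma_i^2)}$ is at most $2e^{-\epsilon^2/(2v^2)}$. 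This variance-based form is also what the paper actually uses when it applies Gaussian tails to the linear term $L_2$.
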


\begin{lemma}
    (Bernstein’s Inequality) Let $X_1, \dots, X_n$ be independent, zero-mean random variables such that each $X_i$ is \textbf{sub-exponential} with norm $\|X_i\|_{\psi_1} \leq K$. Let $S = \sum_{i=1}^n X_i$.

Then for all $\epsilon \geq 0$:
\begin{align*}
    \mathbb{P}(|S| \geq \epsilon) \leq 2\exp\left(-c \cdot \min\left\{\frac{\epsilon^2}{nK^2}, \frac{\epsilon}{K}\right\}\right)
\end{align*}
\end{lemma}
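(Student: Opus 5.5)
The approach is the classical Chernoff (exponential-moment) method. We first control the moment generating function of each centered sub-exponential summand in a neighbourhood of the origin. We then multiply these bounds using independence and optimize the Chernoff parameter. The two branches of the $\min\{\epsilon^2/(nK^2),\epsilon/K\}$ in the statement correspond to an interior optimum versus a boundary optimum of that parameter.

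\textbf{Step 1 (tails and moments).} We use the defining property $\mathbb{E}\exp(|X_i|/K)\le 2$ of $\|X_i\|_{\psi_1}\le K$. By Markov's inequality this gives $\mathbb{P}(|X_i|>u)\le 2e^{-u/K}$. Integrating the tail, $\mathbb{E}|X_i|^p=\int_0^\infty p u^{p-1}\mathbb{P}(|X_i|>u)\,du\le 2p!\,K^p$ for all $p\ge 1$.

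\textbf{Step 2 (local MGF bound).} This is the main step. We expand $\mathbb{E}e^{\lambda X_i}=1+\lambda\mathbb{E}X_i+\sum_{p\ge2}\lambda^p\mathbb{E}X_i^p/p!$. The linear term vanishes because $X_i$ is zero-mean, which is exactly where centering is needed. Using Step 1, the remainder is at most $2\sum_{p\ge2}(|\lambda|K)^p$. For $|\lambda|\le 1/(2K)$ this geometric sum is bounded by $4\lambda^2K^2$. Hence $\mathbb{E}e^{\lambda X_i}\le 1+4\lambda^2K^2\le \exp(4\lambda^2K^2)$.

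The delicate point is that this bound holds only for $|\lambda|\lesssim 1/K$, not for all $\lambda$. This restriction is unavoidable for sub-exponential variables, since their MGF may be infinite for large $\lambda$. It is also precisely what produces the linear branch $\epsilon/K$ later.

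\textbf{Step 3 (Chernoff and optimization).} By independence and Markov's inequality, for $0<\lambda\le 1/(2K)$,
\[
\mathbb{P}(S\ge\epsilon)\le e^{-\lambda\epsilon}\prod_{i=1}^n\mathbb{E}e^{\lambda X_i}\le \exp\!\left(-\lambda\epsilon+4n\lambda^2K^2\right).
\]
We then take $\lambda=\min\{\epsilon/(8nK^2),\,1/(2K)\}$, and there are two cases.
\begin{itemize}
\item If the first term is the smaller one, the exponent equals $-\epsilon^2/(16nK^2)$.
\item Otherwise $\epsilon\ge 4nK$, so $4n\lambda^2K^2=nK\le \epsilon/4$, and the exponent is at most $-\epsilon/(2K)+\epsilon/4K=-\epsilon/(4K)$.
\end{itemize}
In either case $\mathbb{P}(S\ge\epsilon)\le\exp(-c\min\{\epsilon^2/(nK^2),\epsilon/K\})$ with $c=1/16$.

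\textbf{Step 4 (two-sided bound).} Applying Step 3 to the variables $-X_i$, which satisfy the same hypotheses, bounds $\mathbb{P}(S\le-\epsilon)$ in the same way. A union bound over the two tails gives the factor $2$ in the statement.
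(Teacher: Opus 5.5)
The paper gives no proof of this lemma to compare against. It is quoted, together with Hoeffding and Hanson--Wright, as a standard concentration tool with an unspecified absolute constant $c$. It is then used as a black box in the concentration lemmas that justify the ratio-of-expectations surrogate.

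Your argument is the standard Chernoff/MGF proof, and it is correct. The following steps all check out:
\begin{itemize}
\item the moment bound $\mathbb{E}|X_i|^p\le 2\,p!\,K^p$;
\item the local bound $\mathbb{E}e^{\lambda X_i}\le\exp(4\lambda^2K^2)$ for $|\lambda|\le 1/(2K)$, where the termwise expansion is legitimate by dominated convergence since $\mathbb{E}e^{|\lambda X_i|}\le\mathbb{E}e^{|X_i|/K}\le 2$ in that range;
\item the two-case choice of $\lambda$;
\item the union bound over both tails.
\end{itemize}
Together these give the explicit constant $c=1/16$, which the paper's black-box use does not need but which is a useful by-product.

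There is one arithmetic slip in the boundary case of Step~3. With $\lambda=1/(2K)$ you get $4n\lambda^2K^2=n$, not $nK$. The inequality you need is therefore $n\le\epsilon/(4K)$, which is exactly what $\epsilon\ge 4nK$ gives. With that correction, the exponent bound $-\epsilon/(2K)+\epsilon/(4K)=-\epsilon/(4K)$ that you state is right, so the conclusion is unaffected.
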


\begin{lemma}
    (Hanson-Wright Inequality) Let $\bx = (x_1, \dots, x_n) \in \mathbb{R}^n$ be a random vector with independent, zero-mean, \textbf{sub-Gaussian} components satisfying $\max_i \|x_i\|_{\psi_2} \leq K$. Let $\bA \in \mathbb{R}^{n \times n}$ be a fixed matrix (not necessarily symmetric or PSD).

    Then for all $\epsilon \geq 0$:
    \begin{align*}
        \mathbb{P}\left(\left|\bx^\top \bA \bx - \mathbb{E}[\bx^\top \bA \bx]\right| \geq \epsilon\right) \leq 2\exp\left(-c \cdot \min\left\{\frac{\epsilon^2}{K^4\|\bA\|_F^2}, \frac{\epsilon}{K^2\|\bA\|_2}\right\}\right)
    \end{align*}
\end{lemma}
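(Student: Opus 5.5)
This is the classical Hanson--Wright inequality. I would follow the Rudelson--Vershynin route: split the quadratic form into diagonal and off-diagonal parts, bound each tail separately, and combine them with a union bound, absorbing constants into $c$. By homogeneity I may assume $K=1$, replacing $\bx$ by $\bx/K$ and $\epsilon$ by $\epsilon/K^2$. Write
\[
\bx^\top\bA\bx-\mathbb{E}[\bx^\top\bA\bx]
=\sum_i \bA_{i,i}\bigl(x_i^2-\mathbb{E}x_i^2\bigr)+\sum_{i\neq j}\bA_{i,j}x_ix_j
=:S_{\mathrm{diag}}+S_{\mathrm{off}}.
\]
The off-diagonal sum has mean zero by independence and zero mean. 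It suffices to show that each of $S_{\mathrm{diag}}$ and $S_{\mathrm{off}}$ exceeds $\epsilon/2$ with probability at most $\exp\bigl(-c\min\{\epsilon^2/\|\bA\|_F^2,\ \epsilon/\|\bA\|_2\}\bigr)$.

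\textbf{Diagonal part.} Each $x_i^2-\mathbb{E}x_i^2$ is centered and sub-exponential with $\psi_1$-norm at most $C\|x_i\|_{\psi_2}^2\le C$. The terms $\bA_{i,i}(x_i^2-\mathbb{E}x_i^2)$ are independent with norms at most $C|\bA_{i,i}|$. I would apply the weighted form of Bernstein's inequality stated above, which gives tail
\[
\exp\Bigl(-c\min\Bigl\{\epsilon^2\big/\textstyle\sum_i\bA_{i,i}^2,\ \epsilon\big/\max_i|\bA_{i,i}|\Bigr\}\Bigr).
\]
Then I finish with $\sum_i\bA_{i,i}^2\le\|\bA\|_F^2$ and $\max_i|\bA_{i,i}|\le\|\bA\|_2$.

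\textbf{Off-diagonal part.} This is the main obstacle, because the summands $x_ix_j$ are not independent. I plan to bound the moment generating function $\mathbb{E}\exp(\lambda S_{\mathrm{off}})$ in three steps.
\begin{enumerate}
\item \emph{Decoupling.} Insert independent Bernoulli selectors $\delta_i$ and use $\mathbb{E}_\delta[\delta_i(1-\delta_j)]=1/4$ together with Jensen's inequality. This gives $\mathbb{E}\exp(\lambda S_{\mathrm{off}})\le\mathbb{E}\exp(4\lambda\,\bx^\top\bA\bx')$, with $\bx'$ an independent copy of $\bx$.
\item \emph{Gaussian replacement.} Conditioning on $\bx'$, the form $\bx^\top\bA\bx'$ is a sub-Gaussian linear form in $\bx$. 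Its conditional mgf is therefore at most $\exp(C\lambda^2\|\bA\bx'\|_2^2)$, which equals the corresponding mgf with $\bx$ replaced by a standard Gaussian $\bg$. Repeating this with the roles of $\bx'$ and a second Gaussian $\bg'$ reduces the problem to bounding $\mathbb{E}\exp(C\lambda\,\bg^\top\bA\bg')$.
\item \emph{Diagonalization.} Take the SVD $\bA=\sum_k s_k\bu_k\bv_k^\top$. By rotation invariance, $\bg^\top\bA\bg'=\sum_k s_k g_kg_k'$ with the pairs $(g_k,g'_k)$ independent. Using $\mathbb{E}\exp(t\,g g')=(1-t^2)^{-1/2}\le\exp(t^2)$ for $|t|\le 1/2$, I obtain
\[
\mathbb{E}\exp(\lambda S_{\mathrm{off}})\le\exp\bigl(C\lambda^2\|\bA\|_F^2\bigr)\qquad\text{for }|\lambda|\le c/\|\bA\|_2,
\]
since $\sum_k s_k^2=\|\bA\|_F^2$ and $\max_k s_k=\|\bA\|_2$.
\end{enumerate}
Finally, the Chernoff bound with $\lambda=\min\{\epsilon/(C'\|\bA\|_F^2),\ c/\|\bA\|_2\}$ yields the two-regime tail. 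Applying the same argument to $-\bA$ gives the lower tail.

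The delicate point is the restricted range of $\lambda$. It is exactly the constraint $|\lambda|\le c/\|\bA\|_2$ that produces the sub-exponential term $\epsilon/\|\bA\|_2$. I would also need to check that the decoupling step does not lose more than constant factors.
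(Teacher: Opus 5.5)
The paper gives no proof of this lemma. It imports Hanson--Wright as a standard tool (the Rudelson--Vershynin form) and uses it only as a black box for the quadratic terms in the concentration lemmas that justify the ratio-of-expectations surrogate.

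Your sketch is the standard Rudelson--Vershynin argument, and it is correct. The diagonal/off-diagonal split, decoupling, Gaussian comparison, SVD diagonalization, and the Chernoff bound restricted to $|\lambda|\le c/\|\bA\|_2$ are exactly what produce the two regimes. Three details need tightening when you write it out.

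\textbf{Weighted Bernstein.} In the diagonal part you need the general Bernstein inequality, with variance proxy $\sum_i\|X_i\|_{\psi_1}^2$ and scale $\max_i\|X_i\|_{\psi_1}$. The version stated in the paper assumes a common bound $K$ and has the term $nK^2$. With $K=C\max_i|\bA_{i,i}|$ it replaces $\sum_i\bA_{i,i}^2$ by $n\max_i\bA_{i,i}^2$, which can be $n$ times larger than $\|\bA\|_F^2$ (take $\bA=\mathbf{e}_1\mathbf{e}_1^\top$). So cite the weighted form explicitly rather than the one "stated above".

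\textbf{Completing the decoupling.} The Bernoulli selectors only give the block sum $\sum_{i\in\Lambda,\,j\notin\Lambda}\bA_{i,j}x_ix_j'$. To reach the full bilinear form $\bx^\top\bA\bx'$, condition on $(x_i)_{i\in\Lambda}$ and $(x'_j)_{j\notin\Lambda}$ and add back the remaining terms by a second application of Jensen. This is legitimate because those terms have conditional mean zero, and it is exactly where the zero-mean hypothesis on the $x_i$ is used. The step costs only the factor $4$ you already have.

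\textbf{Prefactor.} The union bound over the two parts and the two tails gives a prefactor of $4$. It reduces to the stated $2$ after shrinking $c$, since the bound is trivial whenever it exceeds $1$.
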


\textbf{Remark}: Unlike Hoeffding/Bernstein, which apply to sums of independent terms, Hanson-Wright applies to quadratic forms $\sum_{i,j} A_{ij} X_i X_j$ in which the summands are dependent.

\begin{lemma}\label{lemma_concentration_attention}
Let $\bY = \bX + \bP\bX\bW$, where the entries of $\bW$ are i.i.d.\ Gaussian random variables with mean $0$ and variance $\sigma_{\bW}^2$. Define
\[
\mu_1 := \frac{\|\simmat\bP\bX\|_F}{\|\simmat\bX\|_F}.
\]
Conditioned on $(\bX,\bP)$, for every $\varepsilon > 0$,
\begin{align}
\mathbb{P}\!\left(
\left|\xi_2 - \mathbb{E}[\xi_2 \mid \bX,\bP]\right| \ge \varepsilon
\;\middle|\; \bX,\bP
\right)
&\le
4\exp\!\left(
-c \min\left\{
\frac{\varepsilon^2\|\bX\|_F^2}{\sigma_{\bW}^2\|\bP\bX\|_F^2},
\frac{\varepsilon^2 \|\bX\|_F^4}{d \sigma_{\bW}^4 \|\bP\bX\|_F^4},
\frac{\varepsilon \|\bX\|_F^2}{\sigma_{\bW}^2 \|\bP\bX\|_F^2}
\right\}
\right), \label{eq:attn-xi2-cond} \\
\mathbb{P}\!\left(
\left|\xi_1 - \mathbb{E}[\xi_1 \mid \bX,\bP]\right| \ge \varepsilon
\;\middle|\; \bX,\bP
\right)
&\le
4\exp\!\left(
-c \min\left\{
\frac{\varepsilon^2\|\simmat\bX\|_F^2}{\sigma_{\bW}^2\|\simmat\bP\bX\|_F^2},
\frac{\varepsilon^2 \|\simmat\bX\|_F^4}{d \sigma_{\bW}^4 \|\simmat\bP\bX\|_F^4},
\frac{\varepsilon \|\simmat\bX\|_F^2}{\sigma_{\bW}^2 \|\simmat\bP\bX\|_F^2}
\right\}
\right). \label{eq:attn-xi1-cond}
\end{align}
Moreover,
\[
\mathbb{E}[\xi_2 \mid \bX,\bP]
=
1 + \sigma_{\bW}^2 d \frac{\|\bP\bX\|_F^2}{\|\bX\|_F^2},
\qquad
\mathbb{E}[\xi_1 \mid \bX,\bP]
=
1 + \sigma_{\bW}^2 d\,\mu_1^2.
\]
\end{lemma}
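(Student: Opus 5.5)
We will prove the bound for $\xi_2$ in full and then get the bound for $\xi_1$ by the same argument with $\bX$ replaced by $\simmat\bX$ and $\bP\bX$ replaced by $\simmat\bP\bX$. Conditioning on $(\bX,\bP)$ fixes both matrices, so the only randomness is the Gaussian matrix $\bW$. Expanding the square gives
\[
\xi_2 = 1 + \frac{2\inner{\bX}{\bP\bX\bW}}{\frobsq{\bX}} + \frac{\frobsq{\bP\bX\bW}}{\frobsq{\bX}} .
\]
The middle term is linear in $\bW$ and the last is quadratic. We will handle them separately. Splitting $\varepsilon$ into two halves and taking a union bound gives the total prefactor $2+2=4$; the factors of $2$ and $1/2$ lost in the splitting are absorbed into the absolute constant $c$.

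\emph{Linear term.} Write $\inner{\bX}{\bP\bX\bW} = \tr\bigl((\bP\bX)^\top\bX\,\bW^\top\bigr)=\sum_{a,b}\bM_{a,b}\bW_{a,b}$ with $\bM := (\bP\bX)^\top\bX$. This is a centered Gaussian with variance $\sigma_{\bW}^2\frobsq{\bM}$. We bound $\frob{\bM}\le \frob{\bP\bX}\,\twonorm{\bX}\le\frob{\bP\bX}\frob{\bX}$. After dividing by $\frobsq{\bX}$, the variance is at most $\sigma_{\bW}^2\frobsq{\bP\bX}/\frobsq{\bX}$. The general Hoeffding inequality then gives the first term $\varepsilon^2\frobsq{\bX}/(\sigma_{\bW}^2\frobsq{\bP\bX})$ inside the minimum.

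\emph{Quadratic term.} Let $\bG:=(\bP\bX)^\top(\bP\bX)\in\bbR^{d\times d}$ and let $\bw_j$ be the columns of $\bW$. Then $\frobsq{\bP\bX\bW}=\sum_{j=1}^d \bw_j^\top\bG\bw_j = \mathrm{vec}(\bW)^\top(\bI_d\otimes\bG)\,\mathrm{vec}(\bW)$. This is a quadratic form in a vector of i.i.d.\ $\caN(0,\sigma_{\bW}^2)$ entries, so $K^2\lesssim\sigma_{\bW}^2$. Its mean is $\sigma_{\bW}^2 d\,\tr(\bG)=\sigma_{\bW}^2 d\frobsq{\bP\bX}$. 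Since the linear term has mean zero, this already gives the stated formula $\mathbb{E}[\xi_2\mid\bX,\bP]=1+\sigma_{\bW}^2 d\,\frobsq{\bP\bX}/\frobsq{\bX}$. For Hanson--Wright we need the two norms of the block matrix:
\[
\frob{\bI_d\otimes\bG}=\sqrt d\,\frob{\bG}\le\sqrt d\,\frobsq{\bP\bX},\qquad \|\bI_d\otimes\bG\|_2=\twonorm{\bG}\le\frobsq{\bP\bX}.
\]
Applying Hanson--Wright with deviation $\varepsilon\frobsq{\bX}$ then gives the remaining two terms, $\varepsilon^2\|\bX\|_F^4/(d\sigma_{\bW}^4\|\bP\bX\|_F^4)$ and $\varepsilon\frobsq{\bX}/(\sigma_{\bW}^2\frobsq{\bP\bX})$.

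\emph{Bound for $\xi_1$.} Since $\simmat\bY=\simmat\bX+\simmat\bP\bX\bW$, the same decomposition applies with $\bM=(\simmat\bP\bX)^\top\simmat\bX$ and $\bG=(\simmat\bP\bX)^\top\simmat\bP\bX$. The norm estimates go through verbatim, which yields \eqref{eq:attn-xi1-cond} and $\mathbb{E}[\xi_1\mid\bX,\bP]=1+\sigma_{\bW}^2 d\,\mu_1^2$. The only real work is the bookkeeping of the Kronecker-structured matrix norms in Hanson--Wright. In particular, we must use the crude bounds $\frob{\bG}\le\frobsq{\bP\bX}$ and $\twonorm{\bG}\le\frobsq{\bP\bX}$ so that the result matches the stated form rather than a sharper spectral one. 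Once those bounds are in place, everything else is direct substitution.
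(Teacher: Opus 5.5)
Your proposal is correct and follows essentially the same route as the paper. You split $\xi_2$ as $1+L_2+Q_2$, bound the linear Gaussian term by a one-dimensional tail using $\|(\bP\bX)^\top\bX\|_F\le\|\bP\bX\|_F\|\bX\|_F$, control the quadratic term with Hanson--Wright, union-bound at $\varepsilon/2$, and transfer to $\xi_1$ by substituting $\simmat\bX,\simmat\bP\bX$. Your single Hanson--Wright application to $\mathrm{vec}(\bW)$ with $\bI_d\otimes\bG$ is just a tidier packaging of the paper's ``sum of $d$ independent quadratic forms via Hanson--Wright and Bernstein.'' The factor $4$ you drop from the linear-term variance (coming from the coefficient $2$) is harmlessly absorbed into $c$.
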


\begin{proof}
For $\xi_2$, write
\[
\xi_2
=
\frac{\|\bX + \bP\bX\bW\|_F^2}{\|\bX\|_F^2}
=
1 + L_2 + Q_2,
\]
where
\[
L_2 := \frac{2\langle \bX,\bP\bX\bW\rangle}{\|\bX\|_F^2},
\qquad
Q_2 := \frac{\|\bP\bX\bW\|_F^2}{\|\bX\|_F^2}.
\]
Conditioned on $(\bX,\bP)$, the term $L_2$ is centered Gaussian with
\[
\mathrm{Var}(L_2 \mid \bX,\bP)
=
\frac{4\sigma_{\bW}^2\|(\bP\bX)^\top \bX\|_F^2}{\|\bX\|_F^4}
\le
\frac{4\sigma_{\bW}^2\|\bP\bX\|_F^2}{\|\bX\|_F^2},
\]
so Gaussian tails give
\[
\mathbb{P}\!\left(
|L_2| \ge \varepsilon
\;\middle|\; \bX,\bP
\right)
\le
2\exp\!\left(
-c \frac{\varepsilon^2\|\bX\|_F^2}{\sigma_{\bW}^2\|\bP\bX\|_F^2}
\right).
\]
Also conditioned on $(\bX,\bP)$, the $d$ columns of $\bW$ are independent Gaussian vectors, so $Q_2$ is a sum of $d$ independent quadratic forms. Hanson-Wright and Bernstein therefore imply
\[
\mathbb{P}\!\left(
\left|Q_2 - \mathbb{E}[Q_2 \mid \bX,\bP]\right| \ge \varepsilon
\;\middle|\; \bX,\bP
\right)
\le
2\exp\!\left(
-c \min\left\{
\frac{\varepsilon^2 \|\bX\|_F^4}{d \sigma_{\bW}^4 \|\bP\bX\|_F^4},
\frac{\varepsilon \|\bX\|_F^2}{\sigma_{\bW}^2 \|\bP\bX\|_F^2}
\right\}
\right).
\]
The conditional expectation of $\xi_2$ is
\[
\mathbb{E}[\xi_2 \mid \bX,\bP]
=
1 + \mathbb{E}[Q_2 \mid \bX,\bP],
\]
so the deviation from this conditional expectation is
\[
\xi_2-\mathbb{E}[\xi_2\mid \bX,\bP]
=
L_2+\bigl(Q_2-\mathbb{E}[Q_2\mid \bX,\bP]\bigr).
\]
The event
\[
\left\{|\xi_2-\mathbb{E}[\xi_2\mid \bX,\bP]|\ge \varepsilon\right\}
\]
is contained in
\[
\left\{|L_2|\ge \varepsilon/2\right\}
\cup
\left\{|Q_2-\mathbb{E}[Q_2\mid \bX,\bP]|\ge \varepsilon/2\right\}.
\]
The one-dimensional Gaussian tail inequality controls $L_2$, and the
Hanson-Wright/Bernstein inequality controls
$Q_2-\mathbb{E}[Q_2\mid \bX,\bP]$. Combining these two probability bounds gives
Eq.~\eqref{eq:attn-xi2-cond}, after replacing the numerical constant $c$ in the
exponent by a smaller numerical constant.

The argument for $\xi_1$ is identical after replacing $\bX$ by $\simmat\bX$ and $\bP\bX$ by $\simmat\bP\bX$. This gives Eq.~\eqref{eq:attn-xi1-cond} and the formula
\[
\mathbb{E}[\xi_1 \mid \bX,\bP]
=
1 + \sigma_{\bW}^2 d \frac{\|\simmat\bP\bX\|_F^2}{\|\simmat\bX\|_F^2}
=
1 + \sigma_{\bW}^2 d\,\mu_1^2.
\]
\end{proof}

Under Assumption~\ref{assumption_on_X}, $\|\bX\|_F^2 = nd$ and $\|\simmat\bX\|_F^2 = t\,nd$. Under the prefix-averaging surrogate $\bP \approx \bC_n$,
\[
\frac{\|\bP\bX\|_F^2}{\|\bX\|_F^2}
=
\rho + \frac{1-\rho}{n} H_n,
\]
which is $O(1)$ in the initialization regime. If in addition $t \ge t_0 > 0$ and $\mu_1 \le C_1$, then both factors concentrate at width scale.

For the SwiGLU update, we use the same ratio lemma after replacing the
conditioning variables $(\bX,\bP)$ by $(\bX,\bR_s)$.

\begin{lemma}\label{lem:xi-swiglu-v2}
For the SwiGLU update
\[
\bY = \bX + \bigl(\silu(\bX\bW_1)\odot \bX\bW_3\bigr)\bW_2
= \bX + \bR_s\bW_2,
\]
assume  $\bR_s$ satisfies
\[
\|\bR_s\|_F^2 \le 2\mathbb{E}\|\bR_s\|_F^2,
\qquad
\|\simmat\bR_s\|_F^2 \le 2\mathbb{E}\|\simmat\bR_s\|_F^2.
\]
Then, conditioned on any fixed pair $(\bX,\bR_s)$ satisfying the two displayed
bounds, for every $\varepsilon > 0$,
\begin{align}
\mathbb{P}\!\left(
\left|\xi_2 - \mathbb{E}[\xi_2 \mid \bX,\bR_s]\right| \ge \varepsilon
\;\middle|\; \bX,\bR_s
\right)
&\le
4\exp\!\left(
-c \min\left\{
\frac{\varepsilon^2\|\bX\|_F^2}{\sigma_{\bW_2}^2\|\bR_s\|_F^2},
\frac{\varepsilon^2 \|\bX\|_F^4}{d \sigma_{\bW_2}^4 \|\bR_s\|_F^4},
\frac{\varepsilon \|\bX\|_F^2}{\sigma_{\bW_2}^2 \|\bR_s\|_F^2}
\right\}
\right), \label{eq:ffn-xi2-cond} \\
\mathbb{P}\!\left(
\left|\xi_1 - \mathbb{E}[\xi_1 \mid \bX,\bR_s]\right| \ge \varepsilon
\;\middle|\; \bX,\bR_s
\right)
&\le
4\exp\!\left(
-c \min\left\{
\frac{\varepsilon^2\|\simmat\bX\|_F^2}{\sigma_{\bW_2}^2\|\simmat\bR_s\|_F^2},
\frac{\varepsilon^2 \|\simmat\bX\|_F^4}{d \sigma_{\bW_2}^4 \|\simmat\bR_s\|_F^4},
\frac{\varepsilon \|\simmat\bX\|_F^2}{\sigma_{\bW_2}^2 \|\simmat\bR_s\|_F^2}
\right\}
\right). \label{eq:ffn-xi1-cond}
\end{align}
\end{lemma}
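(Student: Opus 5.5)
The plan mirrors the proof of Lemma~\ref{lemma_concentration_attention}, with the conditioning pair $(\bX,\bP)$ replaced by $(\bX,\bR_s)$ and the Gaussian factor $\bW$ replaced by $\bW_2$. Once $(\bX,\bR_s)$ is fixed, $\bW_2$ is independent of it, since $\bR_s$ depends only on $\bX,\bW_1,\bW_3$. The map $\bW_2\mapsto \bR_s\bW_2$ is then a linear image of an i.i.d.\ Gaussian matrix, so the same concentration argument applies verbatim.

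For $\xi_2$, decompose
\[
\xi_2 = 1 + L_2 + Q_2,\qquad L_2:=\frac{2\inner{\bX}{\bR_s\bW_2}}{\frobsq{\bX}},\qquad Q_2:=\frac{\frobsq{\bR_s\bW_2}}{\frobsq{\bX}}.
\]
\begin{itemize}
\item \textbf{Linear term.} Conditionally, $L_2$ is centered Gaussian with variance $4\sigma_{\bW_2}^2\frobsq{\bR_s^\top\bX}/\frob{\bX}^4$. Using $\frob{\bR_s^\top\bX}\le\frob{\bR_s}\frob{\bX}$, this is at most $4\sigma_{\bW_2}^2\frobsq{\bR_s}/\frobsq{\bX}$. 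The Gaussian tail then yields the first term in the minimum of \eqref{eq:ffn-xi2-cond}.
\item \textbf{Quadratic term.} Write $Q_2=\sum_{j=1}^d \bw_j^\top(\bR_s^\top\bR_s)\bw_j/\frobsq{\bX}$, where the columns $\bw_j\in\bbR^{\dff}$ of $\bW_2$ are independent $\caN(0,\sigma_{\bW_2}^2\bI)$. Each centered summand is sub-exponential with $\psi_1$-norm $\lesssim\sigma_{\bW_2}^2\frobsq{\bR_s}/\frobsq{\bX}$, by Hanson--Wright together with $\|\bR_s^\top\bR_s\|_F,\|\bR_s^\top\bR_s\|_2\le\frobsq{\bR_s}$. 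Bernstein's inequality over the $d$ independent columns then gives the second and third terms.
\end{itemize}
A union bound over $\{|L_2|\ge\varepsilon/2\}$ and $\{|Q_2-\mathbb{E}Q_2|\ge\varepsilon/2\}$, with the constants absorbed into $c$, produces the factor $4$ in \eqref{eq:ffn-xi2-cond}. The bound \eqref{eq:ffn-xi1-cond} follows identically, replacing $\bX$ by $\simmat\bX$ and $\bR_s$ by $\simmat\bR_s$, since $\simmat\bY=\simmat\bX+(\simmat\bR_s)\bW_2$.

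The hypotheses $\frobsq{\bR_s}\le 2\mathbb{E}\frobsq{\bR_s}$ and $\frobsq{\simmat\bR_s}\le 2\mathbb{E}\frobsq{\simmat\bR_s}$ are not needed for the conditional inequalities themselves. Their role is to make the exponents width-scale. Using the second moments from the proof of Theorem~\ref{theorem_swiglu_pairwise},
\[
\mathbb{E}\frobsq{\bR_s}=n\dff d\sigma_{\bW_3}^2 m(1),
\]
so $\sigma_{\bW_2}^2\frobsq{\bR_s}/\frobsq{\bX}\le 2\xi m(1)/d$. I would add this as a remark, and state the conditional mean $\mathbb{E}[\xi_2\mid\bX,\bR_s]=1+d\sigma_{\bW_2}^2\frobsq{\bR_s}/\frobsq{\bX}$ and its $\simmat$-analogue.

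The only delicate point is the sub-exponential norm of the quadratic form. It must be controlled by $\frobsq{\bR_s}$ rather than by $\dff$-dependent quantities, and this works because $\|\bR_s^\top\bR_s\|_F\le\frobsq{\bR_s}$. Everything else transfers directly from Lemma~\ref{lemma_concentration_attention}.
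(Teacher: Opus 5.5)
Your proposal is correct and follows the same route as the paper's proof. You condition on $(\bX,\bR_s)$, split each of $\xi_1,\xi_2$ into a one-dimensional Gaussian inner-product term and a centered Gaussian quadratic term, bound these by the Gaussian tail and Hanson--Wright/Bernstein respectively, and union-bound with constants absorbed into $c$; you also rightly observe that the two displayed hypotheses on $\bR_s$ are not used in the conditional inequalities themselves, which matches the paper's argument.
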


\begin{proof}
Condition on a fixed pair $(\bX,\bR_s)$ satisfying the two displayed bounds.
Expanding the squared Frobenius norm in $\xi_2$ gives
\[
\xi_2-\mathbb{E}[\xi_2\mid \bX,\bR_s]
=
\frac{2\langle \bX,\bR_s\bW_2\rangle}{\|\bX\|_F^2}
+
\left(
\frac{\|\bR_s\bW_2\|_F^2}{\|\bX\|_F^2}
-
\mathbb{E}\left[
\frac{\|\bR_s\bW_2\|_F^2}{\|\bX\|_F^2}
\mid \bX,\bR_s
\right]
\right).
\]
Expanding the squared Frobenius norm in $\xi_1$ gives
\[
\xi_1-\mathbb{E}[\xi_1\mid \bX,\bR_s]
=
\frac{2\langle \simmat\bX,\simmat\bR_s\bW_2\rangle}{\|\simmat\bX\|_F^2}
+
\left(
\frac{\|\simmat\bR_s\bW_2\|_F^2}{\|\simmat\bX\|_F^2}
-
\mathbb{E}\left[
\frac{\|\simmat\bR_s\bW_2\|_F^2}{\|\simmat\bX\|_F^2}
\mid \bX,\bR_s
\right]
\right).
\]
Conditioned on $(\bX,\bR_s)$, the two inner-product terms are one-dimensional
Gaussian random variables. The two quadratic terms are sums of Gaussian quadratic
forms. Applying the one-dimensional Gaussian tail inequality to the two
inner-product terms, applying the Hanson-Wright/Bernstein inequality to the two
quadratic terms minus their conditional expectations, and then applying a union
bound gives Eqs.~\eqref{eq:ffn-xi2-cond} and \eqref{eq:ffn-xi1-cond}, after
replacing the numerical constant $c$ in the exponent by a smaller numerical
constant.
\end{proof}

Under the standard initialization and $\dff=\Theta(d)$, the moment calculations
in Appendix~\ref{app:proof_swiglu_forward} give the closed-form conditional
centers used for the SwiGLU branch. Lemma~\ref{lem:xi-swiglu-v2} then controls
the remaining randomness from the output projection $\bW_2$ around these centers.

\section{Backward Repair Incapacity Details and Proofs}\label{app:backward_bottleneck_details}

This section supplies the technical steps behind the backward analysis in the main text. It first proves the gradient-contraction bound, then shows that the same contraction persists near collapse, and finally proves the exact-collapse residual-norm growth statement that drives the RMSNorm factor below one.

\subsection{Proof of Theorem~\ref{theorem_iterative_bound_on_gradient_norm}}
This subsection proves the gradient bound in Theorem~\ref{theorem_iterative_bound_on_gradient_norm}. Its role is to describe how RMSNorm shrinkage factor and the sublayer contribution factors $\alpha_1,\alpha_2$ affect the gradient norm. 

\begin{proof}
\begin{align*}
    \twonorm{\pardir{\caL}{(\bY_1)_{i, :}}}
    &=
    \twonorm{\frac{1}{\sqrt{\frac{1}{d} \frobsq{\by_1}}} \times \pardir{\caL}{(\bX_2)_{i, :}} \left(\bI - \frac{\by_1\by_1^\top}{\frobsq{\by_1}}\right)} \\
    &\leq \frac{1}{\sqrt{\frac{1}{d} \frobsq{\by_1}}}\twonorm{\pardir{\caL}{(\bX_2)_{i, :}}}\times \twonorm{\bI - \frac{\by_1\by_1^\top}{\frobsq{\by_1}}} \\
    &= \frac{1}{\sqrt{\frac{1}{d} \frobsq{\by_1}}}\twonorm{\pardir{\caL}{(\bX_2)_{i, :}}}.
\end{align*}
Squaring and summing over rows yields
\begin{align*}
    \frobsq{\pardir{\caL}{\bY_1}}
    &\leq \frac{1}{\frac{1}{d} \frobsq{\by_1}}\frobsq{\pardir{\caL}{\bX_2}}.
\end{align*}
Taking square roots gives
\begin{align*}
    \frob{\pardir{\caL}{\bY_1}}
    &\leq \frac{1}{\sqrt{\frac{1}{d} \frobsq{\by_1}}}\frob{\pardir{\caL}{\bX_2}}
    = \frac{\sqrt{d}}{\twonorm{\by_1}}\frob{\pardir{\caL}{\bX_2}}.
\end{align*}
By Definition~\ref{assumption_sublayer_amplification}, $\frob{\pardir{\caL}{\bX_1}} \leq \alpha_1 \frob{\pardir{\caL}{\bY_1}}$, so
\[
\frob{\pardir{\caL}{\bX_1}} \leq \alpha_1\frac{\sqrt{d}}{\twonorm{\by_1}}\frob{\pardir{\caL}{\bX_2}}.
\]
The same argument for the FFN sublayer gives
\[
\frob{\pardir{\caL}{\bX_2}} \leq \alpha_2\frac{\sqrt{d}}{\twonorm{\by_2}}\frob{\pardir{\caL}{\bX_1^{l+1}}}.
\]
\end{proof}

\subsection{Near-Collapse Extension of Theorem~\ref{theorem_iterative_bound_on_gradient_norm}}
\label{app:backward_repair_bottleneck_near_collapse}
This subsection extends the exact-collapse contraction statement to the case $\tsim{\bY_k}=1-\delta$ with small $\delta$. Its role is to show that the exact-collapse threshold used in the backward theorem still describes a neighborhood of the collapsed state rather than a single isolated point.

\begin{proposition}\label{prop_backward_repair_incapacity_near_collapse}
Let $\tsim{\bY_k} = 1 - \delta$ with $0<\delta\le 1/(4n+1)$. Decompose
\[
\bY_k = \done_n\bar\by_k^\top + \bF,
\qquad
\bar\by_k := \frac{1}{n}\bY_k^\top\done_n,
\qquad
\bF := (\bI - \simmat)\bY_k.
\]
Let $c_0 = \sqrt{d}/\|\bar\by_k\|_2$ be the exact-collapse RMSNorm shrinkage factor. If $\delta \leq 1/(4n+1)$, then the per-row shrinkage satisfies
\[
\frac{\sqrt{d}}{\|\by_{k,i}\|_2}
\leq
c_0\!\left(1 + 2\sqrt{\frac{n\delta}{1-\delta}}\right)
\qquad \text{for every } i.
\]
Consequently, if the corresponding sublayer gradient contribution factor is bounded by $\alpha_k$, then
\[
\left\|\frac{\partial \caL}{\partial \bX_k}\right\|_F \leq c_0\!\left(1 + O(\sqrt{n\delta})\right)\alpha_k\left\|\frac{\partial \caL}{\partial \bX_{k+1}}\right\|_F.
\]
If $c_0\alpha_k \leq 1-\eta$ for some $\eta>0$, then for sufficiently small
$\delta>0$,
\[
c_0(1+O(\sqrt{n\delta}))\alpha_k < 1,
\]
so the gradient norm contracts from $\bX_{k+1}$ to $\bX_k$ in the
near-collapse regime.
\end{proposition}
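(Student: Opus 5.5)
Our plan is to control each row of $\bY_k$ as a perturbation of the mean row $\bar\by_k$, and then rerun the row-wise argument from the proof of Theorem~\ref{theorem_iterative_bound_on_gradient_norm} with the per-row prefactor in place of $c_0$.

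\textbf{Step 1: size of the fluctuation.} By definition of token similarity, $\frobsq{\simmat\bY_k} = n\twonormsq{\bar\by_k} = (1-\delta)\frobsq{\bY_k}$. Since $\simmat$ and $\bI-\simmat$ are complementary orthogonal projections,
\[
\frobsq{\bF} = \delta\frobsq{\bY_k} = \frac{\delta}{1-\delta}\, n\twonormsq{\bar\by_k}.
\]
Each row of $\bF$ is bounded by the Frobenius norm, so for every $i$
\[
\twonorm{\bF_{i,:}} \le \frob{\bF} = \sqrt{\frac{n\delta}{1-\delta}}\,\twonorm{\bar\by_k} =: \epsilon\,\twonorm{\bar\by_k}.
\]
This crude bound is where the factor $n$ enters; we accept it rather than seek a sharper per-row estimate.

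\textbf{Step 2: per-row shrinkage.} The reverse triangle inequality gives $\twonorm{\by_{k,i}} \ge (1-\epsilon)\twonorm{\bar\by_k}$. Hence
\[
\frac{\sqrt d}{\twonorm{\by_{k,i}}} \le \frac{c_0}{1-\epsilon}.
\]
The hypothesis $\delta\le 1/(4n+1)$ is equivalent to $n\delta/(1-\delta)\le 1/4$, so $\epsilon\le 1/2$. On $[0,1/2]$ we have $1/(1-\epsilon)\le 1+2\epsilon$, since $(1-\epsilon)(1+2\epsilon) = 1+\epsilon-2\epsilon^2 \ge 1$ for $\epsilon\le 1/2$. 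This yields the stated per-row bound $c_0(1+2\sqrt{n\delta/(1-\delta)})$.

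\textbf{Step 3: gradient bound.} For each row we apply the Jacobian formula for $\bJ_{\rms}$. The projection factor has spectral norm at most one, so
\[
\twonorm{(\partial\caL/\partial\bY_k)_{i,:}} \le c_0(1+2\epsilon)\,\twonorm{(\partial\caL/\partial\bX_{k+1})_{i,:}}.
\]
The prefactor bound is uniform in $i$, so squaring and summing over rows gives the same bound in Frobenius norm. Multiplying by $\alpha_k$ via Definition~\ref{assumption_sublayer_amplification} gives the second claim, since $2\epsilon = O(\sqrt{n\delta})$ for $\delta\le 1/(4n+1)$.

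\textbf{Step 4: contraction.} If $c_0\alpha_k\le 1-\eta$, then $c_0\alpha_k(1+2\epsilon)\le (1-\eta)(1+2\epsilon)$, which is strictly less than one once $2\epsilon<\eta/(1-\eta)$. This holds for all sufficiently small $\delta$, giving contraction.

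\textbf{Main obstacle.} The only delicate point is Step 1, namely converting a global similarity deficit into a uniform per-row bound. The Frobenius-norm bound handles it, and the hypothesis $\delta\le1/(4n+1)$ is exactly what makes $\epsilon\le1/2$. One interpretive caveat remains: $\alpha_k$ must be understood as the contribution factor measured at the near-collapsed state. It is a ratio of gradient norms defined in Definition~\ref{assumption_sublayer_amplification}, so this needs no further argument.
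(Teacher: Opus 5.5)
Your proposal is correct and follows the paper's proof essentially step for step. It uses the same orthogonal decomposition giving $\|\bF\|_F^2=\frac{n\delta}{1-\delta}\|\bar\by_k\|_2^2$, the same crude row bound $\|\bff_i\|_2\le\|\bF\|_F$, the reverse triangle inequality with $u\le 1/2$ and $1/(1-u)\le 1+2u$, and the row-wise RMSNorm Jacobian (projection of spectral norm one) combined with $\alpha_k$. The only addition is your explicit threshold $2\epsilon<\eta/(1-\eta)$ in Step 4, which makes the paper's ``sufficiently small $\delta$'' quantitative.
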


\begin{proof}
Since $\simmat\bY_k = \done_n\bar\by_k^\top$ and $(\bI-\simmat)\bY_k=\bF$, the token-similarity definition gives
\[
1-\delta
=
\tsim{\bY_k}
=
\frac{\|\simmat\bY_k\|_F^2}{\|\bY_k\|_F^2}
=
\frac{n\|\bar\by_k\|_2^2}{n\|\bar\by_k\|_2^2 + \|\bF\|_F^2}.
\]
Therefore
\[
\|\bF\|_F^2 = \frac{n\delta}{1-\delta}\|\bar\by_k\|_2^2.
\]
Let $\bff_i^\top$ denote the $i$-th row of $\bF$. Then
\[
\|\bff_i\|_2 \leq \|\bF\|_F = \|\bar\by_k\|_2\sqrt{\frac{n\delta}{1-\delta}}.
\]
By the reverse triangle inequality,
\[
\|\by_{k,i}\|_2 = \|\bar\by_k + \bff_i\|_2
\geq
\|\bar\by_k\|_2 - \|\bff_i\|_2
\geq
\|\bar\by_k\|_2\left(1-\sqrt{\frac{n\delta}{1-\delta}}\right).
\]
Under $\delta \leq 1/(4n+1)$, we have $\sqrt{\frac{n\delta}{1-\delta}} \leq 1/2$, so the lower bound is at least $\|\bar\by_k\|_2/2 > 0$. Hence
\[
\frac{\sqrt{d}}{\|\by_{k,i}\|_2}
\leq
\frac{\sqrt{d}}{\|\bar\by_k\|_2\left(1-\sqrt{\frac{n\delta}{1-\delta}}\right)}
=
c_0\cdot \frac{1}{1-u}
\leq
c_0(1+2u),
\]
where $u := \sqrt{\frac{n\delta}{1-\delta}}$ and we used $1/(1-u) \leq 1+2u$ for $u \leq 1/2$. The RMSNorm Jacobian at row $i$ has spectral norm $\sqrt{d}/\|\by_{k,i}\|_2$ (the projection factor $\bI - \by_{k,i}\by_{k,i}^\top/\|\by_{k,i}\|_2^2$ has spectral norm $1$), so the full block-diagonal Jacobian has operator norm
\[
\max_i \frac{\sqrt{d}}{\|\by_{k,i}\|_2}
\leq
c_0\!\left(1 + O(\sqrt{n\delta})\right).
\]
The gradient bound follows by the chain rule and submultiplicativity.
\end{proof}

\subsection{Residual-Stream Norm Growth at Exact Collapse}
\label{app:residual_norm_growth_details}
\label{app:lemma3a_norm_growth}

This subsection proves the exact-collapse norm-growth step used in the backward analysis. We first show it for the attention output matrix $\bW_O$, then for the FFN output matrix $\bW_2$.

\begin{proposition}[Residual-Stream Norm Growth at Exact Collapse]
\label{prop_residual_stream_norm_growth}
Consider one Post-Norm block as defined in Equation~\ref{eq:attention_ffn_definition}. Let
\[
\bG_1 := \pardir{\caL}{\bY_1},
\qquad
\bG_2 := \pardir{\caL}{\bY_2},
\qquad
\bH_1 := \bP\bX_1\bW_V,
\qquad
\bR_s := \silu(\bX_2\bW_1)\odot(\bX_2\bW_3).
\]
Suppose exact collapse holds at the attention and FFN sites, so
\[
\bX_1=\done_n\bx_1^\top,
\bY_1=\done_n\by_1^\top,
\qquad
\bX_2=\done_n\bx_2^\top,
\bY_2=\done_n\by_2^\top
\]
for some $\bx_1,\by_1,\bx_2,\by_2\in\bbR^d$. After one gradient step of size $\eta$ on the attention output matrix $\bW_O$, the residual stream becomes
\[
\bY_{1,W_O}(\eta)=\bY_1-\eta\bU_1,
\qquad
\bU_1:=\bH_1\bH_1^\top\bG_1.
\]
After one gradient step of size $\eta$ on the FFN output matrix $\bW_2$, the residual stream becomes
\[
\bY_{2,W_2}(\eta)=\bY_2-\eta\bU_2,
\qquad
\bU_2:=\bR_s\bR_s^\top\bG_2.
\]
Moreover,
\[
\frobsq{\bY_{1,W_O}(\eta)}=\frobsq{\bY_1}+\eta^2\frobsq{\bU_1},
\qquad
\frobsq{\bY_{2,W_2}(\eta)}=\frobsq{\bY_2}+\eta^2\frobsq{\bU_2}.
\]
In particular, whenever $\bU_1\neq 0$ or $\bU_2\neq 0$, the corresponding pre-RMS residual norm grows strictly for every $\eta>0$.
\end{proposition}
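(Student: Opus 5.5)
The plan is a direct first-order computation. First I write the parameter gradient and the resulting change of the residual stream. Then I expand the squared Frobenius norm and show that the cross term vanishes exactly under collapse. This works because RMSNorm's backward Jacobian annihilates the direction of its own input, and collapsed branch features are all-ones times a fixed row.

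\emph{Attention site.} With the folded notation, $\bY_1=\bX_1+\bH_1\bW_O$ where $\bH_1=\bP\bX_1\bW_V$, so $\pardir{\caL}{\bW_O}=\bH_1^\top\bG_1$. A step $\bW_O\mapsto\bW_O-\eta\bH_1^\top\bG_1$ with $\bX_1,\bP,\bW_V$ held fixed gives $\bY_{1,W_O}(\eta)=\bY_1-\eta\bH_1\bH_1^\top\bG_1=\bY_1-\eta\bU_1$, which is the first claim. Expanding,
\[
\frobsq{\bY_1-\eta\bU_1}=\frobsq{\bY_1}-2\eta\inner{\bY_1}{\bU_1}+\eta^2\frobsq{\bU_1},
\]
so the identity reduces to showing $\inner{\bY_1}{\bU_1}=0$.

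For this I use collapse twice. Because $\bP$ is row-stochastic and $\bX_1=\done_n\bx_1^\top$, we get $\bH_1=\bP\done_n\bx_1^\top\bW_V=\done_n\bh^\top$ with $\bh:=\bW_V^\top\bx_1$. Hence $\bH_1\bH_1^\top=\twonormsq{\bh}\,\done_n\done_n^\top$, and
\[
\inner{\bY_1}{\bU_1}=\twonormsq{\bh}\,\tr\!\bigl(\by_1\done_n^\top\done_n\done_n^\top\bG_1\bigr)=n\twonormsq{\bh}\,\done_n^\top\bG_1\by_1 .
\]
Next, $\bX_2=\rms(\bY_1)$, so by the backward formula of Section~\ref{section_backward_repair_bottleneck} each row of $\bG_1$ equals $(\pardir{\caL}{\bX_2})_{i,:}\bJ_{\rms}(\by_1)$. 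The matrix $\bJ_{\rms}(\by_1)$ is a multiple of $\bI-\proj(\by_1)$, which is symmetric and satisfies $(\bI-\proj(\by_1))\by_1=0$. Therefore $\bG_1\by_1=0$, and the cross term vanishes.

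\emph{FFN site.} The argument is identical. Since $\bY_2=\bX_2+\bR_s\bW_2$, the step $\bW_2\mapsto\bW_2-\eta\bR_s^\top\bG_2$ gives $\bU_2=\bR_s\bR_s^\top\bG_2$. Collapse $\bX_2=\done_n\bx_2^\top$ makes the elementwise map row-independent, so $\bR_s=\done_n\br^\top$ with $\br=\silu(\bW_1^\top\bx_2)\odot\bW_3^\top\bx_2$. The next operation is $\bX_1^{l+1}=\rms(\bY_2)$, so the same projection argument gives $\bG_2\by_2=0$, and again the cross term is $n\twonormsq{\br}\,\done_n^\top\bG_2\by_2=0$. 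Strict growth for $\eta>0$ whenever $\bU_k\neq0$ then follows immediately from the identity.

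The only delicate point is this last gradient identity. It needs the gradient into $\bY_k$ to pass through an RMSNorm whose input is exactly $\bY_k$, so I must check that the quantity called $\bG_k$ is the full gradient at $\bY_k$. This holds because in the Post-Norm block $\bY_k$ feeds only into the following RMSNorm. For the final layer I will assume, consistently with the architecture, that a final RMSNorm (without affine gain) precedes the LM head, so the same argument applies there.
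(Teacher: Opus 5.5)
Your proposal is correct and follows essentially the same route as the paper: write the one-step update $\bY_k-\eta\bU_k$, expand the squared Frobenius norm, and kill the cross term $\inner{\bY_k}{\bU_k}$ using the RMSNorm orthogonality $\bG_k\by_k=\bzero$ together with exact collapse (the paper states this as $\bG_k\bY_k^\top=\bzero$). Your remark that $\bY_k$ must feed only into the following RMSNorm, including a final norm before the LM head in the last layer, makes explicit an assumption that the paper uses without stating it.
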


\begin{proof}
We first derive the RMSNorm orthogonality identity used below. If $\bX = \rms(\bY) = \bT(\bY)\bY$ with
\[
\bT_{i,i} = \frac{1}{\sqrt{\frac{1}{d}\|\bY_{i,:}\|_2^2}},
\]
then the row-wise Jacobian gives
\[
\pardir{\caL}{\bY_{i,:}}
=
\bT_{i,i}
\pardir{\caL}{\bX_{i,:}}
\left(\bI - \frac{\bY_{i,:}^\top\bY_{i,:}}{\|\bY_{i,:}\|_2^2}\right).
\]
Therefore each gradient row is orthogonal to the corresponding residual row:
\[
\pardir{\caL}{\bY_{i,:}}\bY_{i,:}^\top = 0.
\]
Under exact collapse, all residual rows are identical, so this implies
\[
\pardir{\caL}{\bY}\bY^\top = \bzero.
\]

Now we consider the attention output matrix $\bW_O$.
At exact collapse $\tsim{\bX_1}=1$, write $\bX_1=\done_n\bx_1^\top$. Since $\bP\done_n=\done_n$, the attention feature before $\bW_O$ is
\[
\bH_1 :=\bP\bX_1\bW_V=\done_n\bh_1^\top
\]
for some $\bh_1\in\bbR^d$. One gradient step on $\bW_O$ gives
\[
\bW_O(\eta)=\bW_O-\eta\bH_1^\top\bG_1,
\]
so the residual stream update is
\[
\bY_{1,W_O}(\eta)=\bY_1-\eta\bH_1\bH_1^\top\bG_1=\bY_1-\eta\bU_1.
\]
Therefore
\[
\frobsq{\bY_{1,W_O}(\eta)} = \frobsq{\bY_1} - 2\eta\inner{\bY_1}{\bU_1} + \eta^2\frobsq{\bU_1}.
\]
Because $\bH_1=\done_n\bh_1^\top$, the matrix $\bU_1=\bH_1\bH_1^\top\bG_1$ also has identical rows. Applying the orthogonality identity above with $\bY=\bY_1$ gives $\bG_1\bY_1^\top=\bzero$, so $\inner{\bY_1}{\bU_1}=0$. Therefore
\[
\frobsq{\bY_{1,W_O}(\eta)} = \frobsq{\bY_1} + \eta^2\frobsq{\bU_1}.
\]

Second, consider the FFN output matrix $\bW_2$.
At exact collapse $\tsim{\bX_2}=1$, write $\bX_2=\done_n\bx_2^\top$. Then
\[
\bR_s=\silu(\bX_2\bW_1)\odot(\bX_2\bW_3)=\done_n\br_s^\top
\]
for some $\br_s\in\bbR^{d_{\mathrm{ff}}}$. One gradient step on $\bW_2$ gives
\[
\bW_2(\eta)=\bW_2-\eta\bR_s^\top\bG_2,
\]
so the residual stream update is
\[
\bY_{2,W_2}(\eta)=\bY_2-\eta\bR_s\bR_s^\top\bG_2=\bY_2-\eta\bU_2.
\]
The same expansion gives
\[
\frobsq{\bY_{2,W_2}(\eta)} = \frobsq{\bY_2} - 2\eta\inner{\bY_2}{\bU_2} + \eta^2\frobsq{\bU_2}.
\]
Because $\bR_s=\done_n\br_s^\top$, the matrix $\bU_2=\bR_s\bR_s^\top\bG_2$ also has identical rows. Applying the same orthogonality identity with $\bY=\bY_2$ gives $\bG_2\bY_2^\top=\bzero$, so $\inner{\bY_2}{\bU_2}=0$, hence
\[
\frobsq{\bY_{2,W_2}(\eta)}=\frobsq{\bY_2}+\eta^2\frobsq{\bU_2}.
\]
The strict-growth statement follows immediately.
\end{proof}

\paragraph{Remark.}
The residual stream norm appears in the contraction factor is $c(\by,\alpha)=\alpha\sqrt{d}/\|\by\|_2$ in Theorem~\ref{theorem_iterative_bound_on_gradient_norm}, which is less than 1 when $\|\by\|_2^2>d\alpha^2$. The exact-collapse calculation above shows a monotone direction: for any nonzero attention-path update, one gradient step strictly increases the pre-normalization residual norm, making the RMS factor smaller along that local path.

\section{Properties of a Collapsed Network: Details and Proofs}\label{app:collapsed_properties_details}

This section proves two properties of collapsed state. It first identifies the frequency distribution as the best prediction under exact collapse, then extends the loss bound beyond the exact rank-one setting, and finally shows why gradients vanish when the hidden state is rank one and the output matches that shared distribution.

\subsection{Proof of Theorem~\ref{theorem_frequency_distribution} (Lower Bound)}
This subsection proves the collapsed loss lower bound in Theorem~\ref{theorem_frequency_distribution}. Its role is to show that once all token representations coincide, the best achievable output distribution is the empirical frequency distribution of the labels.
\begin{proof}
Under exact collapse, the output probability matrix has identical rows. Let $\hat{\bp}$ be the common output distribution, so
\[
\hat\bP = \done \hat{\bp}^\top.
\]
If label $i$ appears $\beta_i$ times in $\by$, the cross-entropy loss becomes
\begin{equation*}
    -\frac{1}{n}\sum_{i = 1}^n \log \hat\bP_{i, y_i} = -\frac{1}{n}\sum_{i = 1}^v \beta_i\log \hat p_i.
\end{equation*}
This is only a regrouping by label counts.
To find the optimum, solve
\begin{align*}
    \min_{\hat{\bp}} \quad & -\sum_{i = 1}^v \beta_i\log \hat p_i \\
    \text{s.t.} \quad & \sum_{i = 1}^v \hat p_i = 1
\end{align*}
We define the Lagrangian with multiplier $\mu$:
\begin{equation*}
    \lambda (\hat{\bp}, \mu) = -\sum_{i = 1}^v \beta_i\log \hat p_i + \mu\left(1 - \sum_{i = 1}^v \hat p_i \right)
\end{equation*}
Applying the KKT conditions ($\partial \lambda/\partial \hat p_i = 0$):
\begin{equation*}
    -\frac{\beta_i}{\hat p_i} - \mu = 0 \implies \hat p_i = -\frac{\beta_i}{\mu}
\end{equation*}
Summing over $i$ and enforcing the constraint $\sum \hat p_i = 1$:
\begin{equation*}
    \sum_{i=1}^v \hat p_i = -\frac{1}{\mu}\sum_{i=1}^v \beta_i = 1 \implies \mu = -\sum_{i=1}^v \beta_i
\end{equation*}
Substituting $\mu$ back yields the optimal $\hat{\bp}$:
\begin{equation*}
    \hat p_i = \frac{\beta_i}{\sum_{j = 1}^v \beta_j}
\end{equation*}
This is exactly the frequency distribution of $\by$. 
\end{proof}

\subsection{Near-collapse extension of Theorem~\ref{theorem_frequency_distribution} (Lower Bound)}
We now consider the near-collapsed case and compare the resulting loss to the frequency-distribution optimum. While $\caL(\bX,\wlm,\by)$ is not jointly convex, it is convex in $\bX$ and in $\wlm$ separately. To see this: cross-entropy is convex in the logit matrix $\bZ$, and since $\bZ = \bX\wlm$ is linear in each factor when the other is fixed, the loss is convex in $\bX$ for any fixed $\wlm$, and convex in $\wlm$ for any fixed $\bX$. This follows because cross-entropy is convex in logits:
\begin{align*}
    \tlower{\caL}{CE}(\bZ, \by) &= -\frac{1}{n}\sum_{i = 1}^n (\bZ_{i, y_i} - \log(\sum_{j = 1}^v  \exp(\bZ_{i,j}))) \\
    &= \frac{1}{n}\sum_{i = 1}^n (-\bZ_{i, y_i} + \log(\sum_{j = 1}^v  \exp(\bZ_{i,j})))
\end{align*}
The log-sum-exp term is convex and the remaining term is linear, so the objective is convex in $\bZ$. Since $\bZ=\bX\wlm$ is linear in each variable when the other is fixed, the objective is convex in $\bX$ (resp. $\wlm$) conditionally.

The gradients of $\caL$ w.r.t. $\wlm$, $\bX$ are
\begin{align*}
    \frac{\partial \caL}{\partial \wlm} &= \frac{1}{n}\bX^T(\softmax(\bX\wlm) - \bE(\by)) =  \bX^T\frac{\partial \caL}{\partial \bZ}\\
    \frac{\partial \caL}{\partial \bX} &= \frac{1}{n}(\softmax(\bX\wlm) - \bE(\by))\wlm^T = \frac{\partial \caL}{\partial \bZ}\wlm^T
\end{align*}
where $\bE(\by) \in \bR^{n \times v}$ is the one-hot matrix for labels $\by$
\begin{align*}
    \bE(\by)_{i, j} = \begin{cases}
        1, \quad \text{If } j = y_i \\
        0, \quad \text{Otherwise }
    \end{cases}
\end{align*}
We have the following inequality
\begin{align*}
    \|\frac{\partial \caL}{\partial \bX}\|_F &= \frac{1}{n}\|(\softmax(\bX\wlm) - \bE(\by))\wlm^T\|_F \\
     &\leq \frac{1}{n}\|(\softmax(\bX\wlm) - \bE(\by))\|_F\|\wlm\|_2 \\
     &\leq \frac{1}{n}(\|\softmax(\bX\wlm)\|_F + \|\bE(\by)\|_F)\|\wlm\|_2 \\
     &\leq \frac{1}{n}(\sqrt{n} + \sqrt{n})\|\wlm\|_2 \\
     &= \frac{2}{\sqrt{n}}\|\wlm\|_2 
\end{align*}
The fourth line uses that $\bE(\by)$ has exactly $n$ unit entries and each row of $\softmax(\bX\wlm)$ is a probability vector, hence row norm at most 1.
Let $\bar \bX = \simmat \bX$. 
Since the function is convex w.r.t. $\bX$ when fixing $\wlm$ and $\by$, by subgradient inequality, we have the following inequalities
\begin{align*}
     \caL(\bX, \wlm, \by) &\geq \caL(\bar \bX, \wlm, \by) + \langle\frac{\partial \caL}{\partial \bX}(\bar \bX, \wlm, \by), \bX - \bar \bX\rangle \\
    &\geq \caL(\bar \bX, \wlm, \by)  - \|\frac{\partial \caL}{\partial \bX}(\bar \bX, \wlm, \by)\|_F \|\bX - \bar \bX\|_F \\
    &\geq \caL(\bar \bX, \wlm, \by)  -  \frac{2}{\sqrt{n}}\|\wlm\|_2 \epsilon
\end{align*}
The second inequality is Cauchy-Schwarz. The last uses $\|\frac{\partial \caL}{\partial \bX}\|_F \leq \frac{2}{\sqrt{n}}\|\wlm\|_2$ and $\|\bX - \bar \bX\|_F \leq \epsilon$.

\subsection{Proof of Theorem~\ref{theorem_frequency_distribution} (Gradient Vanishing)}
We prove this result in a more realistic setting with multi-head attention and SwiGLU. The attention computation is
\begin{align*}
        \bQ_h &= \bX_1(\bW_Q)_h \\
        \bK_h &= \bX_1(\bW_K)_h\\
        \bV_h &= \bX_1(\bW_V)_h \\
        \bO_h &= \softmaxrow(\frac{\bQ_h\bK_h^T}{\sqrt{d}} + \tlower{\bM}{causal}) \bV_h \\
        \attn(\bX) &= \begin{bmatrix}
            \bO_1 & \bO_2 & ... & \bO_H
        \end{bmatrix}
        \bW_O \\
        \bY_1&= \bX_1 + \attn(\bX)
\end{align*}
The SwiGLU FFN is
\begin{align*}
    \ffn(\bX_2) &= (\silu(\bX_2\bW_1) \odot (\bX_2\bW_3))\bW_2 \\
    \bY_2 &= \bX_2 + \ffn(\bX_2)
\end{align*}
where $(\tlower{\bM}{causal})_{i, j} = 0$ for $i \geq j$, and $-\infty$ otherwise, and $\silu(\bX_2) = (\bX_2 \odot \sigmoid(\bX_2))$. RMSNorm is unchanged.

\begin{lemma}[Forward closure of row-constant states]
\label{lemma_row_constant_forward_closure}
If the input to an attention or FFN sublayer is row-constant, then its residual output is row-constant. Row-wise RMSNorm also maps a row-constant matrix to a row-constant matrix. Consequently, once a post-norm state is row-constant, all downstream hidden states are row-constant.
\end{lemma}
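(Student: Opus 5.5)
The argument is a direct structural check. Throughout, ``row-constant'' means $\bX=\done_n\bx^\top$ for some $\bx\in\bbR^d$, equivalently $\tsim{\bX}=1$ (for $\bX\neq 0$). The basic fact is that left-multiplying a row-constant matrix by any matrix whose rows sum to one, or right-multiplying it by any matrix, preserves row-constancy, because $\done_n\bx^\top\bW=\done_n(\bW^\top\bx)^\top$.

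For the multi-head attention sublayer with $\bX_1=\done_n\bx_1^\top$, I proceed head by head.
\begin{itemize}
\item The values are $\bV_h=\done_n(\bx_1^\top(\bW_V)_h)$, which is row-constant.
\item Whatever the (possibly non-uniform) attention matrix $\bP_h=\softmaxrow(\cdot+\tlower{\bM}{causal})$ is, each of its rows is a probability vector. Since the causal mask only zeroes entries, every row still sums to one, so $\bP_h\done_n=\done_n$.
\item Hence $\bO_h=\bP_h\done_n(\bx_1^\top(\bW_V)_h)=\done_n(\bx_1^\top(\bW_V)_h)$.
\end{itemize}
I do not need to compute $\bP_h$, and this sidesteps the query/key logits entirely. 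Concatenating the heads gives $\done_n[\bo_1^\top,\dots,\bo_H^\top]$. Multiplying by $\bW_O$ preserves row-constancy, and adding the row-constant residual $\bX_1$ gives a row-constant $\bY_1$.

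For the SwiGLU sublayer with $\bX_2=\done_n\bx_2^\top$:
\begin{itemize}
\item $\bX_2\bW_1$ and $\bX_2\bW_3$ are row-constant.
\item Element-wise maps such as $\silu$ act identically on identical rows, and the Hadamard product of two row-constant matrices is $\done_n(\ba\odot\bb)^\top$.
\item So $\bR_s$ is row-constant, hence so is $\bR_s\bW_2$, and adding the residual keeps $\bY_2$ row-constant.
\end{itemize}

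For RMSNorm, it acts row-wise by the same map, $\by\mapsto\by/\sqrt{\|\by\|^2/d}$, optionally with $\epsilon$ and a learned gain. Identical rows are therefore sent to identical rows. The only caveat is the zero row when $\epsilon=0$, which I would exclude by assumption or handle by keeping $\epsilon>0$.

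Finally, I chain these facts by induction along the forward order of Equation~\eqref{eq:X_Y_definition}. If some post-norm state $\bX^l_k$ is row-constant, then $\bY^l_k$ is row-constant, then $\bX^l_{k+1}=\rms(\bY^l_k)$ is, and so on through all later layers. If a final norm precedes $\bX^H$, the same RMSNorm argument covers it.

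There is no real obstacle here. The only point requiring care is the attention step: the conclusion must not rely on the prefix-averaging surrogate $\bC_n$ but only on the row-stochasticity of the actual causal softmax. The single-head folded case $\bP\bX\bW$ is the special case $H=1$.
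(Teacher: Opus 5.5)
Your proof is correct and follows essentially the same route as the paper: row-stochasticity of the causal softmax ($\bP_h\done_n=\done_n$) keeps each head output row-constant, SwiGLU and the projections act identically on identical rows, and RMSNorm applies the same map to every row. Your caveat about a zero row when $\epsilon=0$ is extra care that the paper leaves implicit.
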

\begin{proof}
Let $\bX=\done_n\bx^\top$. For attention head $h$, $\bV_h=\done_n\bv_h^\top$ for some $\bv_h$. Its causal softmax matrix $\bP_h$ is row-stochastic, so $\bP_h\done_n=\done_n$ and hence $\bP_h\bV_h=\done_n\bv_h^\top$. Concatenation, output projection, and residual addition therefore preserve identical rows. For SwiGLU, both $\bX\bW_1$ and $\bX\bW_3$ have identical rows; the same is true after applying SiLU, the Hadamard product, the output projection, and the residual addition. Finally, RMSNorm applies the same row-wise map to identical rows.
\end{proof}

\paragraph{Backward Propagation of Zero-Column-Sum Gradients}
The proof starts at the LM head and then moves backward through the collapsed blocks. The key invariant is that once the gradient has zero column sum at one interface, each collapsed module preserves that property, and the FFN and attention blocks also force their own parameter gradients to vanish. The main proof is divided into the following steps
\begin{enumerate}
    \item If outputs equal the frequency distribution at all positions, then for the pre-LM-head embedding $\bX^H$:
    \begin{align*}
        \done_n^T\frac{\partial \caL}{\partial \bX^H} = 0
    \end{align*}
    Denoting the LM-head matrix by $\wlm$, if also $\bX^H = \done_n(\bs^H)^\top$, then
    \begin{align*}
        \frac{\partial \caL}{\partial \wlm} = 0
    \end{align*}
    \item If $\done_n^T\frac{\partial \caL}{\partial \bY_2^{l}} = 0$ and $\bX_2^{l} = \done_n(\bx_2^{l})^\top$, then all FFN parameter gradients are zero, and $\done_n^T\frac{\partial \caL}{\partial \bX_2^{l}} = 0$.
    \item If $\done_n^T\frac{\partial \caL}{\partial \bY_1^{l}} = 0$ and $\bX_1^{l} = \done_n(\bx_1^{l})^\top$, then all attention parameter gradients are zero, and $\done_n^T\frac{\partial \caL}{\partial \bX_1^{l}} = 0$.
    \item If $\done_n^T\frac{\partial \caL}{\partial \bX_2^{l}} = 0$ and $\bY_1^{l} = \done_n(\by_1^{l})^\top$, then RMSNorm preserves zero column sum, i.e. $\done_n^T\frac{\partial \caL}{\partial \bY_1^{l}} = 0$. The same argument applies to the RMSNorm between $\bY_2^{l}$ and $\bX_1^{l+1}$.
\end{enumerate}
We now begin to state our lemma formally
\begin{lemma}
\label{lm_head_gradient}
    Let $\bX^H$ be the embedding before the LM-head. If the output probability equals the frequency distribution, then the gradient w.r.t. $\bX^H$ has zero column sum:
    \begin{align*}
        \done_n^T\frac{\partial \caL}{\partial \bX^H} = 0
    \end{align*}
    Denote the matrix of LM-head as $\wlm$, then if also $\bX^H = \done_n(\bx^H)^\top$, then
    \begin{align*}
        \frac{\partial \caL}{\partial \wlm} = 0
    \end{align*}
\end{lemma}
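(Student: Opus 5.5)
The plan is to compute the logit gradient explicitly and then push it through the linear LM head. Write $\bZ = \bX^H\wlm$. From the gradient formulas derived in the near-collapse extension above,
\[
\pardir{\caL}{\bZ} = \frac{1}{n}\bigl(\hat\bP - \bE(\by)\bigr),
\]
where $\bE(\by)$ is the one-hot label matrix. The hypothesis says every row of $\hat\bP$ equals $\tlower{\bp}{freq}(\by)^\top$, so $\hat\bP = \done_n\tlower{\bp}{freq}(\by)^\top$.

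The central step is computing the column sums of $\pardir{\caL}{\bZ}$. We have
\[
\done_n^\top\hat\bP = n\,\tlower{\bp}{freq}(\by)^\top .
\]
Column $j$ of $\bE(\by)$ has $c_j$ ones, so
\[
\done_n^\top\bE(\by) = (c_1,\dots,c_v) = n\,\tlower{\bp}{freq}(\by)^\top
\]
by Definition~\ref{definition_frequency_distribution}. Therefore $\done_n^\top\pardir{\caL}{\bZ} = 0$.

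The first claim then follows from the chain rule $\pardir{\caL}{\bX^H} = \pardir{\caL}{\bZ}\wlm^\top$. Left-multiplying by $\done_n^\top$ gives
\[
\done_n^\top\pardir{\caL}{\bX^H} = \bigl(\done_n^\top\pardir{\caL}{\bZ}\bigr)\wlm^\top = 0 .
\]

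For the second claim, use $\pardir{\caL}{\wlm} = (\bX^H)^\top\pardir{\caL}{\bZ}$ and substitute $\bX^H = \done_n(\bx^H)^\top$:
\[
\pardir{\caL}{\wlm} = \bx^H\,\done_n^\top\pardir{\caL}{\bZ} = \bx^H\cdot 0 = 0 .
\]

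No step here is a real obstacle; the only point needing care is matching conventions. The $1/n$ normalization in $\tlower{\caL}{CE}$ and the counts $c_i$ in Definition~\ref{definition_frequency_distribution} must line up, so that $\done_n^\top\hat\bP$ and $\done_n^\top\bE(\by)$ are both exactly $n\,\tlower{\bp}{freq}(\by)^\top$.

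The remark after Theorem~\ref{theorem_frequency_distribution} notes that $\tlower{\bp}{freq}(\by)$ is unattainable by a softmax when some $c_i = 0$. The lemma avoids this issue because it assumes the output equals $\tlower{\bp}{freq}(\by)$ outright, and the argument uses only that equality, never attainability.
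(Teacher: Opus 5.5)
Your proof is correct and matches the paper's argument: you compute the logit gradient $\frac{1}{n}(\hat\bP-\bE(\by))$, show its column sums vanish because $\done_n^\top\hat\bP=\done_n^\top\bE(\by)$ under the frequency-distribution hypothesis, and push this through $\pardir{\caL}{\bX^H}=\pardir{\caL}{\bZ}\wlm^\top$ and $\pardir{\caL}{\wlm}=(\bX^H)^\top\pardir{\caL}{\bZ}$. Your write-up is slightly cleaner than the paper's, since you keep the $1/n$ factor and write the rank-one factorization correctly as $\bx^H\done_n^\top$.
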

This lemma establishes the base case: if the output equals the frequency distribution and the last hidden state is rank-one, all gradients vanish at the LM head. The following lemmas show that the zero-column-sum property of the gradient propagates backward through each module.
\begin{theorem} \label{theorem_gradient_vanishing}
Now we prove the gradient-vanishing part of Theorem~\ref{theorem_frequency_distribution}.
\begin{enumerate}[(a)]
    \item \label{item_ffn_gradient} (Gradient vanishing of FFN) If \ $\done_n^T\frac{\partial \caL}{\bY_2^{l}} = 0$, and $\bX_2^{l} = \done_n(\bx_2^{l})^\top$, then $\frac{\partial \caL}{\partial \bW} = 0$ for every parameter matrix $\bW$ in Feed-Forward module, and $\done_n^T\frac{\partial \caL}{\bX_2^{l}} = 0$. 
    \item \label{item_attn_gradient} (Gradient vanishing of Attention)
    If \ $\done_n^T\frac{\partial \caL}{\bY_1^{l}} = 0$, and $\bX_1^{l} = \done_n(\bx_1^{l})^\top$, then $\frac{\partial \caL}{\partial \bW} = 0$ for every parameter matrix $\bW$ in Attention module, and $\done_n^T\frac{\partial \caL}{\bX_1^{l}} = 0$.
\end{enumerate}
\end{theorem}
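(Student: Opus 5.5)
The plan is to verify both parts directly from the chain rule, using two elementary facts about an exactly collapsed (row-constant) state. First, if $\bX=\done_n\bx^\top$ and $\bD$ has zero column sum ($\done_n^\top\bD=0$), then $\bX^\top\bD=\bx\,\done_n^\top\bD=0$; this gives vanishing of every weight gradient of the form $\bX^\top(\cdot)$. Second, the zero-column-sum property is preserved by three operations: right multiplication by any matrix; Hadamard multiplication by a row-constant matrix $\done_n\bc^\top$, which equals right multiplication by $\Diag(\bc)$; and left multiplication by $\bP^\top$ with $\bP$ row-stochastic, since $\done_n^\top\bP^\top\bD=(\bP\done_n)^\top\bD=\done_n^\top\bD$. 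Lemma~\ref{lemma_row_constant_forward_closure} guarantees that every intermediate forward quantity inside the collapsed sublayer is row-constant. For the attention heads this means each $\bV_h$ and each $\bP_h\bV_h$ is row-constant.

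For part (a), write $\bG=\pardir{\caL}{\bY_2^l}$ with $\done_n^\top\bG=0$, and set $\bA=\bX_2\bW_1$, $\bB=\bX_2\bW_3$, $\bR_s=\silu(\bA)\odot\bB$. All of $\bA,\bB,\bR_s,\silu(\bA),\silu'(\bA)$ are row-constant.
\begin{itemize}
\item The output-projection gradient is $\pardir{\caL}{\bW_2}=\bR_s^\top\bG=0$ by the first fact.
\item The gradient $\pardir{\caL}{\bR_s}=\bG\bW_2^\top$ has zero column sum.
\item Hence $\pardir{\caL}{\bA}=(\bG\bW_2^\top)\odot\bB\odot\silu'(\bA)$ and $\pardir{\caL}{\bB}=(\bG\bW_2^\top)\odot\silu(\bA)$ have zero column sum by the Hadamard fact.
\item This yields $\pardir{\caL}{\bW_1}=\bX_2^\top\pardir{\caL}{\bA}=0$ and $\pardir{\caL}{\bW_3}=0$.
\item Finally, $\pardir{\caL}{\bX_2}=\bG+\pardir{\caL}{\bA}\bW_1^\top+\pardir{\caL}{\bB}\bW_3^\top$ is a sum of zero-column-sum matrices.
\end{itemize}

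For part (b), with $\bG=\pardir{\caL}{\bY_1^l}$, the plan is as follows.
\begin{itemize}
\item The concatenated head output $\bO=[\bO_1,\dots,\bO_H]$ is row-constant, so $\pardir{\caL}{\bW_O}=\bO^\top\bG=0$.
\item The gradient $\pardir{\caL}{\bO}=\bG\bW_O^\top$ has zero column sum, and so does each head block $\bD_h$.
\item For the value path, $\pardir{\caL}{\bV_h}=\bP_h^\top\bD_h$ has zero column sum by the row-stochastic fact. Therefore $\pardir{\caL}{(\bW_V)_h}=\bX_1^\top\pardir{\caL}{\bV_h}=0$.
\end{itemize}

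The step I expect to be the main obstacle is the softmax path, where I must show the logit gradient vanishes identically, not merely that it has zero column sum. Here $\pardir{\caL}{\bP_h}=\bD_h\bV_h^\top=\bD_h\bv_h\done_n^\top$, so row $i$ of this matrix is the constant $c_i:=(\bD_h)_{i,:}\bv_h$ in every column. The row-wise softmax backward formula is $\pardir{\caL}{\bS_{i,j}}=(\bP_h)_{i,j}\bigl(\pardir{\caL}{(\bP_h)_{i,j}}-\sum_k(\bP_h)_{i,k}\pardir{\caL}{(\bP_h)_{i,k}}\bigr)$. On unmasked entries this equals $(\bP_h)_{i,j}(c_i-c_i)=0$, because $\sum_k(\bP_h)_{i,k}=1$. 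On masked entries $(\bP_h)_{i,j}=0$, so the formula also gives zero. Thus the logit gradient is $\bzero$. Consequently $\pardir{\caL}{\bQ_h}=\pardir{\caL}{\bK_h}=0$, which gives $\pardir{\caL}{(\bW_Q)_h}=\pardir{\caL}{(\bW_K)_h}=0$. These two paths also contribute nothing to $\pardir{\caL}{\bX_1}$.

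It then follows that $\pardir{\caL}{\bX_1}=\bG+\sum_h\pardir{\caL}{\bV_h}(\bW_V)_h^\top$ has zero column sum, which completes (b). One point needs care: $(\bW_V)_h$ is the value projection that produces $\bV_h=\bX_1(\bW_V)_h$, which is why the value-path term enters $\pardir{\caL}{\bX_1}$ through $(\bW_V)_h^\top$.
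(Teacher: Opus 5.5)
Your proof is correct and follows the paper's overall strategy. Every forward quantity inside the collapsed sublayer is row-constant, and the backward gradient keeps zero column sums, so every weight gradient of the form $(\text{row-constant})^\top(\text{zero column sum})$ vanishes. Part (a), and the $\bW_O$ and $\bW_V$ steps of part (b), match the paper essentially line by line.

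The one genuine difference is the softmax path.

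\emph{The paper's route.} It uses only the generic identity $\pardir{\caL}{\bS_h}\done_n=\bzero$. This holds for any upstream gradient, because softmax is invariant to adding a constant to each row of the logits. The paper then relies on $\bK_h=\done_n\bx^\top(\bW_K)_h$ being row-constant to conclude $\pardir{\caL}{\bQ_h}=0$ and $\pardir{\caL}{(\bW_K)_h}=0$. For the key gradient itself it obtains only the weaker $\done_n^\top\pardir{\caL}{\bK_h}=0$.

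\emph{Your route.} You use that $\bV_h$ is row-constant, so $\pardir{\caL}{\bP_h}=\pardir{\caL}{\bO_h}\bv_h\done_n^\top$ is constant along each row. The logit gradient therefore vanishes identically. This gives the stronger conclusion $\pardir{\caL}{\bQ_h}=\pardir{\caL}{\bK_h}=0$, and the query/key paths drop out of $\pardir{\caL}{\bX_1}$ entirely. It also exposes the underlying reason: with identical value rows, $\bO_h=\done_n\bv_h^\top$ for every row-stochastic $\bP_h$, so the head output is locally independent of the attention logits.

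\emph{Trade-off.} The paper's version keeps the query/key argument decoupled from the value path. Yours is more conceptual and yields exact vanishing rather than only a column-sum invariant.

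You also state $\pardir{\caL}{(\bW_V)_h}=0$ explicitly. The paper leaves this implicit in its computation of $\done_n^\top\pardir{\caL}{\bV_h}=0$.
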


\begin{lemma}
\label{rms_gradient}
    (RMS norm preserves zero column sum of gradient) If \ $\done_n^T\frac{\partial \caL}{\bX_2^{l}} = 0$, and $\bY_1^{l} = \done_n(\by_1^{l})^\top$, then $\done_n^T\frac{\partial \caL}{\bY_1^{l}} = 0$. If \ $\done_n^T\frac{\partial \caL}{\bX_1^{l+1}} = 0$, and $\bY_2^{l} = \done_n(\by_2^{l})^\top$, then $\done_n^T\frac{\partial \caL}{\bY_2^{l}} = 0$. 
\end{lemma}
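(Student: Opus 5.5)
The plan is to work row by row with the RMSNorm Jacobian already written in Section~\ref{section_backward_repair_bottleneck}. Write $\bG := \pardir{\caL}{\bX_2^{l}}$ and note that $\bX_2^{l} = \rms(\bY_1^{l})$ acts independently on each row. Hence
\[
\Bigl(\pardir{\caL}{\bY_1^{l}}\Bigr)_{i,:} = \bG_{i,:}\,\bJ_{\rms}\bigl((\bY_1^{l})_{i,:}\bigr).
\]

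Under the hypothesis $\bY_1^{l} = \done_n(\by_1^{l})^\top$, every row of $\bY_1^{l}$ equals the same vector $\by_1^{l}$. Therefore the Jacobian is one common matrix for all rows,
\[
\bJ := \frac{\sqrt{d}}{\twonorm{\by_1^{l}}}\Bigl(\bI - \proj(\by_1^{l})\Bigr).
\]
This needs $\by_1^{l}\neq 0$ so that RMSNorm is differentiable, which is implicit since the paper's RMSNorm has no $\epsilon$. Stacking the rows then gives the matrix identity $\pardir{\caL}{\bY_1^{l}} = \bG\bJ$.

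Left-multiplying by $\done_n^\top$ and using associativity gives
\[
\done_n^\top \pardir{\caL}{\bY_1^{l}} = (\done_n^\top \bG)\,\bJ = \bzero^\top \bJ = \bzero^\top,
\]
by the hypothesis $\done_n^\top \bG = 0$.

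The second claim, for the RMSNorm between $\bY_2^{l}$ and $\bX_1^{l+1}$, follows by the identical argument. Replace $\bY_1^{l}, \bX_2^{l}, \by_1^{l}$ by $\bY_2^{l}, \bX_1^{l+1}, \by_2^{l}$.

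The only point that needs care is that the Jacobian is the same matrix on every row; this is exactly what the row-constant hypothesis provides. With row-dependent Jacobians, $\sum_i \bG_{i,:}\bJ_i$ need not vanish even though $\sum_i \bG_{i,:} = 0$. No other obstacle is expected: the argument is purely linear and needs no properties of the projection beyond its row-independence. If the learned gain or $\epsilon$ were included, the same reasoning would apply, because those also yield a common per-row Jacobian.
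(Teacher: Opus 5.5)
Your proof is correct and is essentially the paper's argument: the row-constant hypothesis gives every row the same RMSNorm Jacobian $\frac{\sqrt{d}}{\twonorm{\by}}\bigl(\bI - \frac{\by\by^\top}{\twonorm{\by}^2}\bigr)$, so $\done_n^\top \pardir{\caL}{\bY}$ equals $\bigl(\done_n^\top \pardir{\caL}{\bX}\bigr)$ times this matrix, which is zero. The paper leaves two points implicit that you make explicit: the requirement $\by \neq 0$, and the fact that the argument carries over to RMSNorm with a learned gain or an $\epsilon$ stabilizer, since each still gives a common per-row Jacobian.
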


\subsubsection{Proof of Lemma~\ref{lm_head_gradient}}
\begin{proof}
Let $\hat\bP$ be the output probability matrix after softmax, and let $\bE(\by)$ be the one-hot label matrix (row $i$ has 1 at index $y_i$). Denote the LM-head input by $\bX$. Then
\begin{align*}
    \frac{\partial \caL}{\partial \bX} &= (\hat\bP - \bE(\by))\wlm^T, \qquad 
    \frac{\partial \caL}{\partial \wlm} &= \bX^T(\hat\bP - \bE(\by))
\end{align*}
Suppose label $y_i$ appears $k_i$ times. Then if $\hat\bP_{ij} = \frac{k_i}{n}$, we have $\done_n^T\hat\bP = \done_n^T\bE(\by)$. Therefore,
\begin{align*}
    \frac{\partial \caL}{\partial \wlm} &= \bX^T(\hat\bP - \bE(\by))= \bx^T\done_n^T(\hat\bP - \bE(\by)) = 0 \\
    \done_n^T\frac{\partial \caL}{\partial \bX} &= \done_n^T(\hat\bP - \bE(\by))\wlm^T  = 0
\end{align*}
\end{proof}

\subsubsection{Proof of Theorem~\ref{theorem_gradient_vanishing} (\ref{item_ffn_gradient})}
\begin{proof}
For the FFN layer we consider SwiGLU, where
\begin{align*}
    \ffn(\bX) &= (\silu(\bX\bW_1) \odot (\bX\bW_3))\bW_2
\end{align*}
where $\silu(\bX) = (\bX \odot \sigmoid(\bX))$ \\
Let 
\begin{align*}
    \bA &= \bX\bW_1\\
    \bG &= \silu(\bA) \\
    \bU&= \bX \bW_3 \\
    \bH &= \bG \odot \bU \\
    \bY &= \bX + \ffn(\bX)
\end{align*}
The two induction hypotheses are
\begin{itemize}
    \item Let $\done_n^T \in \mathbb{R}^n$ be the all-one row vector; then $\done_n^T\frac{\partial \caL}{\partial \bY} = 0$.
    \item Input $\bX = \done_n\bx^\top$ for some $\bx$ 
\end{itemize}
Then 
\begin{align*}
    \frac{\partial \caL}{\partial \bW_2} &= \bH^T \frac{\partial \caL}{\partial \bY}\\
    \frac{\partial \caL}{\partial \bH} &= \frac{\partial \caL}{\partial \bY}\bW_2^T\\
    \frac{\partial \caL}{\partial \bG} &= \frac{\partial \caL}{\partial \bH} \odot \bU \\
     \frac{\partial \caL}{\partial \bU} &= \frac{\partial \caL}{\partial \bH} \odot \bG \\
     \frac{\partial \caL}{\partial \bW_3} &= \bX^T \frac{\partial \caL}{\partial \bU} \\
     \frac{\partial \caL}{\partial \bA} &= \frac{\partial \caL}{\partial \bG} \odot (\sigma(\bA) + \bA \odot \sigma(\bA) \odot (1 - \sigma(\bA))) \\
     \frac{\partial \caL}{\partial \bW_1} &= \bX^T \frac{\partial \caL}{\partial \bA} \\
     \frac{\partial \caL}{\partial \bX} &= \frac{\partial \caL}{\partial \bY} + \frac{\partial \caL}{\partial \bA} \bW_1^T + \frac{\partial \caL}{\partial \bU} \bW_3^T
\end{align*}
Here we need to use the fact that
\begin{align*}
    \bM \odot (\done_n\bv^\top) &= \bM \Diag(\bv)
\end{align*}
Since $\bA = \bX \bW_1 = \done_n \bx^\top \bW_1 := \done_n\ba^\top$, we have $\sigma(\bA) = \done_n \bxi^\top$, $\bG := \done_n \bg^\top$ for some $\bg$, then
\begin{align*}
    \bH &= \bG \odot \bU = (\done_n \bg^\top) \odot (\done_n \bx^\top \bW_3) = (\done_n \bg^\top)\Diag(\bW_3^\top\bx)
\end{align*}
So
\begin{align*}
    \frac{\partial \caL}{\partial \bW_2} &= \bH^T \frac{\partial \caL}{\partial \bY} = \Diag(\bW_3^\top\bx)\bg(\done_n^T\frac{\partial \caL}{\partial \bY}) = 0
\end{align*}
and
\begin{align*}
    \frac{\partial \caL}{\partial \bW_3} &= \bX^T ((\frac{\partial \caL}{\partial \bY}\bW_2^T) \odot \bG) = \bx\done_n^T((\frac{\partial \caL}{\partial \bY}\bW_2^T) \Diag(\bg)) = \bx(\done_n^T\frac{\partial \caL}{\partial \bY}) \bW_2^T\Diag(\bg) = 0
\end{align*}
and
\begin{align*}
    \frac{\partial \caL}{\partial \bA} &= \frac{\partial \caL}{\partial \bH} \odot \bU \odot (\sigma(\bA) + \bA \odot \sigma(\bA) \odot (1 - \sigma(\bA))) \\
    &= \frac{\partial \caL}{\partial \bY}\bW_2^T \Diag(\bW_3^\top\bx)\Diag(\bxi + \ba \odot \bxi \odot (1 - \bxi)) =: \frac{\partial \caL}{\partial \bY} \bC
\end{align*}
for some matrix $\bC = \bW_2^T \Diag(\bW_3^\top\bx)\Diag(\bxi + \ba \odot \bxi \odot (1 - \bxi))$, and 
\begin{align*}
    \frac{\partial \caL}{\partial \bW_1} &= \bX^T \frac{\partial \caL}{\partial \bA} = \bx(\done_n^T \frac{\partial \caL}{\partial \bY}) \bC = 0
\end{align*}
and 
\begin{align*}
    \frac{\partial \caL}{\partial \bU} &= (\frac{\partial \caL}{\partial \bY} \bW_2^T) \odot \bG = \frac{\partial \caL}{\partial \bY} \bW_2^T \Diag (\bg)
\end{align*}
So the gradient $\frac{\partial \caL}{\partial \bX}$ satisfies
\begin{align*}
    \done_n^T\frac{\partial \caL}{\partial \bX} &= \done_n^T\frac{\partial \caL}{\partial \bY} + (\done_n^T \frac{\partial \caL}{\partial \bY}) \bC\bW_1^T + (\done_n^T \frac{\partial \caL}{\partial \bY})\bW_2^T\Diag(\bg)\bW_3^T = 0.
\end{align*}
\end{proof}

\subsubsection{Proof of Theorem~\ref{theorem_gradient_vanishing} (\ref{item_attn_gradient})}
\begin{proof}
Denote the attention operation $\attn(\bX)$ as computing the following
\begin{align*}
    \bQ_h &= \bX(\bW_Q)_h \\
    \bK_h &= \bX(\bW_K)_h\\
    \bV_h &= \bX(\bW_V)_h \\
    \bO_h &= \softmaxrow(\frac{\bQ_h\bK_h^T}{\sqrt{d}} + \tlower{\bM}{causal}) \bV_h \\
    \attn(\bX) &= \begin{bmatrix}
        \bO_1 & \bO_2 & ... & \bO_H
    \end{bmatrix}
    \bW_O \\
    \bY&= \bX + \attn(\bX)
\end{align*}
where $(\tlower{\bM}{causal})_{i, j} = 0$ for $i \geq j$, and $-\infty$ otherwise. 
Besides the aforementioned notations, we need to add some extra notations. Let 
\begin{align*}
    \bP_h &= \softmaxrow(\bS_h + \tlower{\bM}{causal}) \\
    \bS_h&= \frac{\bQ_h\bK_h^T}{\sqrt{d}}\\
    \hat \bS_h&= \bS_h + \tlower{\bM}{causal}\\
    \bO &= \begin{bmatrix}
        \bO_1 & \bO_2 & ... & \bO_H
    \end{bmatrix} \\
    \bW_O &= \begin{bmatrix}
        (\bW_O)^T_1 &  (\bW_O)^T_2 & ... &  (\bW_O)^T_H
    \end{bmatrix}
\end{align*}
Then
\begin{align*}
    \frac{\partial \caL}{\partial \bW_O} &= \bO^T\frac{\partial \caL}{\partial \bY} = \begin{bmatrix}
        (\frac{\partial \caL}{\partial \bY})^T\bO_1 & ... & (\frac{\partial \caL}{\partial \bY})^T\bO_H
    \end{bmatrix}^T
\end{align*}
and since each row of $\bP_h$ sums to 1 (i.e. $\bP_h\done_n = \done_n$), we have
\begin{align*}
    \bO_h^T\frac{\partial \caL}{\partial \bY} &= (\bP_h\bX(\bW_V)_h)^T\frac{\partial \caL}{\partial \bY} \\
    &= ((\bP_h\done_n)\bx^\top(\bW_V)_h)^T\frac{\partial \caL}{\partial \bY} \\
    &=  (\done_n\bx^\top(\bW_V)_h)^T\frac{\partial \caL}{\partial \bY} \\
    &= ((\bW_V)_h^\top\bx)(\done_n^T\frac{\partial \caL}{\partial \bY}) \\
    &= 0
\end{align*}
So $\frac{\partial \caL}{\partial \bW_O} = 0$.
\begin{align*}
    \frac{\partial \caL}{\partial \bO_h} &= \frac{\partial \caL}{\partial \bY}(\bW_O)_h \\
    \frac{\partial \caL}{\partial \bV_h} &= \bP_h^T \frac{\partial \caL}{\partial \bO_h} \\
    \frac{\partial \caL}{\partial (\bW_V)_h} &= \bX^T\frac{\partial \caL}{\partial \bV_h} \\
    \frac{\partial \caL}{\partial \bP_h} &=  \frac{\partial \caL}{\partial \bO_h}\bV_h^T \\
\end{align*}
where $\bW_O = \begin{bmatrix}
    (\bW_O)_1 & (\bW_O)_2 & ... & (\bW_O)_H
\end{bmatrix}^T$.
Since the causal mask is constant, the unmasked derivative is the usual row-wise softmax derivative, with masked entries equal to zero. Therefore
\begin{align*}
    \frac{\partial \caL}{\partial \bS_h}
    &= \bP_h \odot \frac{\partial \caL}{\partial \bP_h}
    - \Diag\!\left(\left(\bP_h \odot \frac{\partial \caL}{\partial \bP_h}\right)\done_n\right)\bP_h.
\end{align*}
Using $\bP_h\done_n=\done_n$, we obtain
\begin{align*}
    \frac{\partial \caL}{\partial \bS_h}\done_n
    &= \left(\bP_h \odot \frac{\partial \caL}{\partial \bP_h}\right)\done_n
    - \Diag\!\left(\left(\bP_h \odot \frac{\partial \caL}{\partial \bP_h}\right)\done_n\right)\done_n
    = \bzero.
\end{align*}
Consequently,
\begin{align*}
    \frac{\partial \caL}{\partial \bQ_h}  &= \frac{1}{\sqrt{d}}\frac{\partial \caL}{\partial \bS_h} \bK_h, \\
    \frac{\partial \caL}{\partial \bK_h}  &= \frac{1}{\sqrt{d}}(\frac{\partial \caL}{\partial \bS_h})^T \bQ_h, \\
    \frac{\partial \caL}{\partial (\bW_Q)_h}  &= \frac{1}{\sqrt{d}}\bX^T\frac{\partial \caL}{\partial \bS_h} \bK_h, \\
    \frac{\partial \caL}{\partial (\bW_K)_h}  &= \frac{1}{\sqrt{d}}\bX^T(\frac{\partial \caL}{\partial \bS_h})^T \bQ_h.
\end{align*}
By writing $\bX = \done_n\bx^\top$, we have
\begin{align*}
    \frac{\partial \caL}{\partial \bQ_h} &= \frac{1}{\sqrt{d}}\frac{\partial \caL}{\partial \bS_h} \bK_h  
    = \frac{1}{\sqrt{d}}\frac{\partial \caL}{\partial \bS_h} \done_n\bx^\top(\bW_K)_h 
    = 0 \\
    \frac{\partial \caL}{\partial (\bW_K)_h}  &= \frac{1}{\sqrt{d}}\bX^T(\frac{\partial \caL}{\partial \bS_h})^T \bQ_h
    = \frac{1}{\sqrt{d}}\bx\left((\frac{\partial \caL}{\partial \bS_h})\done_n\right)^\top \bQ_h = 0 \\
    \frac{\partial \caL}{\partial (\bW_Q)_h}  &= \bX^T \frac{\partial \caL}{\partial \bQ_h} = 0
\end{align*}
For the gradient w.r.t. input, we have
\begin{align*}
    \frac{\partial \caL}{\partial \bX} &= \sum_{h}\frac{\partial \caL}{\partial \bQ_h} (\bW_Q)_h^T + \sum_{h}\frac{\partial \caL}{\partial \bK_h}(\bW_K)_h^T + \sum_{h}\frac{\partial \caL}{\partial \bV_h} (\bW_V)_h^T + \frac{\partial \caL}{\partial \bY} \\
    &= \sum_{h}\frac{\partial \caL}{\partial \bK_h} (\bW_K)_h^T + \sum_{h}\frac{\partial \caL}{\partial \bV_h} (\bW_V)_h^T + \frac{\partial \caL}{\partial \bY}
\end{align*}
and
\begin{align*}
    \done_n^T\frac{\partial \caL}{\partial \bK_h} &= \frac{1}{\sqrt{d}}\done_n^T(\frac{\partial \caL}{\partial \bS_h})^T \bQ_h 
    = \frac{1}{\sqrt{d}}(\frac{\partial \caL}{\partial \bS_h}\done_n)^T \bQ_h = 0 \\
    \done_n^T\frac{\partial \caL}{\partial \bV_h} &= \done_n^T\bP_h^T  \frac{\partial \caL}{\partial \bY}(\bW_O)_h = (\done_n^T \frac{\partial \caL}{\partial \bY})(\bW_O)_h = 0
\end{align*}
The derivation for $\done_n^T\frac{\partial \caL}{\partial \bV_h}$ uses the facts that $\bP_h \done_n = \done_n$ and $\done_n^T \frac{\partial \caL}{\partial \bY} = 0$.
\begin{align*}
    \done_n^T\frac{\partial \caL}{\partial \bX} &= \sum_{h}\done_n^T\frac{\partial \caL}{\partial \bK_h} (\bW_K)_h^T + \sum_{h}\done_n^T\frac{\partial \caL}{\partial \bV_h} (\bW_V)_h^T + \done_n^T\frac{\partial \caL}{\partial \bY} = 0
\end{align*}
Therefore, after the backward pass of the attention layer, the gradient still satisfies $\done_n^T\frac{\partial \caL}{\partial \bX} = 0$.
\end{proof}

\subsubsection{Proof of Lemma~\ref{rms_gradient}}
\begin{proof}
Under the ideal RMSNorm used in the main text, suppose $\bY=\done_n\by^\top$. Every row then has the same RMSNorm Jacobian, so the row-wise backward derivative gives
\begin{align*}
    \frac{\partial \caL}{\partial \bY}
    = \frac{\partial \caL}{\partial \bX}
    \frac{\sqrt{d}}{\|\by\|_2}
    \left(\bI-\frac{\by\by^\top}{\|\by\|_2^2}\right).
\end{align*}
Therefore,
\begin{align*}
    \done_n^T \frac{\partial \caL}{\partial \bY}
    = \left(\done_n^T\frac{\partial \caL}{\partial \bX}\right)
    \frac{\sqrt{d}}{\|\by\|_2}
    \left(\bI-\frac{\by\by^\top}{\|\by\|_2^2}\right)
    = \bzero.
\end{align*}
\end{proof}

\section{Validation and Supplementary Diagnostics}\label{app:diagnostics}

This section checks the approximations used in Appendix~\ref{app:forward_amplification_details} and adds one repeated-run diagnostic for the backward transition.

The first subsection records a supporting expectation-level check for the attention second moment:
\[
\mathbb{E}\!\left[\bP\bX_1\bX_1^\top\bP^\top\right]
\approx
\mathbb{E}[\bP]\,\mathbb{E}[\bX_1\bX_1^\top]\,\mathbb{E}[\bP^\top]
\]
The second subsection checks the separate prefix-averaging approximation, and the last subsection shows that the backward-side transition pattern also appears on another collapsing run. These experiments are all done on the same architecture and optimization setup as the main text ($d=512$, $d_{\mathrm{ff}}=1536$, 4 heads, $n=2048$).

\subsection{Layerwise Validation of an Unconditional Proxy for the Attention Second Moment}
\label{app:validation_attention_expectation_product}
The forward appendix directly replaces $\bP^l$ by $\bC_n$. In parallel, this subsection records the initialization-time behavior of the unconditional product replacement
\[
\mathbb{E}[\bP^l]\,\mathbb{E}[\bX_1^l(\bX_1^l)^\top]\,\mathbb{E}[(\bP^l)^\top].
\]
We record the resulting errors layer by layer on the same setup in the main text. We measure the relative Frobenius error of this replacement
\[
\delta_{\mathrm{attn}}^{l}
:=
\frac{
\left\|
\mathbb{E}\!\left[\bP^l \bX_1^l(\bX_1^l)^\top (\bP^l)^\top\right]
-
\mathbb{E}[\bP^l]\,
\mathbb{E}[\bX_1^l(\bX_1^l)^\top]\,
\mathbb{E}[(\bP^l)^\top]
\right\|_F
}{
\left\|
\mathbb{E}\!\left[\bP^l \bX_1^l(\bX_1^l)^\top (\bP^l)^\top\right]
\right\|_F
}.
\]

Because the forward derivation also uses the projected quantity, we separately measure
\[
\delta_{\mathrm{proj}}^{l}
:=
\frac{
\left\|
\simmat \mathbb{E}\!\left[\bP^l \bX_1^l(\bX_1^l)^\top (\bP^l)^\top\right]\simmat
-
\simmat \mathbb{E}[\bP^l]\,\mathbb{E}[\bX_1^l(\bX_1^l)^\top]\,\mathbb{E}[(\bP^l)^\top]\simmat
\right\|_F
}{
\left\|
\simmat \mathbb{E}\!\left[\bP^l \bX_1^l(\bX_1^l)^\top (\bP^l)^\top\right]\simmat
\right\|_F
}.
\]
Figure~\ref{fig:attention_expectation_product_validation_layerwise} plots $\delta_{\mathrm{attn}}^{l}$ and
$\delta_{\mathrm{proj}}^{l}$ against layer index $l$. The relative error is small throughout the stack, which is consistent with the initialization-time attention surrogate used in the forward appendix.

\begin{figure}[H]
    \centering
    \includegraphics[width=0.9\linewidth]{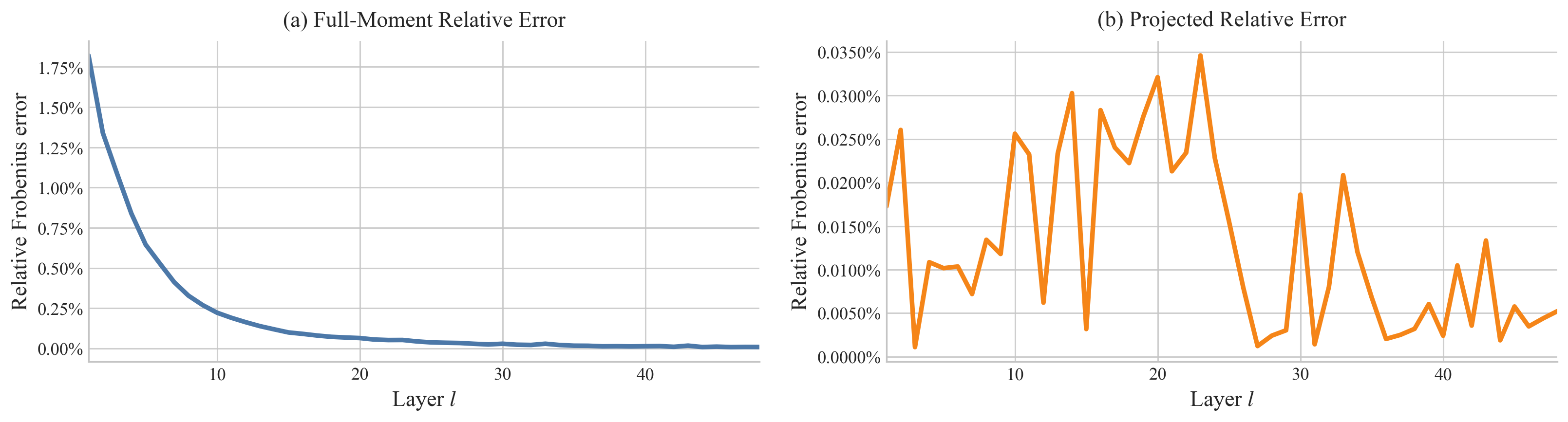}
    \caption{Layerwise validation of the unconditional proxy $\mathbb{E}[\bP^l \bX_1^l(\bX_1^l)^\top (\bP^l)^\top]\approx\mathbb{E}[\bP^l]\mathbb{E}[\bX_1^l(\bX_1^l)^\top]\mathbb{E}[(\bP^l)^\top]$. The horizontal axis is layer index in the 48-layer Post-Norm stack ($d=512$, $d_{\mathrm{ff}}=1536$, 4 heads, $n=2048$). Left: $\delta_{\mathrm{attn}}^{l}$ for different layer, Right: $\delta_{\mathrm{proj}}^{l}$ for different layers}
    \label{fig:attention_expectation_product_validation_layerwise}
\end{figure}

\subsection{Initialization-Time Prefix-Averaging Approximation}
The forward similarity increment theorem also uses the separate initialization-time proxy $\expect{}{\bP}\approx \bC_n$ directly. To verify that $\expect{}{\bP^l}\approx \bC_n$ holds at initialization in the actual experimental architecture, evaluate it at step~0, and estimate each layer's mean attention matrix $\mathbb{E}[\bP^l]$ by averaging over 32 random initializations. Figure~\ref{fig:init_prefix_averaging_rel_error_48l_2048} reports
\[
\frac{\|\mathbb{E}[\bP^l]-\bC_n\|_F}{\|\bC_n\|_F}
\]
across depth. The relative Frobenius error stays small throughout the stack (about $2.7\%$ in the first layer and $1.2\%$ by layer~48; mean over layers $\approx 2.1\%$), directly supporting the initialization-time prefix-averaging approximation used in Theorem~\ref{theorem_forward_amplification}.

\begin{figure}[H]
    \centering
    \includegraphics[width=0.6\linewidth]{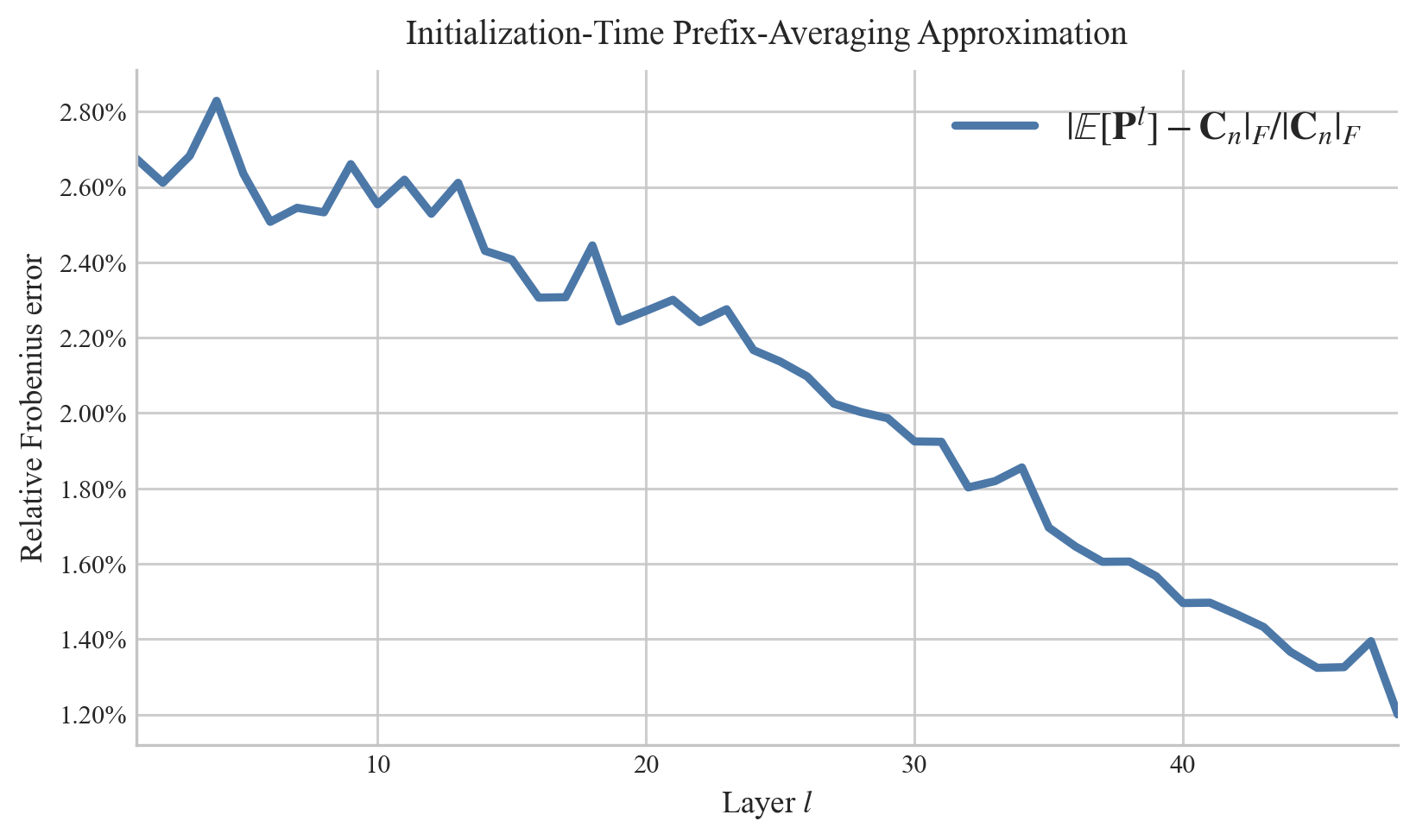}
    \caption{Initialization-time validation of the prefix-averaging approximation. }
    \label{fig:init_prefix_averaging_rel_error_48l_2048}
\end{figure}

\subsection{Supplementary Experiments for Another Collapsing Run}
\label{app:collapse_family_diagnostics}
This subsection repeats the backward diagnostics on another collapsing run under the same architecture and optimization setup. Its role is to show that the transition pattern highlighted in the main text is not specific to a single random seed.

\paragraph{Repeated Run on Another Seed.}
Figures~\ref{fig:appendix_vii_sublayer_amplification} and~\ref{fig:appendix_vii_native_bridge} repeat the same backward-side measurements from the main text on another collapsing run.

\begin{figure}[H]
    \centering
    \includegraphics[width=0.8\textwidth]{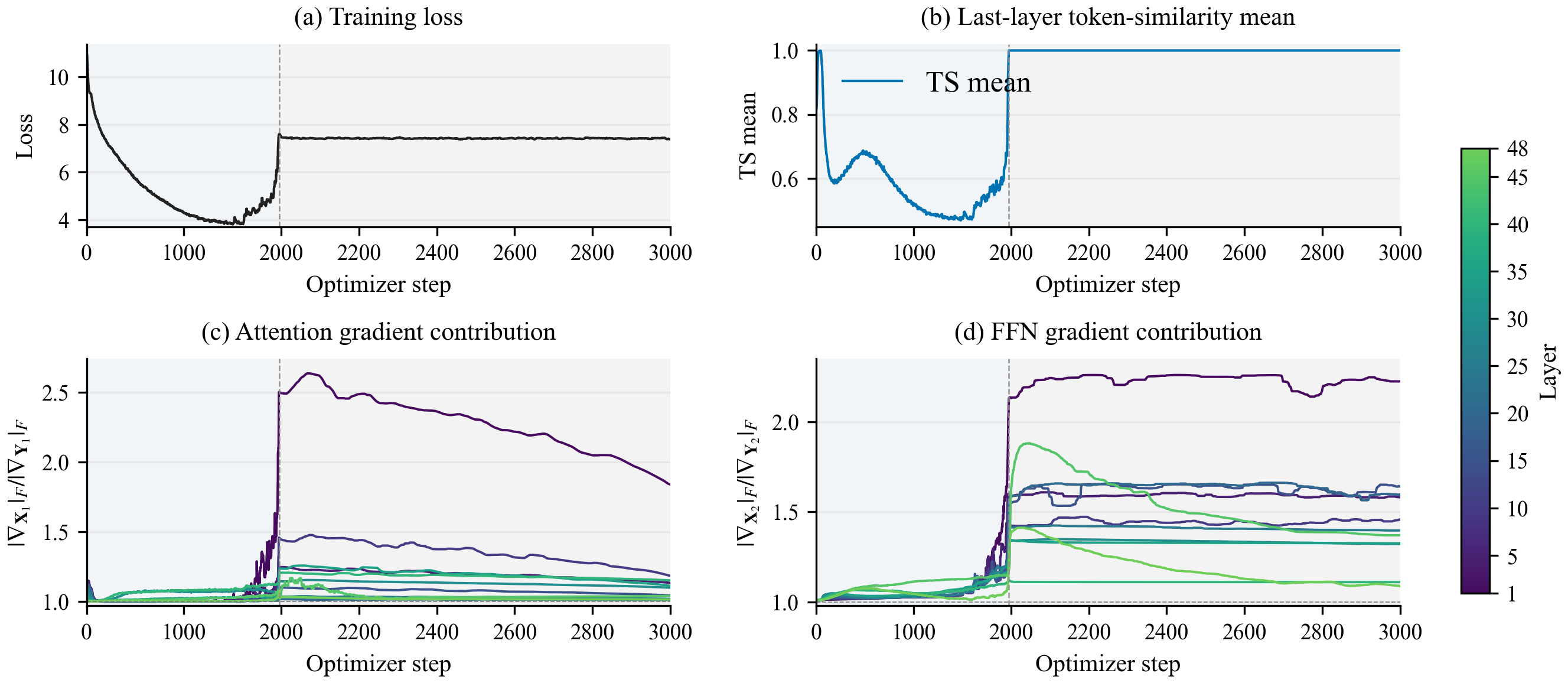}
    \caption{Sublayer gradient contribution in another collapsing Post-Norm run. Dashed vertical line: transition step $\approx 1979$. \textbf{(a)} Training loss. \textbf{(b)} Last-layer token-similarity mean. \textbf{(c)} Attention gradient contribution across layers 1, 5, 10, 15, 20, 25, 30, 35, 40, 45, and 48. \textbf{(d)} FFN gradient contribution across the same layers.}
    \label{fig:appendix_vii_sublayer_amplification}
\end{figure}

\begin{figure}[H]
    \centering
    \includegraphics[width=0.8\textwidth]{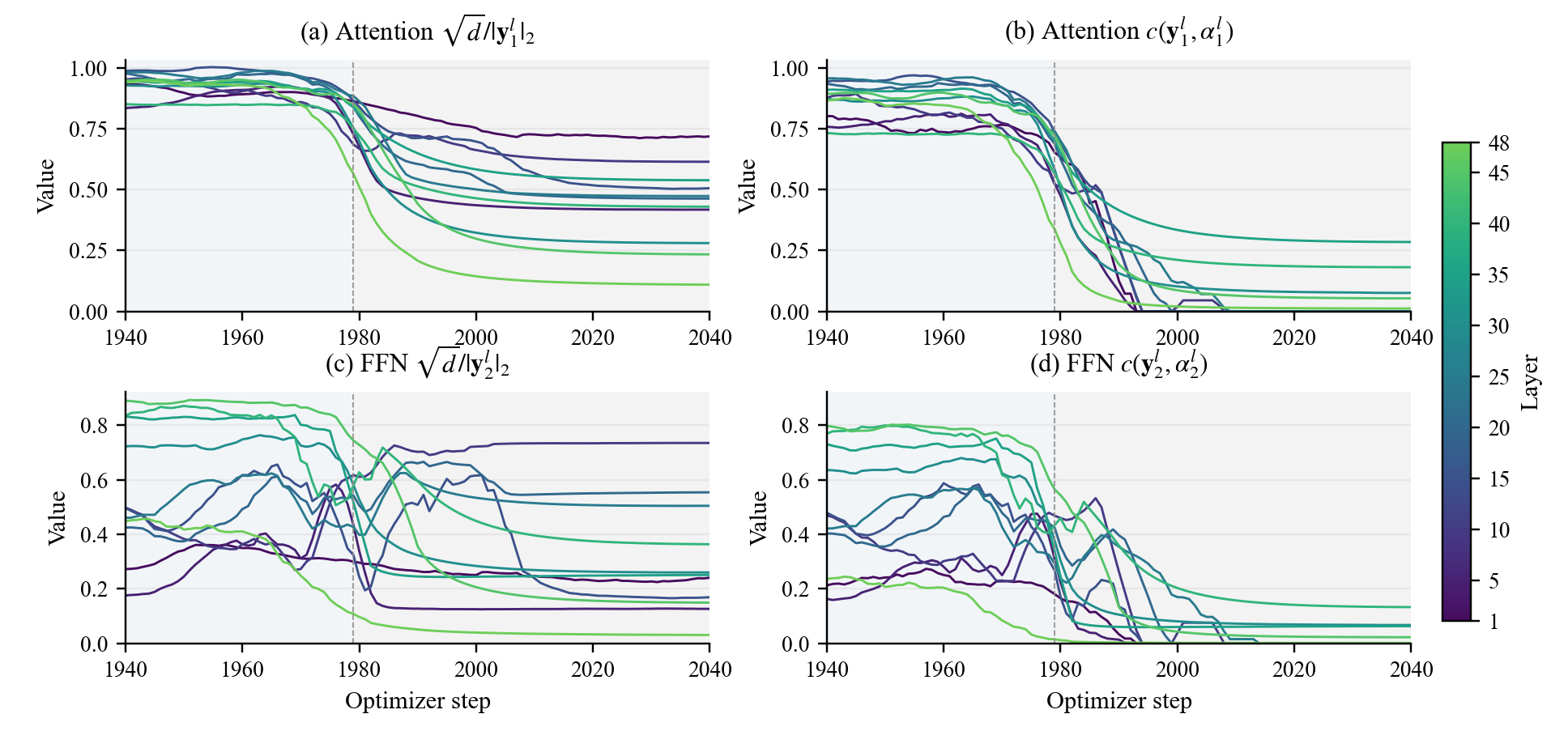}
    \caption{Transition-window diagnostic for another collapsing run. Dashed vertical line: transition step $\approx 1979$; blue/gray shading: pre/post-transition windows. \textbf{(a)} Attention $\sqrt{d}/\|\by_1^l\|_2$. \textbf{(b)} Attention $c(\by_1^l,\alpha_1^l)$. \textbf{(c)} FFN $\sqrt{d}/\|\by_2^l\|_2$. \textbf{(d)} FFN $c(\by_2^l,\alpha_2^l)$.}
    \label{fig:appendix_vii_native_bridge}
\end{figure}

\subsection{Learning-Rate Ablation for Backward Incapacity}
\label{app:masked_lr_sweep_backward_incapacity}
This subsection adds the same backward-incapacity diagnostics for collapsed Post-Norm runs with learning rates $1.2\times10^{-3}$, $1.5\times10^{-3}$, and $1.8\times10^{-3}$. For visual readability, the plots masks the sublayer ambplification and $c$ value when the submodule's input and output gradient norm fall below $10^{-10}$.

\begin{figure}[H]
    \centering
    \includegraphics[width=0.8\textwidth]{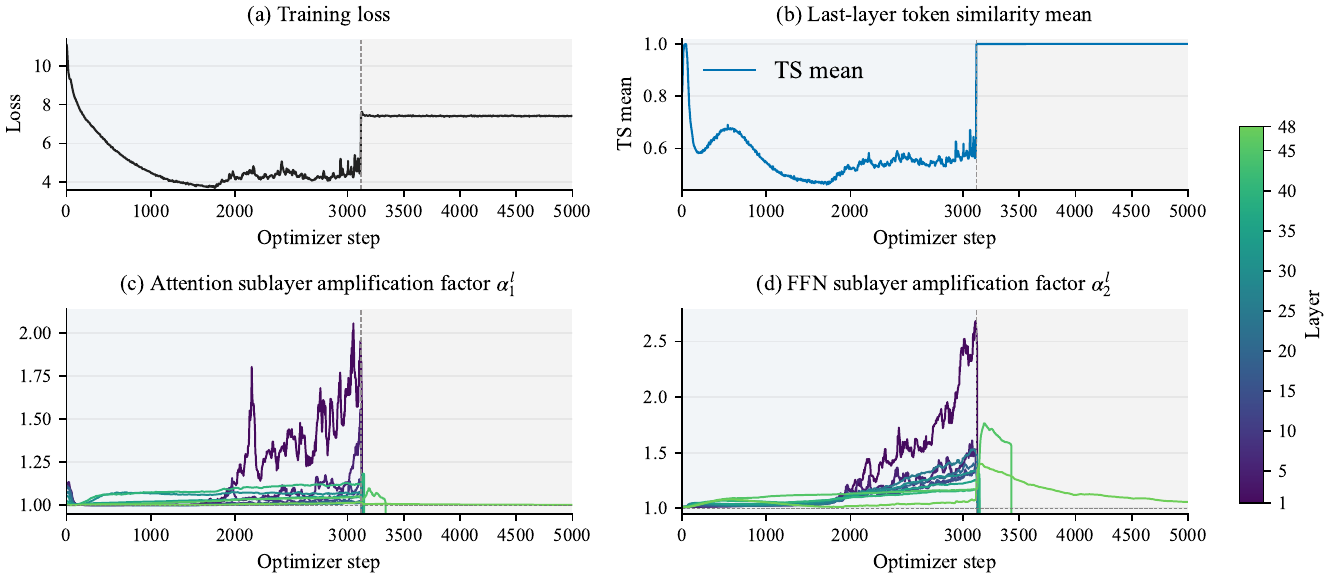}
    \caption{Sublayer gradient contribution for the collapsed Post-Norm run with LR=$1.2\times10^{-3}$.}
    \label{fig:appendix_lr12_sublayer_contribution}
\end{figure}

\begin{figure}[H]
    \centering
    \includegraphics[width=0.8\textwidth]{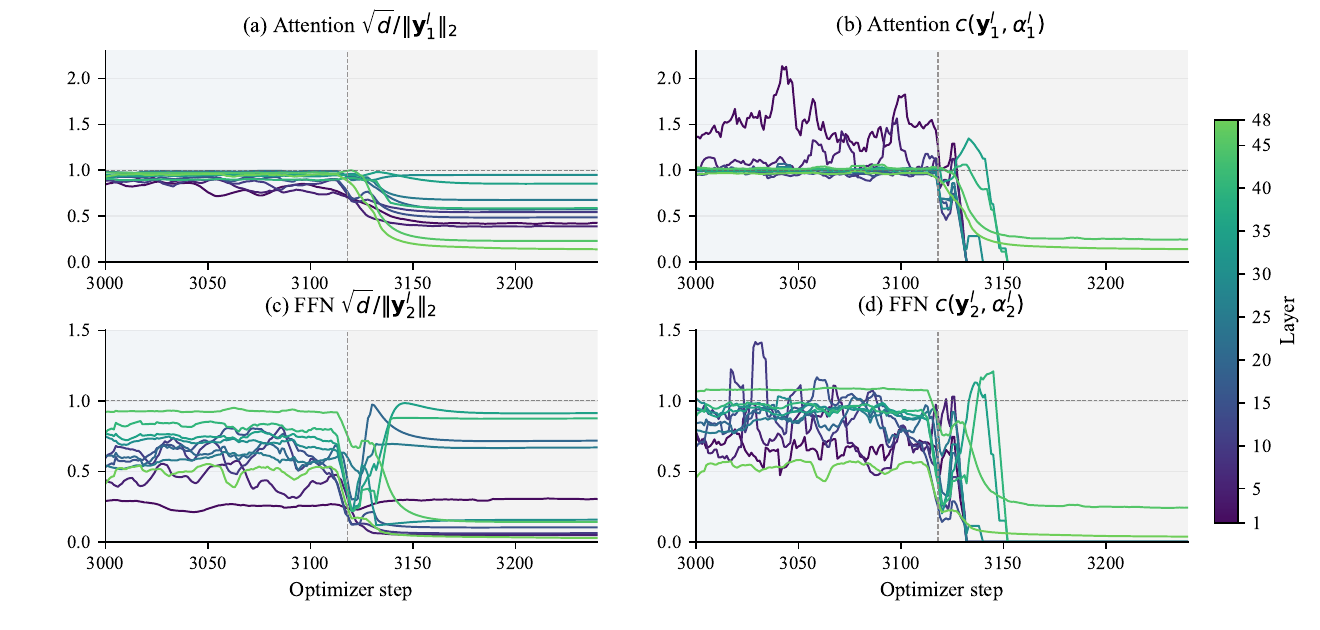}
    \caption{Backward-side diagnostic for the collapsed Post-Norm run with LR=$1.2\times10^{-3}$.}
    \label{fig:appendix_lr12_native_bridge}
\end{figure}

\begin{figure}[H]
    \centering
    \includegraphics[width=0.7\textwidth]{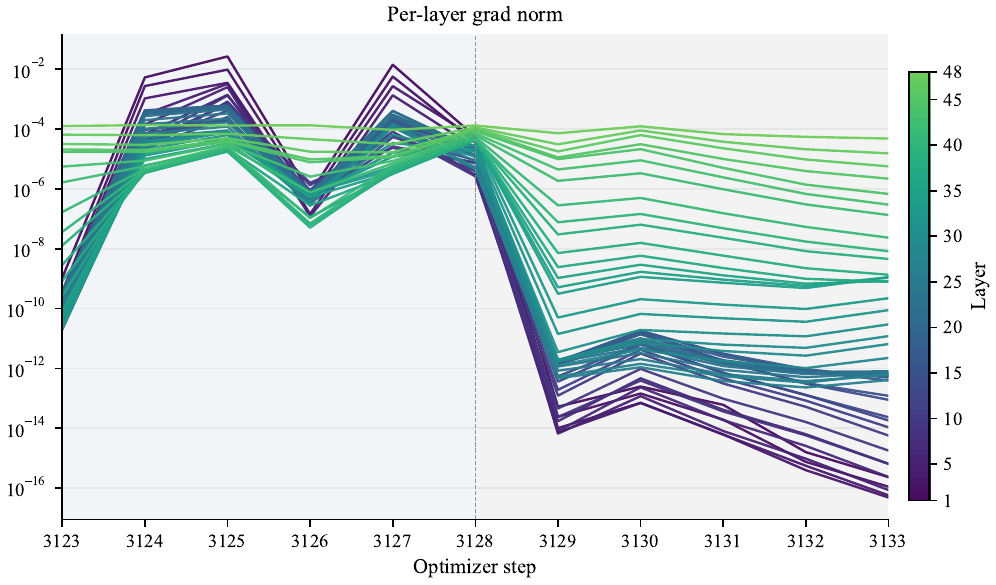}
    \caption{Per-layer gradient norm for the collapsed Post-Norm run with LR=$1.2\times10^{-3}$ over optimizer steps 3123--3133.}
    \label{fig:appendix_lr12_grad_profile}
\end{figure}

\begin{figure}[H]
    \centering
    \includegraphics[width=0.8\textwidth]{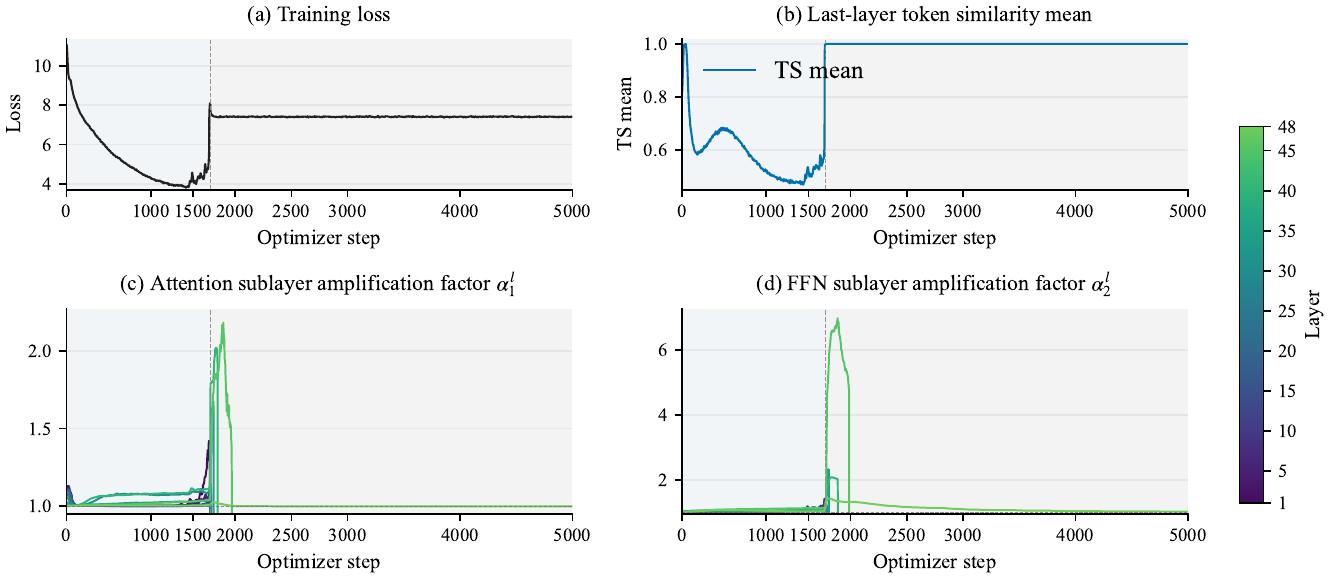}
    \caption{Sublayer gradient contribution for the collapsed Post-Norm run with LR=$1.5\times10^{-3}$.}
    \label{fig:appendix_lr15_sublayer_contribution}
\end{figure}

\begin{figure}[H]
    \centering
    \includegraphics[width=0.8\textwidth]{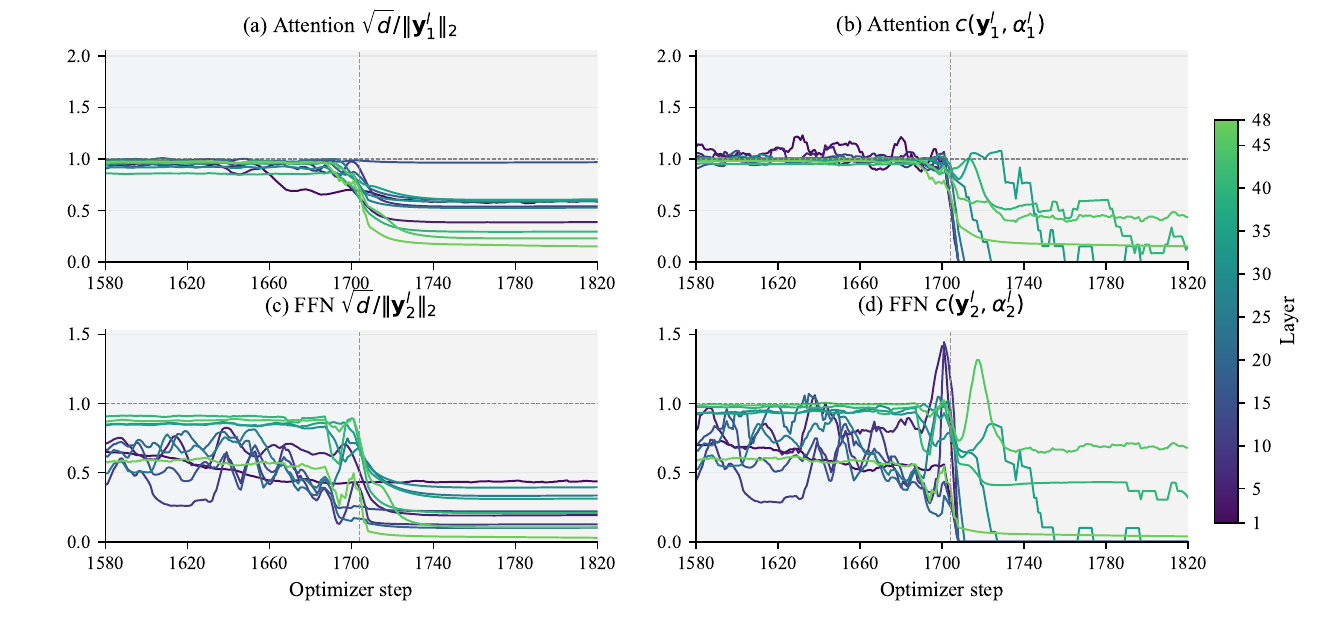}
    \caption{Backward-side diagnostic for the collapsed Post-Norm run with LR=$1.5\times10^{-3}$.}
    \label{fig:appendix_lr15_native_bridge}
\end{figure}

\begin{figure}[H]
    \centering
    \includegraphics[width=0.7\textwidth]{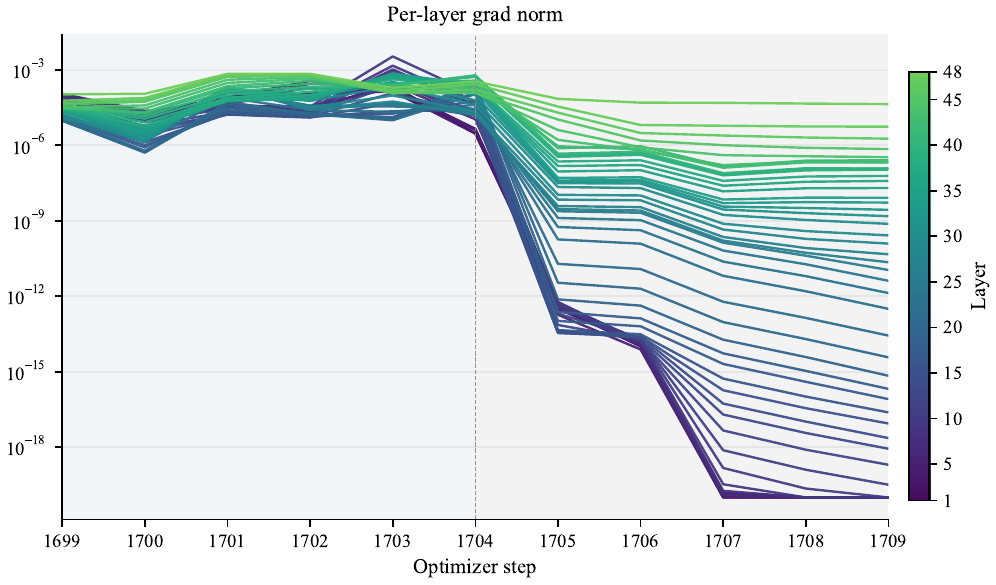}
    \caption{Per-layer gradient norm for the collapsed Post-Norm run with LR=$1.5\times10^{-3}$ over optimizer steps 1699--1709.}
    \label{fig:appendix_lr15_grad_profile}
\end{figure}

\begin{figure}[H]
    \centering
    \includegraphics[width=0.8\textwidth]{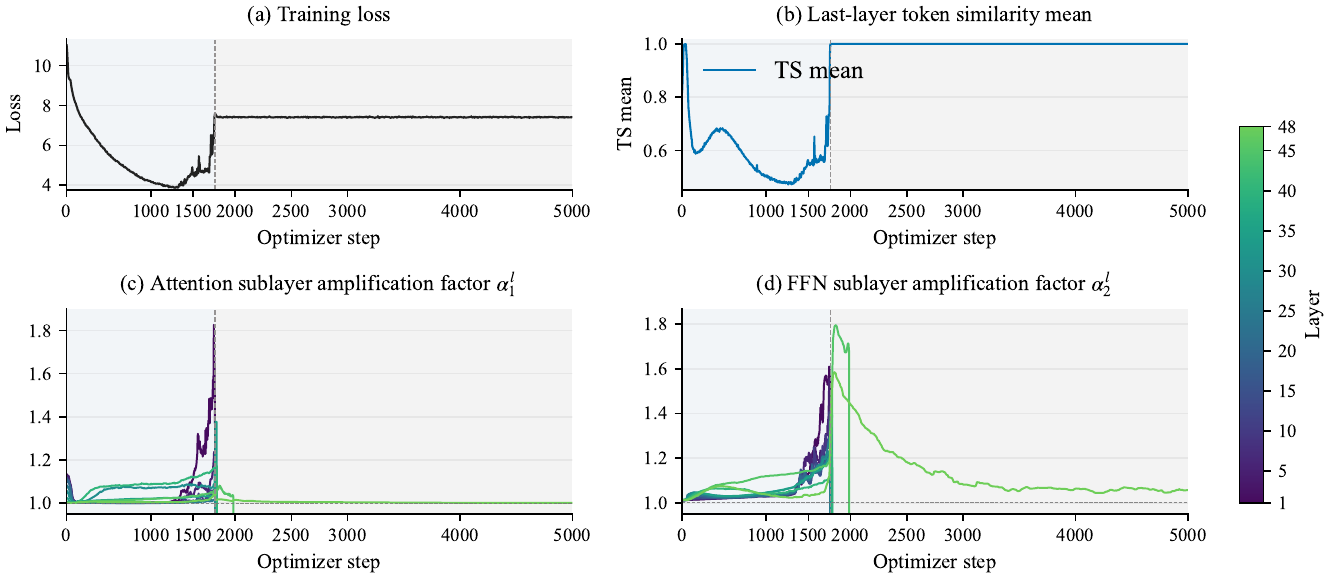}
    \caption{Sublayer gradient contribution for the collapsed Post-Norm run with LR=$1.8\times10^{-3}$.}
    \label{fig:appendix_lr18_sublayer_contribution}
\end{figure}

\begin{figure}[H]
    \centering
    \includegraphics[width=0.8\textwidth]{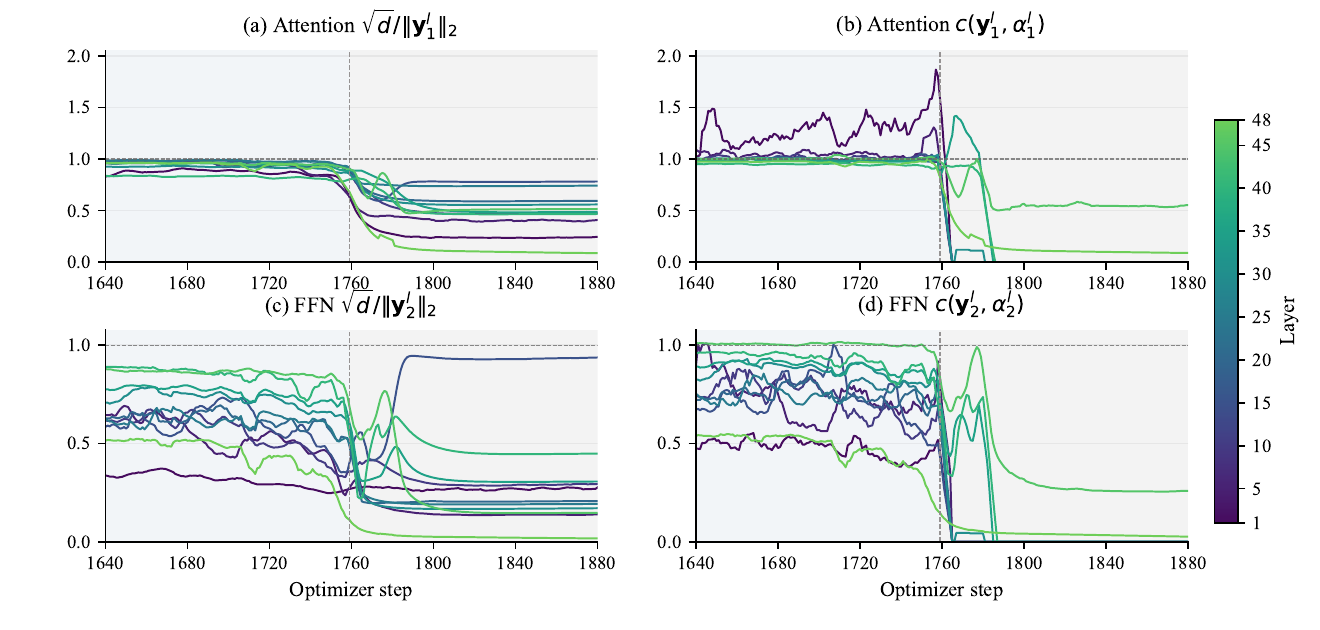}
    \caption{Backward-side diagnostic for the collapsed Post-Norm run with LR=$1.8\times10^{-3}$.}
    \label{fig:appendix_lr18_native_bridge}
\end{figure}

\begin{figure}[H]
    \centering
    \includegraphics[width=0.7\textwidth]{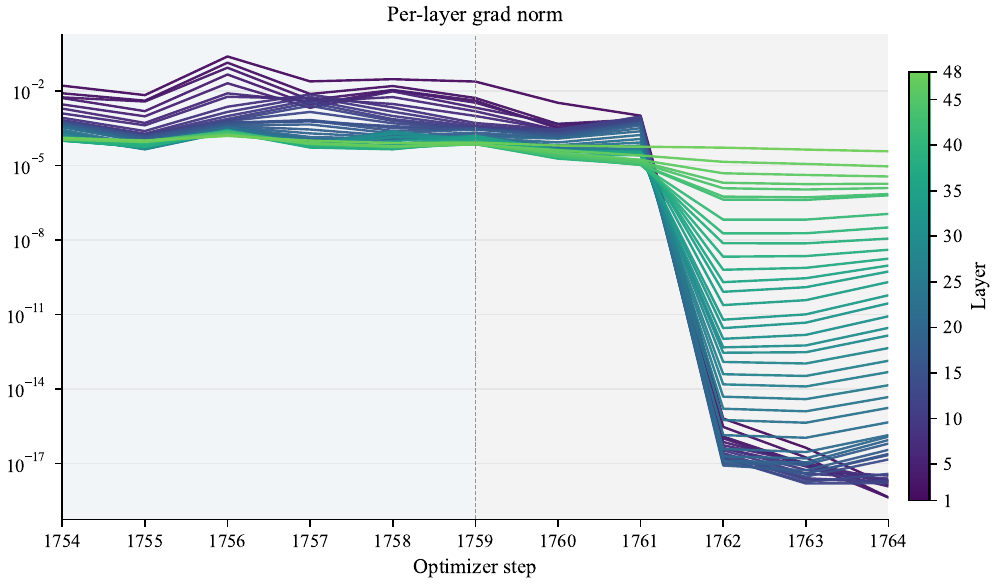}
    \caption{Per-layer gradient norm for the collapsed Post-Norm run with LR=$1.8\times10^{-3}$ over optimizer steps 1754--1764.}
    \label{fig:appendix_lr18_grad_profile}
\end{figure}

\subsection{Non-Collapsed Post-Norm Control Run}
\label{app:non_collapsed_postnorm_control}
This subsection gives a non-collapsed Post-Norm run as a contrast case. In this run, the last-layer token similarity stays away from one and the training loss decreases smoothly. The backward-side quantities also remain stable across training, providing a comparison to the transition pattern shown in the collapsing runs.

\begin{figure}[H]
    \centering
    \includegraphics[width=0.8\textwidth]{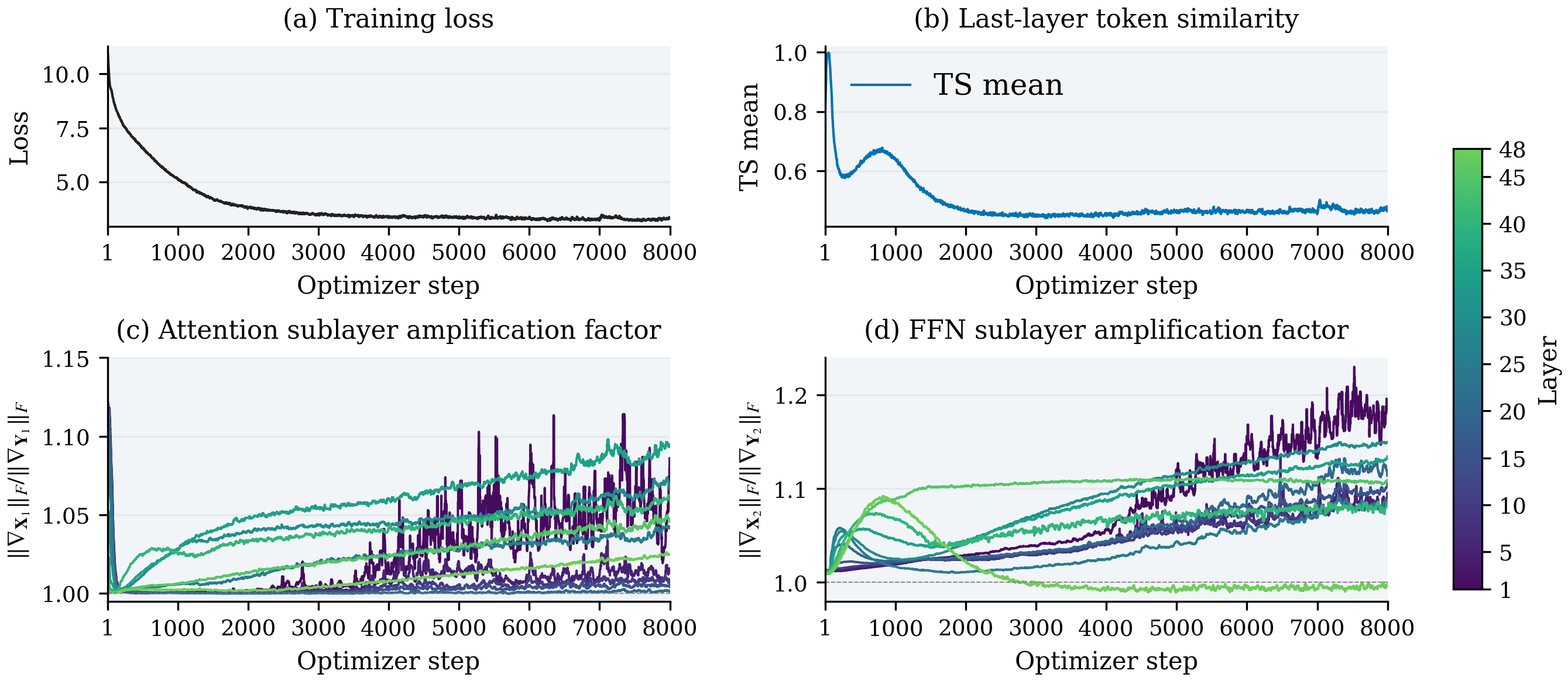}
    \caption{Sublayer gradient contribution in a non-collapsed Post-Norm run. \textbf{(a)} Training loss. \textbf{(b)} Last-layer token-similarity mean. \textbf{(c)} Attention gradient contribution across layers 1, 5, 10, 15, 20, 25, 30, 35, 40, 45, and 48. \textbf{(d)} FFN gradient contribution across the same layers.}
    \label{fig:appendix_noncollapsed_sublayer_contribution}
\end{figure}

\begin{figure}[H]
    \centering
    \includegraphics[width=0.8\textwidth]{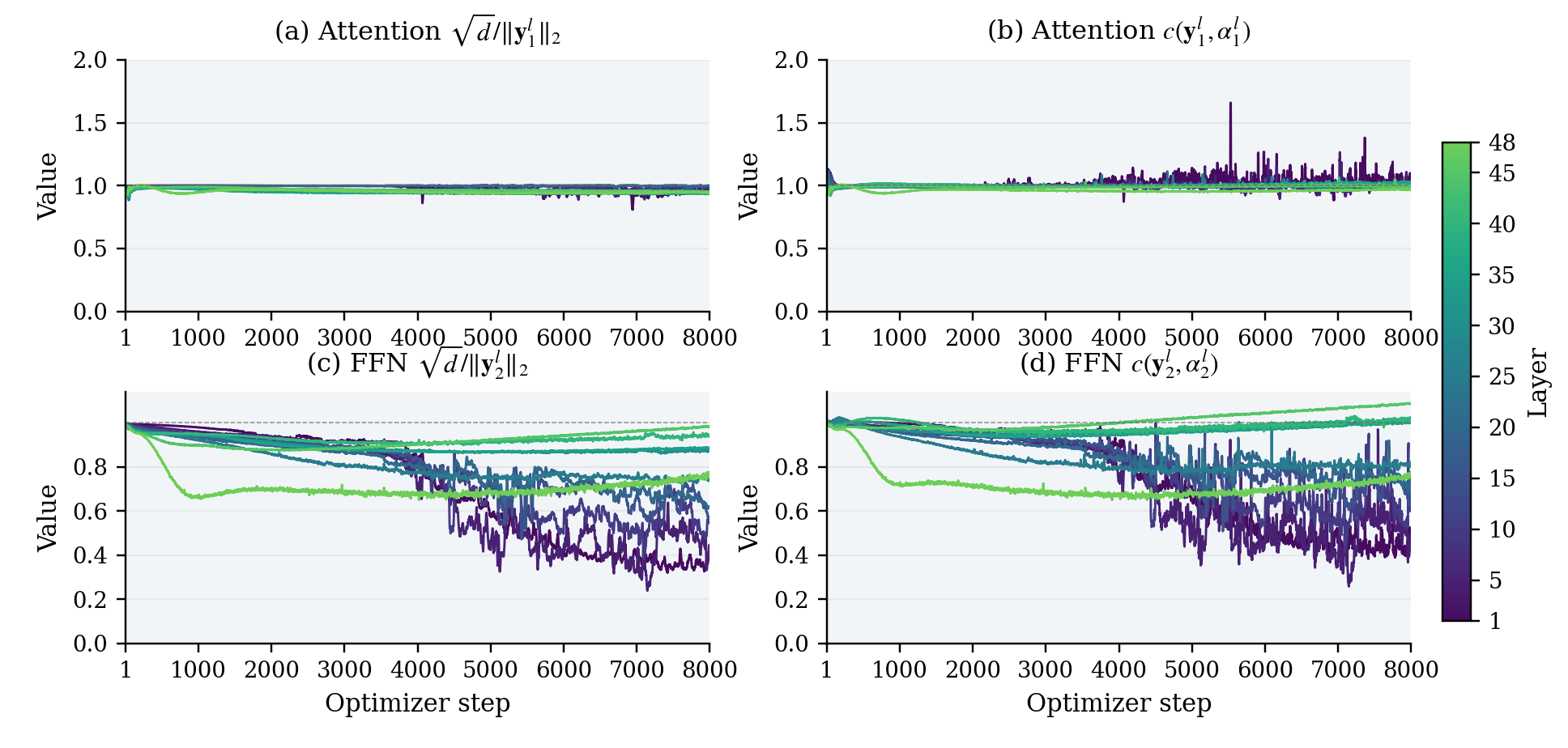}
    \caption{Backward-side diagnostic for the same non-collapsed Post-Norm run. \textbf{(a)} Attention $\sqrt{d}/\|\by_1^l\|_2$. \textbf{(b)} Attention $c(\by_1^l,\alpha_1^l)$. \textbf{(c)} FFN $\sqrt{d}/\|\by_2^l\|_2$. \textbf{(d)} FFN $c(\by_2^l,\alpha_2^l)$.}
    \label{fig:appendix_noncollapsed_native_bridge}
\end{figure}

\end{document}